\newcommand{\titlevariable}[0]{How to Train Data-Efficient LLMs}
\icmltitlerunning{\titlevariable}
\begin{document}

\newcommand{\argmin}[1]{\underset{#1}{\operatorname{arg}\,\operatorname{min}}\;\ }
\newcommand{\expectation}[2]{\underset{#1}{\mathbb{E}} \left[#2\right]}

\newcommand{\density}{\textsc{Density}\xspace}
\newcommand{\askllm}{\textsc{Ask-LLM}\xspace}

\newcommand{\EE}{\operatornamewithlimits{\mathbb{E}}} 

\makeatletter \renewcommand\paragraph{\@startsection{paragraph}{4}{\z@} {2mm \@plus1ex \@minus.2ex} {-0.7em} {\normalfont\normalsize\bfseries}} \makeatother

\newcommand{\listheader}[1]{\item \textbf{#1} }

\newcommand\overstar[1]{\ThisStyle{\ensurestackMath{%
  \setbox0=\hbox{$\SavedStyle#1$}%
  \stackengine{0pt}{\copy0}{\kern.2\ht0\smash{\SavedStyle*}}{O}{c}{F}{T}{S}}}}

\theoremstyle{plain}
\newtheorem{theorem}{Theorem}[section]
\newtheorem{proposition}[theorem]{Proposition}
\newtheorem{lemma}[theorem]{Lemma}
\newtheorem{corollary}[theorem]{Corollary}
\theoremstyle{definition}
\newtheorem{definition}[theorem]{Definition}
\newtheorem{assumption}[theorem]{Assumption}
\theoremstyle{remark}
\newtheorem{remark}[theorem]{Remark}

\newcommand{\STAB}[1]{\begin{tabular}{@{}c@{}}#1\end{tabular}}

\newcommand{\bs}[1]{\ensuremath{\bm{\mathit{#1}}}}

\newcommand{\ie}[0]{\emph{i.e.}\xspace}
\newcommand{\st}[0]{\emph{s.t.}\xspace}
\newcommand{\etc}[0]{\emph{etc.}\xspace}
\newcommand{\eg}[0]{\emph{e.g.}\xspace}
\newcommand{\vs}[0]{\emph{vs.}\xspace}
\newcommand{\wrt}[0]{\emph{w.r.t.}\xspace}

\twocolumn[
\icmltitle{\titlevariable}


\begin{icmlauthorlist}
\icmlauthor{Noveen Sachdeva}{deepmind,ucsd}
\icmlauthor{Benjamin Coleman}{deepmind}
\icmlauthor{Wang-Cheng Kang}{deepmind}
\icmlauthor{Jianmo Ni}{deepmind}
\icmlauthor{Lichan Hong}{deepmind}
\icmlauthor{Ed H. Chi}{deepmind}
\icmlauthor{James Caverlee}{deepmind,tamu}
\icmlauthor{Julian McAuley}{ucsd}
\icmlauthor{Derek Zhiyuan Cheng}{deepmind}
\end{icmlauthorlist}

\icmlaffiliation{ucsd}{University of California, San Diego}
\icmlaffiliation{deepmind}{Google DeepMind}
\icmlaffiliation{tamu}{Texas A\&M University}

\icmlcorrespondingauthor{Noveen Sachdeva}{noveen@google.com}

\icmlkeywords{Data Efficiency, Data Sampling, Data Pruning, Pre-training, LLMs}

\vskip 0.3in
]

\printAffiliationsAndNotice{}  

\begin{abstract}
    The training of large language models (LLMs) is expensive. In this paper, we study data-efficient approaches for pre-training LLMs, \ie, techniques that aim to optimize the Pareto frontier of model quality and training resource/data consumption. We seek to understand the tradeoffs associated with data selection routines based on (i) expensive-to-compute \emph{data-quality} estimates, and (ii) maximization of \textit{coverage} and diversity-based measures in the feature space. Our first technique, \askllm, leverages the zero-shot reasoning capabilities of instruction-tuned LLMs to directly assess the quality of a training example. To target coverage, we propose \density sampling, which models the data distribution to select a diverse sample. In our comparison of $19$ samplers, involving hundreds of evaluation tasks and pre-training runs, we find that \askllm and \density are the best methods in their respective categories. Coverage sampling can \emph{recover} the performance of the full data, while models trained on \askllm data consistently \emph{outperform} full-data training---even when we reject $90$\% of the original dataset, while converging up to $70$\% faster.
\end{abstract}

\section{Introduction}

Large language model (LLM) pre-training is perhaps the most data- and compute-intensive task attempted by the machine learning community to date,  with impressive capabilities primarily being accomplished by training massive transformer architectures on trillions of tokens of text \cite{gpt4, gemini, llama2}.

\begin{figure*}[!t] 
    \centering
    \includegraphics[width=0.95\linewidth,trim={0 0 0 0},clip]{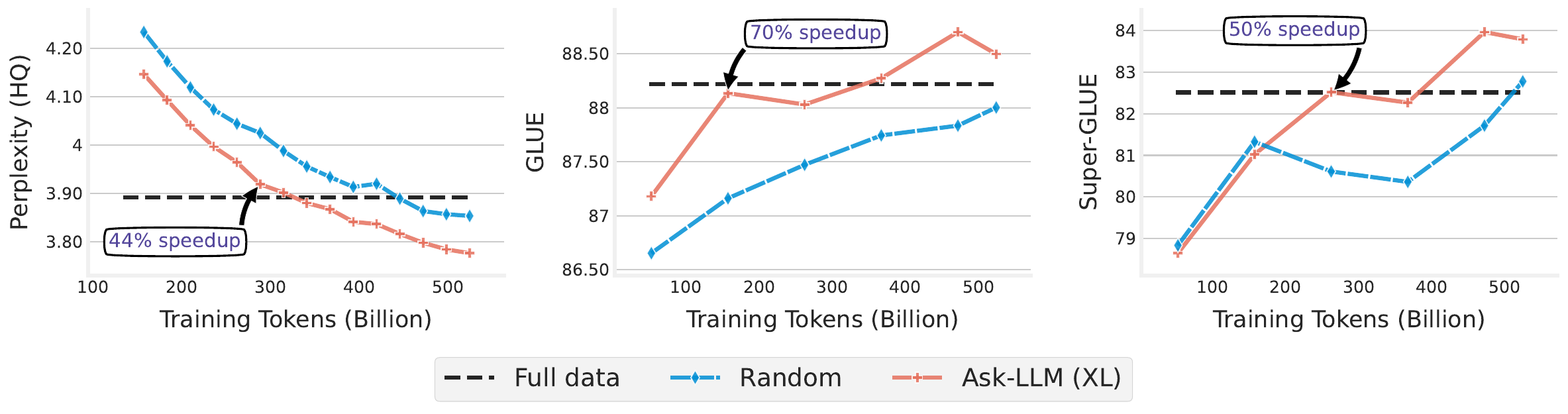}
    
    \vspace{-0.2cm}
    \caption{Data-efficient pre-training run of T5-Large ($800$M) using \askllm with Flan-T5-XL as the data quality scorer. Training on $60$\% of the original dataset, \askllm is able to train T5-Large both better and $70$\% faster, compared to training on $100$\% of the dataset.}
    \label{fig:ask_llm_teaser_results}
\end{figure*}

But even these incredibly capable LLMs are subject to empirical scaling laws, which predict sharply diminishing returns from a linear increase in model- or data-size~\cite{chinchilla, kaplan_scaling}.
Power-law scaling therefore acts as a soft limit on model quality, beyond which it is prohibitively expensive to drive performance by scaling up the data or model.
At the same time, \citet{prototypes}---in the context of vision pre-training---show that we can significantly improve the power law constants in the aforementioned scaling laws if we prioritize \emph{important} training examples using some robust notion of data quality or impact.

A similar call for data-curation is also apparent in the context of training LLMs, where our largest models are quickly approaching their capacity and data thresholds. 
LIMA~\cite{lima} showed that LLaMA-65B~\cite{llama} can be better aligned with human preferences when trained on a set of 1,000 carefully selected fine-tuning prompts, compared to training on as much as 52,000 unfiltered examples.
\citet{tirumala2023d4} recently conducted a large-scale data-efficient pre-training evaluation, showing that a 6.7B OPT model~\cite{zhang2022opt} can converge up to 20\% faster on data curated by 
a technique based on stratified cluster sampling.
The Phi-2 experiments also suggest that when data curation is performed at a human-expert level (\eg, by textbook editors), models can outperform baselines that are up to 25x larger~\cite{javaheripi2023phi}.

Data curation routines can be fundamentally characterized as selecting training samples for quality, coverage, or some mixture of both (\cref{fig:coverage_quality_spectrum}). In this work, we seek to understand how quality and coverage affect the data efficiency of LLM pre-training. Our core research question is:
\begin{quote}\itshape
    ``Are cheap-to-compute heuristics like maximum-coverage enough to pre-train a SoTA LLM, or are there real benefits from costly samplers that carefully evaluate the quality of each example?''
\end{quote}

This question is crucial to answer because data-curation algorithms can improve the Pareto frontier of the data-quantity$\leftrightarrow$model-quality tradeoff, 
directly addressing the bottleneck of power-law scaling by enabling higher-quality models to be trained using less data.
Data curation also unlocks new tradeoffs between training time, inference cost, data collection effort, and downstream performance.
For example, if we consider the compute-constrained (single-epoch) regime, a data-efficient LLM training routine may reach the desired performance using only X\% of the data (corresponding to an X\% training speedup).

Despite considerable interest from the community for building data-efficient training methods~\cite{prototypes,el2n,svp,selective_backprop,katharopoulos2018not},
large-scale analyses of data pruning strategies are rare because of the extreme computational cost---especially in the context of LLM pre-training.
To be more specific, an extensive comparative study necessarily entails pre-training (i) various sizes of LLMs, (ii) for a variety of data sampling rates, (iii) obtained through various pruning strategies. Further, downstream evaluations for LLMs also frequently involve fine-tuning, which is resource intensive in itself.

\subsection{Contributions} 
We hypothesize that the roles of coverage and quality depend on the stage of training, size of the model, and the sampling rate.
To understand the coverage/quality design choice better, we develop new data-efficiency routines that independently (and solely) target quality and coverage.
Our \askllm sampler prioritizes high-quality and informative training samples by asking a \emph{proxy} LLM. 
Our \density sampler seeks to maximize the coverage of latent topics in the input dataset through a diversified sampling procedure. To summarize, our contributions are as follows:

\textbf{\askllm sampling.}
We find that \askllm can train better models (\vs training on the \emph{entire dataset}) even after removing up to $90$\% of training samples, while also consistently beating well-established data curation routines. We note that even a tiny proxy model in \askllm ($60$M parameters) can outperform most baselines.

\textbf{Exhaustive benchmark.} We implement $19$ different sampling strategies for pre-training T5-Large ($800$M) and T5-Small ($60$M) on $524$B tokens and evaluate them on $111$ downstream evaluation tasks. This leads to a total of $170$ pre-training and $2,500$ fine-tuning runs.

\textbf{New insights.} By analyzing the differences between \askllm and \density sampling, we study the role of coverage, quality, and sampling cost in LLM pre-training. We support our conclusions with additional studies of the convergence rate, correlations between sampler outputs, and impact of sampling cost on downstream performance.

\textbf{Takeaway.} Our results show that while coverage sampling can \emph{recover} the performance of the full data, \askllm (quality filtering) can often \emph{exceed} it.
These experiments suggest that LLM-based quality raters are a worthwhile and effective way to drive performance in pre-training.

\section{Related Work}

Data selection is a classical problem with well-established literature on coresets, sketching, importance sampling, filtering, denoising, and a host of other algorithms with similar goals. 
While we cannot possibly catalog the entire sampling literature, we hope to provide an overview of the principles behind common data selection algorithms. We also describe how these algorithms have been applied to machine learning, with a focus on language model training.

\subsection{Coverage Sampling}

The first class of methods maximize the \textit{coverage} of the sample by selecting points that are evenly distributed across the entire input domain, e.g., an $\epsilon$-net for a Lipschitz function~\cite{phillips2017coresets}. When training language models, coverage sampling is motivated by the intuition that we ought to show the model the full breadth of genres, topics, and languages~\cite{longpre2023pretrainer}. Coverage sampling is typically accomplished by embedding examples into a metric space and selecting points which are mutually far from each other~\cite{lee2023beyond}.

Cluster sampling algorithms group inputs based on embedding similarity and select representatives from each group. These algorithms are popular, scalable, interpretable, and enjoy strong theoretical support -- $k$-means sampling provably approximates the SVM objective~\cite{tukan2021coresets} and many others~\cite{feldman2020turning}. However, there are also recent techniques based on submodular optimization of a coverage score~\cite{chen2012super, indyk2014composable, bilevel_coresets}, models of the data distribution~\cite{density}, discrepancy minimization~\cite{discrepancy}, and deduplication through token matching / locality-sensitive hashing~\cite{lee2022deduplicating}.

Many variations of cluster sampling have been applied to vision and language model training. ~\citet{prototypes} propose the ``SSL prototypes'' method for vision models, which removes points that fall too close to the nearest $k$-means centroid.
SemDeDup~\cite{semdedup} also removes points based on this distance, but targets pairs of nearby examples, or ``semantic duplicates,'' and prefers points close to the centroid.
The D4 sampler chains MinHash deduplication, SemDeDup, and SSL prototypes together to prune both high-variance, sparse regions and prototypical, dense regions of LLM pre-training datasets~\cite{tirumala2023d4}. \citet{svp} considers a $k$-centers submodular selection routine on the last-layer embeddings of ResNet vision models. 

\begin{figure}[t!] 
    \centering
    \includegraphics[width=0.9\linewidth,trim={0 0 0 0},clip]{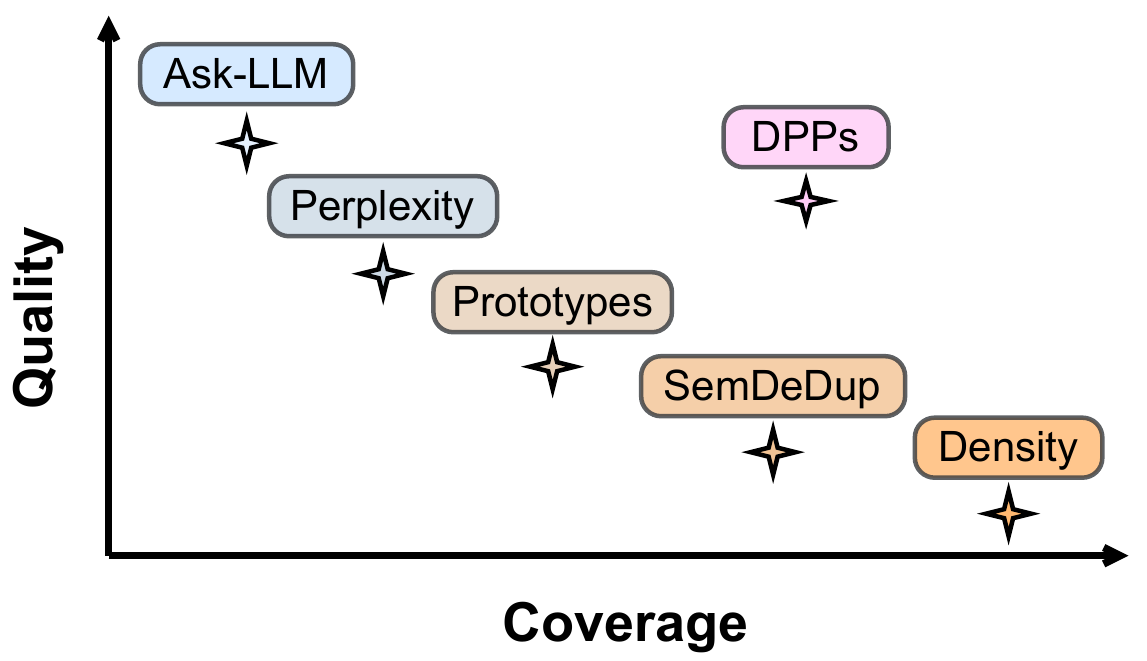} 
    
    \vspace{-0.2cm}
    \caption{While there is no inherent tradeoff between coverage and quality, samplers target these metrics on a spectrum (up and to the left indicates a more aggressive prioritization). See \cref{appendix:samplers} for a description of the plotted samplers.}
    \vspace{-0.2cm}
    \label{fig:coverage_quality_spectrum}
\end{figure}

\subsection{Quality-score Sampling}

Another class of methods are based on \textit{quality scores}, where a scoring algorithm rates every example and the sampler preferentially selects points with high scores.
Even though 
this framework was originally developed for importance sampling~\cite{hastings1970monte}, 
the machine learning community has expanded the theoretical ``score-and-sample'' framework to include a variety of practical heuristics.

For example, the selection-via-proxy (SVP) algorithm determines the importance of an input using the validation loss and uncertainty scores of a pre-trained model on the input~\cite{svp, svp_cf}. \citet{el2n} sample according to an ``EL2N score'' formed by ensembling the losses of 10 lightly-trained models. Ensemble prediction variance has also been used as the scoring metric~\cite{chitta2021training}, as have ensemble disagreement rates~\cite{meding2021trivial}. Other scores measure whether an example is likely to be forgotten~\cite{forgetting_events}, memorized~\cite{feldman2020neural}, or un-learnable~\cite{mindermann2022prioritized}. 

In the context of pre-training LLMs, there exist a few different schools-of-thought for scoring the quality of training samples. The first (and arguably most used) camp is perplexity-filtering, where we prioritize samples with \emph{low} perplexity and filter out highly surprising examples \cite{wenzek2019ccnet,marion2023less,muennighoff2023scaling}. 
Notably, recent advancements in cheaper to run model-based \emph{training-run simulators} for LLMs 
can be used to \emph{estimate} the perplexity of a training sample instead of running an LLM inference \cite{simfluence}.
Another group of methods selects training data that minimizes the \emph{distance} between the distribution of selected data and a handcrafted high-quality data source (typically wikipedia and books). Typical ways are to do this in a feature space \cite{xie2023data} or by training a contrastive-style classifer \cite{gpt2, palm2, javaheripi2023phi}. Similar ideas have also been explored for optimizing the data mixture weights for pre-training \cite{doremi}.

In concurrent work,~\citet{maini2024rephrasing} also consider an LLM-based approach similar to our \askllm sampler, but with a focus on data paraphrasing rather than selection via quality evaluation. ~\citet{engstrom2024dsdm} consider a quality evaluation based on datamodels, though their analysis suggests that this approach selects for strongly model-dependent notions of quality.

\section{Methods}

We propose two samplers, \askllm and \density. These samplers have significantly different costs---\askllm requires an LLM inference call for each training sample, whereas \density is based on a diversified sampling routine that is cheaper than even clustering the dataset.
They also exhibit substantially different selection behavior: \askllm conducts a highly \emph{nuanced} and \emph{contextual} quality evaluation for each sample, while \density asks whether
we have already sampled many similar examples.
By studying samplers on extreme ends of this spectrum, we hope to better understand the salient factors for LLM data curation.

\subsection{\askllm Sampling} \label{sec:askllm_details}

\textbf{Intuition.} Our intuition is that humans can easily identify commonly occurring failure modes in state-of-the-art data quality scorers. 
Hence, it should be possible to correct these mistakes using the reasoning capabilities of modern instruction-tuned LLMs.

\begin{figure}[t!] 
    \centering
    \includegraphics[width=0.95\linewidth,trim={0.5cm 0.2cm 0.5cm 0},clip]{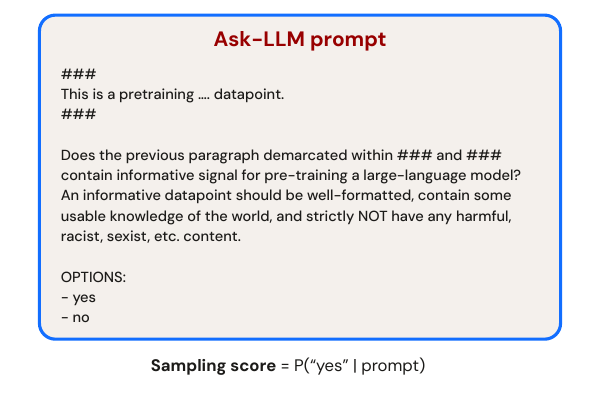} 
    
    \vspace{-0.2cm}
    \caption{The prompt for obtaining the sampling score for each training sample in \askllm.}
    \vspace{-0.2cm}
    \label{fig:ask_llm_prompt}
\end{figure}

To do so, in \askllm, we prompt an instruction-tuned \emph{proxy} LLM
with the prospective training example and ask whether the example should be used for training (see \cref{fig:ask_llm_prompt} for the prompt). 
We take the softmax probability of the token ``\texttt{yes}'' as the estimated data-quality score. Consider the following common failure modes of perplexity filtering, which the \askllm scoring model fixes (see more qualitative examples in \cref{appendix:qualitative_results}).

\textbf{Contextuality.} Perplexity filters often select samples that lack context, \eg, containing questions without answers (Examples~\ref{example:bad3},~\ref{example:bad4}, \ref{example:bad7}). \askllm correctly identifies that these examples do not provide new information.

\textbf{Nonsense.} Perplexity filters can select examples that endlessly repeat the same phrases / words (Examples \ref{example:bad6} and \ref{example:bad7}), likely because these word combinations are common (resulting in high likelihood).

\textbf{Niche examples.} Perplexity filters can reject niche topics that are otherwise informative, well-written, and contain useful \emph{tail knowledge} of the world. Example~\ref{example:increasing1} contains detailed information about a Manchester art installation but is assigned a high perplexity, likely because it contains uncommon (but valid) word combinations. Examples~\ref{example:increasing4}-\ref{example:increasing6} display similar behavior for other niche topics.

\subsection{\density Sampling} \label{sec:density_details}

\textbf{Intuition.} Our intuition is that the data distribution provides a strong coverage signal.
High-probability regions contain ``prototypical'' examples---ones with many near-duplicates and strong representation in the dataset.
Low-probability regions will contain outliers, noise, and unique/rare inputs. 
If we wish to maximize topic coverage, we should boost the signal from under-represented portions of the input domain and downsample redundant, high-density information.

The key difficulty for our \density sampler is to accurately estimate an example's local density. 
Like~\citet{tirumala2023d4} (D4), we assume access to embeddings from a pre-trained LLM. 
However, we depart from the traditional approach of clustering and opt to sample based on kernel sums. 
Given a dataset $D$ of embeddings and a kernel $k(x, y)$, we estimate the density using the following score.
$$\mathrm{score}(y) = \sum_{x \in D} k_{\lambda}(x, y).$$
$\lambda$ is a smoothing parameter called the \textit{kernel bandwidth} that controls the scale of the points' effects. To reduce the complexity from $O(N^2)$ to $O(N \log N)$, we use recent breakthroughs from the algorithm community to approximate the sum~\cite{siminelakis2019rehashing, coleman2020sub}. 
Our method resembles that of~\citet{density}, except that (i) we adopt a two-pass sampling algorithm with stronger theoretical guarantees (\cref{thm:converge_to_uniform}) and (ii) we perform the density estimation in the latent space of the model, rather than using Jaccard distances on $n$-grams.

\subsection{Sampling Techniques}

\density and \askllm are both \textit{scoring} methods that reduce an example to a floating point value that measures coverage or quality.
Once we have scores for a complete dataset of training examples (sentences, paragraphs, etc.), we can make score-based decisions about which examples to include in the training set.

\textbf{Top / Bottom $K$.} The simplest method is to sort examples by score and accept the top or bottom $K$. While straightforward, this approach is supported by the ``permutation'' theory of~\citet{prototypes}, and sensitivity score sampling (a softened version) is the core subroutine for many coresets~\cite{mai2021coresets}.
When applied to \density and perplexity scores, top-$K$ sampling selects for the head of the data distribution (similar to SSL prototypes). Bottom-$K$ sampling selects the tail and removes common items. 

\textbf{Inverse Propensity Sampling.} Inverse propensity sampling (IPS) selects items proportional to their reweighted and normalized \textit{inverse} score~\cite{rosenbaum1983central}. When applied to \density or perplexity scores, IPS implements a form of \textit{diversified sampling} that uniformizes the distribution of selected inputs (\cref{thm:converge_to_uniform}).

In our experiments, the \density sampler uses IPS to maximize the coverage of the dataset.\footnote{We also implemented top-$K$ and bottom-$K$ sampling, but these samplers do not maintain coverage and perform poorly.} For our \askllm filter, we adopt top-$k$ sampling because we expect the ``yes'' probability to be a reliable and strong measure of quality.

\subsection{Relationships Between Methods}
\label{sec:relationship_to_baselines}

\textbf{\density, Perplexity, and Loss.} When a language model is trained to minimize perplexity, the LLM itself is a data distribution model. Therefore, the perplexity and loss filtering approaches of~\citet{marion2023less},~\citet{muennighoff2023scaling}, and other authors can be viewed as model-based density sampling. 
However, our sampler measures the density of the training dataset in a latent geometric space, while perplexity measures the likelihood under the scoring model. The samplers also differ in terms of decision complexity. 
Thanks to the capacity of the LLM, a perplexity filter can make highly-nuanced decisions between two texts on the same topic.
On the other hand, our \density sampler is constructed from a simple nonparametric density model~\cite{rosenblatt1956remarks} that does not have the capacity to distinguish examples at such a granular level.

\textbf{\askllm and Perplexity.} Perplexity filters exhibit a strong in-distribution bias, making decisions based on the data used to train the scoring model (not the dataset we wish to sample).
By using the LLM for quality evaluation rather than likelihood estimation, our sampler can escape this bias because the additional context and alternative task change the sampling distribution.
This occurs even when the \askllm and perplexity models are the same size.

\textbf{\density and Clustering.} The kernel sum procedure at the core of our \density sampler operates on embedding-similarity relationships in a similar way to D4, SemDeDup, and SSL prototypes.
Indeed, near-duplicate detection can be viewed as a discretized version of similarity-based density estimation~\cite{kirsch2006distance}.
Outlier rejection, which motivates the ``nearest-to-centroid'' heuristic of SSL prototypes, also has intimate connections with density estimation~\cite{schubert2014generalized}.

\textbf{Intuition.} Perplexity should be viewed as a ``difficulty'' or ``quality'' score rather than as a coverage-maximizing score. Our \askllm sampler should be viewed as a contextualized quality score that incorporates reasoning.\footnote{Note that \askllm may also incidentally improve coverage because it does not suffer from in-distribution bias.} Our \density sampler is a pure ``coverage'' score in the latent representation space, while SemDeDup, and SSL Prototypes all incorporate quality / outlier filtering to some extent (\eg, by preferring points near / far from a centroid).

\begin{figure*}[!t] 
    \centering
    
    \captionsetup[subfigure]{skip=-3.37cm,slc=off,margin={55pt,0pt},font={color=red}}
    \subcaptionbox{\label{fig:sampling_ratio_vs_perf_large_perplexity}}{\includegraphics[width=0.31\linewidth,trim={0 4cm 27cm 0},clip]{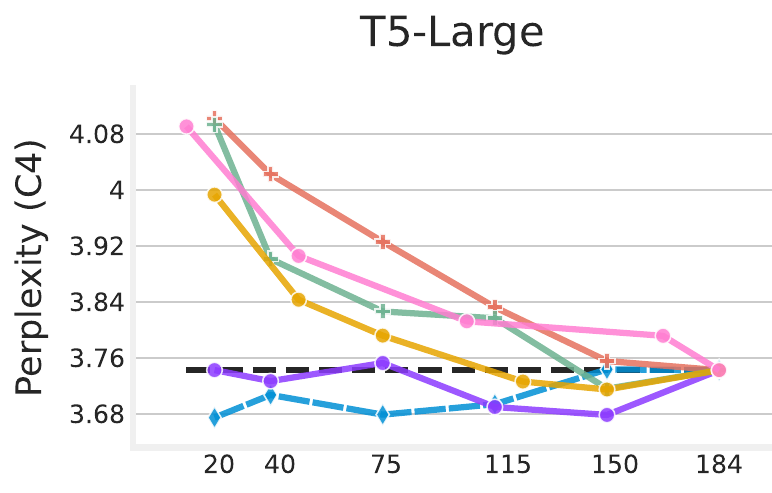}}
    \hspace{0.01\linewidth} %
    \captionsetup[subfigure]{skip=-3.37cm,slc=off,margin={54pt,0pt},font={color=red}}
    \subcaptionbox{\label{fig:sampling_ratio_vs_perf_large_perplexity_hq}}{\includegraphics[width=0.31\linewidth,trim={13.5cm 4cm 13.5cm 0},clip]{figures/sampling_ratio_vs_perf_T5-Large.pdf}}
    \hspace{0.01\linewidth} %
    \captionsetup[subfigure]{skip=-3.37cm,slc=off,margin={52pt,0pt},font={color=red}}
    \subcaptionbox{\label{fig:sampling_ratio_vs_perf_large_overscaling}}{\includegraphics[width=0.31\linewidth,trim={27cm 4cm 0 0},clip]{figures/sampling_ratio_vs_perf_T5-Large.pdf}}
    
    \vspace{0.2cm}
    
    \captionsetup[subfigure]{skip=-3.77cm,slc=off,margin={55pt,0pt},font={color=red}}
    \subcaptionbox{\label{fig:sampling_ratio_vs_perf_small_perplexity}}{\includegraphics[width=0.31\linewidth,trim={0 3cm 27cm 0},clip]{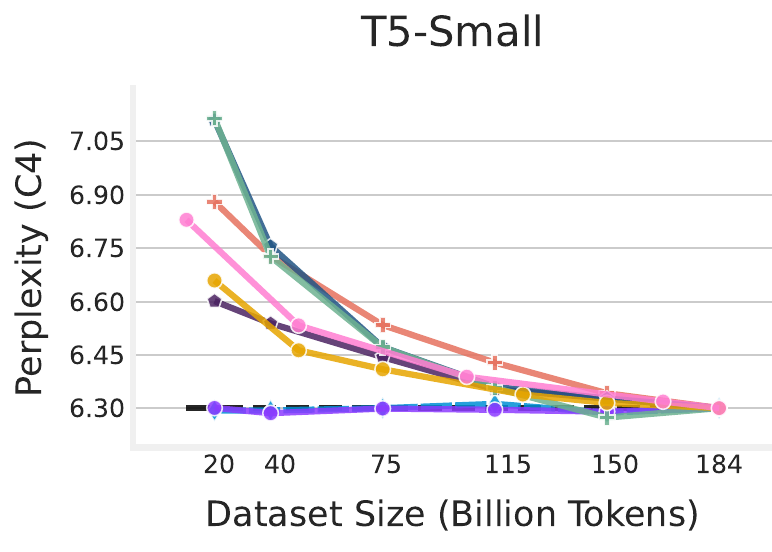}}
    \hspace{0.01\linewidth} %
    \captionsetup[subfigure]{skip=-3.77cm,slc=off,margin={54pt,0pt},font={color=red}}
    \subcaptionbox{\label{fig:sampling_ratio_vs_perf_small_perplexity_hq}}{\includegraphics[width=0.31\linewidth,trim={13.5cm 3cm 13.5cm 0},clip]{figures/sampling_ratio_vs_perf_T5-Small.pdf}}
    \hspace{0.01\linewidth} %
    \captionsetup[subfigure]{skip=-3.77cm,slc=off,margin={52pt,0pt},font={color=red}}
    \subcaptionbox{\label{fig:sampling_ratio_vs_perf_small_overscaling}}{\includegraphics[width=0.31\linewidth,trim={27cm 3cm 0 0},clip]{figures/sampling_ratio_vs_perf_T5-Small.pdf}}
    \captionsetup{subrefformat=parens}
    \vspace{0.25cm}
    \includegraphics[width=0.9\linewidth,trim={0 0 0 9cm},clip]{figures/sampling_ratio_vs_perf_T5-Small.pdf}
    \vspace{-0.2cm}
    \caption{
    Tradeoff between data quantity and model quality for T5-Small and T5-Large pre-training. Each point corresponds to a converged pre-training run over a sub-sample.
    C4 perplexity is over the in-distribution validation subset of C4, while HQ perplexity is for a higher-quality validation set (lower is better).
    Over-scaling measures the extent to which the sampling routine closes the performance gap with the next-larger (non-sampled) model (higher is better). Not all methods are shown in Figure~\ref{fig:sampling_ratio_vs_perf} or Table~\ref{tab:results}; see \cref{appendix:results}.
    }
    \label{fig:sampling_ratio_vs_perf}
\end{figure*}

\section{Experiments}

\subsection{Models}
We pre-train T5-style models \cite{t5}, which belong to the encoder-decoder family of Transformer models and offer competitive performance on many tasks~\cite{shen2023mixture}. 
See \citet{phuong2022formal} for a formal introduction to various Transformer model configurations.
We train T5-Small ($60$M) and T5-Large ($800$M), reusing all of the training settings from the original T5 implementation except the batch size ($2048\rightarrow1024)$. 
We train on batches of $1024$ sequences of length $512$ for $1$M steps.

For the quality-based data samplers (\askllm and Perplexity filtering) we use proxy quality scoring models of five different sizes: T5-\{Small, Base, Large, XL, XXL\}. For \askllm, we use FLAN-T5. For \askllm, we use FLAN-T5, which are the same sizes but have been instruction-tuned on Flan \cite{flanv2}.

\subsection{Datasets}
We use the C4 dataset\footnote{\url{www.tensorflow.org/datasets/catalog/c4}}, which was also used for pre-training the original T5. The C4 dataset is a version of the Common Crawl---a publicly available archive of web-text---that has been pre-processed using several heuristics~\citep[Section~2.2]{t5}. In its entirety, the C4 dataset contains $184$B tokens.
We use our algorithms (see \cref{appendix:samplers} for a list) to sample $\{10, 20, 40, 60, 80\}$\% of C4. 

Because a low sampling ratio yields exceedingly small datasets, we choose to train in the iso-compute setting, \ie, training all models for exactly $524$B tokens. This results in more epochs (repetitions) at smaller sampling rates. We believe this gives each data curation method an equal chance to maximize model performance, and not penalize methods that sample a small number of high-quality repeatable tokens \vs large number of non-repeatable tokens. 
See \cref{appendix:samplers}, \cref{fig:data_epochs} for a demonstration of this process.

\subsection{Evaluation} \label{sec:evaluation_metrics}
We use $111$ downstream evaluation tasks to assess diverse performance indicators for pre-trained LLMs (see \cref{appendix:evals} for a complete list). In addition to these individual tasks, to compare a \emph{normalized average performance improvement} over all downstream evaluations, we devise a metric called ``over-scaling.''

\textbf{Over-scaling (\%)}
measures the relative improvement of a model when compared against the next-largest model size, averaged over \emph{all} downstream evaluations listed in \cref{appendix:evals}. For example, a T5-Large variant with $100$\% over-scaling performs at the same level as T5-XL, while the standard T5-Large model would have an over-scaling of $0$\%.
We call this metric over-scaling because it measures the extent to which the performance exceeds the level we would expect from na\"ively scaling up the model or data.
We compute the metric by normalizing the performance improvement from sampling, \eg, for T5-Large:
$$
\expectation{\mathsf{metric}}{100 \cdot \frac{\Delta_{\mathsf{metric}}(\text{T5-L}(\mathcal{D}_{\mathsf{sampled}}), \text{T5-L}(\mathcal{D}_{\mathsf{full}})}{\Delta_{\mathsf{metric}}(\text{T5-XL}(\mathcal{D}_{\mathsf{full}}), \text{T5-L}(\mathcal{D}_{\mathsf{full}})}}
$$
where \ $\Delta_{\mathsf{metric}}(\mathbf{A}, \mathbf{B}) = \mathsf{Perf}_{\mathsf{metric}}(\mathbf{A}) - \mathsf{Perf}_{\mathsf{metric}}(\mathbf{B})$.

\def\arraystretch{1.1}
\setlength{\tabcolsep}{0.4em} 

\begin{table*}[t!] \centering
  \caption{
  Comparison of sampling algorithms at a fixed sample size. For each sampling strategy, we
  sample the dataset to X\% of the original size and pre-train T5-Small and T5-Large for $524$B tokens. This table is a cross-section of \cref{fig:sampling_ratio_vs_perf} but with more metrics.
  }
  \vspace{0.1cm}
  \label{tab:results}
    \begin{scriptsize}
    \begin{center}
        \begin{tabular}{c|cc|c|cccc|cccc}
            \toprule[1pt]
            \multirow{2.7}{*}{\textbf{LLM}} & \multicolumn{2}{c|}{\textbf{Training config.}} & \multirow{2.7}{*}{Over-scaling (\%)} & \multicolumn{4}{c|}{\textbf{Downstream tasks}} & \multicolumn{4}{c}{\textbf{FLAN Instruction Tuning}} \\[3pt] 
            & Sampler & \# Tokens & & GLUE & SuperGLUE & CNN/DM & SQuAD & MMLU & BBH & Reasoning & QA \\
            
            \midrule[0.75pt]
            
            T5-Small & --- & 184B & 0.0 & 79.9 & 58.6 & \textbf{18.6} & 78.6 & 25.5 & 28.5 & 15.2 & 37.0  \\
            
            \midrule[0.01pt]
            
            T5-Small & Random & 36B ($\equiv$20\%) & -0.2 &	79.9 &	58.3 &	\textbf{18.6} &	78.1 &	26.9 &	27.8 &	15.2 &	38.1 \\
            
            \midrule[0.01pt]
            
            T5-Small & Density & 36B ($\equiv$20\%) & -2.1 &	80.5 &	59.7 &	18.5 &	78.4 &	28.1 &	\textbf{30.3} &	14.5 &	33.4 \\
            
            T5-Small & SemDeDup & 46B ($\equiv$25\%) & -4.5 &	\textbf{80.7} &	59.2 &	18.4 &	77.8 &	28.0 &	26.6 &	14.8 &	\textbf{37.0} \\
            
            T5-Small & Prototypes & 46B ($\equiv$25\%) & -8.0 &	79.7 &	58.8 &	18.5 &	78.0 &	26.8 &	27.7 &	15.7 &	34.2 \\
            
            \midrule[0.01pt]
            
            T5-Small & Perplexity (Small) & 36B ($\equiv$20\%) & -7.8 &	79.9 &	58.4 &	18.4 &	77.5 &	28.1 &	28.2 &	15.0 &	35.0 \\
            
            T5-Small & Ask-LLM (XL) & 36B ($\equiv$20\%) & \textbf{4.2} &	80.3 &	\textbf{59.8} &	\textbf{18.6} &	\textbf{79.1} &	\textbf{29.9} &	28.5 &	\textbf{15.8} &	36.4 \\
            
            \midrule[1pt]
            
            T5-Large & --- & 184B & 0.0 & 88.2 & 82.5 & 20.8 & 86.7 & 40.7 & 33.6 & 21.6 & 73.0 \\
            
            \midrule[0.01pt]
            
            T5-Large & Random & 36B ($\equiv$20\%) & -6.5 &	88.6 &	82.8 &	20.7 &	86.1 &	43.3 &	34.8 &	18.6 &	70.1 \\
            
            \midrule[0.01pt]
            
            T5-Large & Density & 36B ($\equiv$20\%) & 2.8 &	\textbf{88.8} &	82.4 &	20.8 &	86.4 &	41.4 &	35.4 &	19.4 &	72.8 \\
            
            T5-Large & SemDeDup & 46B ($\equiv$25\%) & -20.5 &	88.3 &	81.4 &	20.7 &	86.0 &	41.2 &	\textbf{36.7} &	\textbf{21.8} &	70.2 \\
            
            T5-Large & Prototypes & 46B ($\equiv$25\%) & 0.2 &	88.4 &	82.7 &	20.8 &	87.0 &	40.0 &	35.5 &	17.6 &	71.1 \\
            
            \midrule[0.01pt]
            
            T5-Large & Perplexity (XL) & 36B ($\equiv$20\%) & -32.7 &	87.9 &	81.8 &	20.6 &	85.7 &	38.1 &	33.9 &	20.0 &	69.0 \\
            
            T5-Large & Ask-LLM (XL) & 36B ($\equiv$20\%) & \textbf{33.0} &	\textbf{88.8} &	\textbf{83.0} &	\textbf{21.0} &	\textbf{87.3} &	\textbf{43.6} &	33.0 &	20.0 &	\textbf{77.1} \\
            
            \bottomrule[1pt]
        \end{tabular}
    \end{center}
    \end{scriptsize}
\end{table*}

\begin{figure*}[!t] 
    \centering
    \includegraphics[width=0.9\linewidth,trim={0 0cm 0 0},clip]{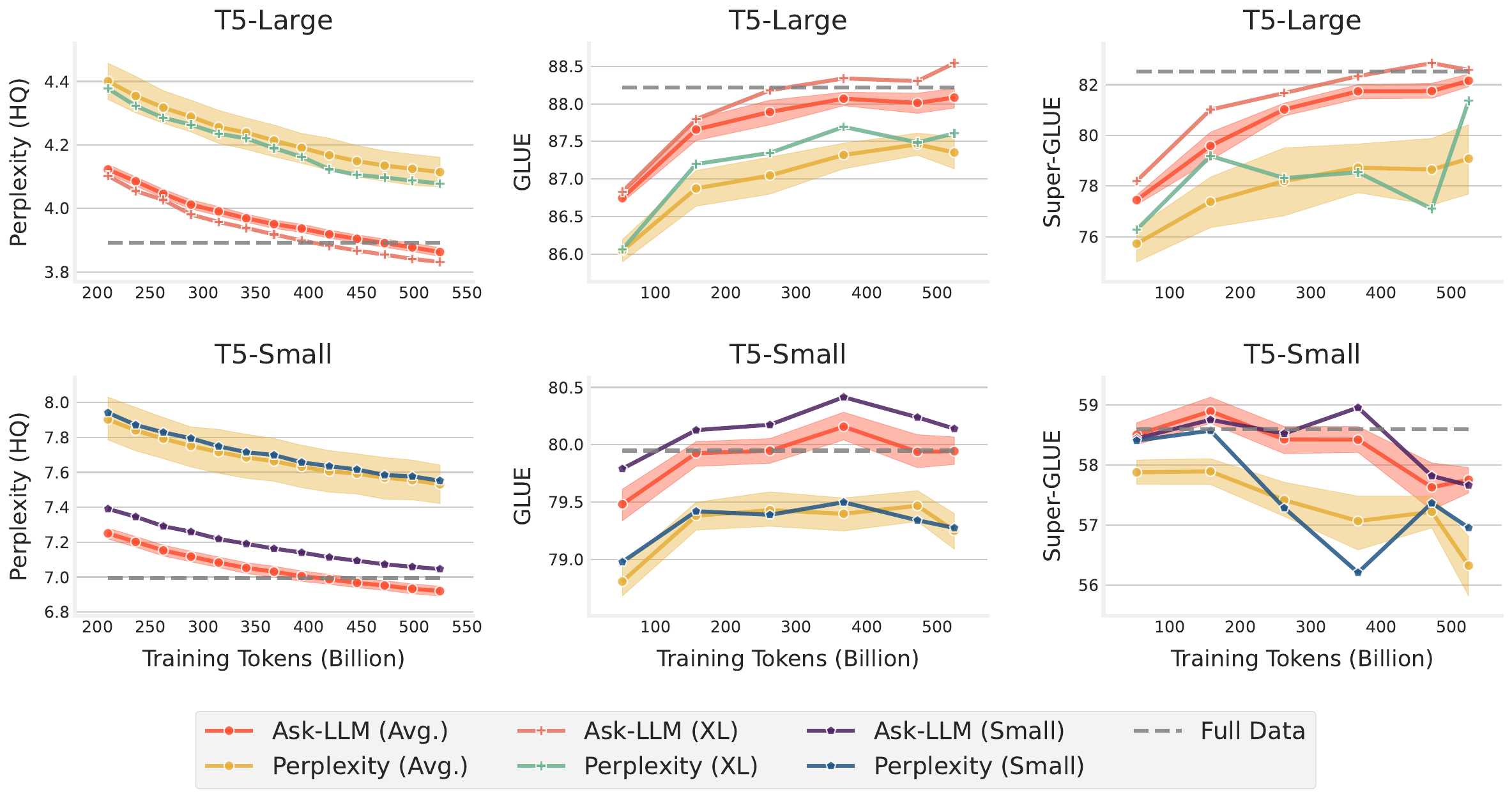} 
    
    \vspace{-0.2cm}
    \caption{
    Training efficiency comparison between two quality-score based samplers: \askllm and Perplexity filtering. \askllm (Avg.) and Perplexity filtering (Avg.) represent the training run \emph{averaged} across (i) proxy model sizes, \ie, T5-\{Small, Base, Large, XL, XXL\}; and (ii) sampling ratios, \ie, \{10, 20, 40, 60, 80\}\%. The training runs for \askllm and perplexity filtering with T5-\{Small, XL\} specifically are averaged only over the sampling ratios. Each point in this plot is the (averaged) performance of an intermediate checkpoint during the course of training on sampled data.
    }
    \vspace{-0.2cm}
    \label{fig:average_data_eff}
\end{figure*}

\begin{figure}[t!] 
    \centering
    \includegraphics[width=0.85\linewidth,trim={0 0cm 0 0},clip]{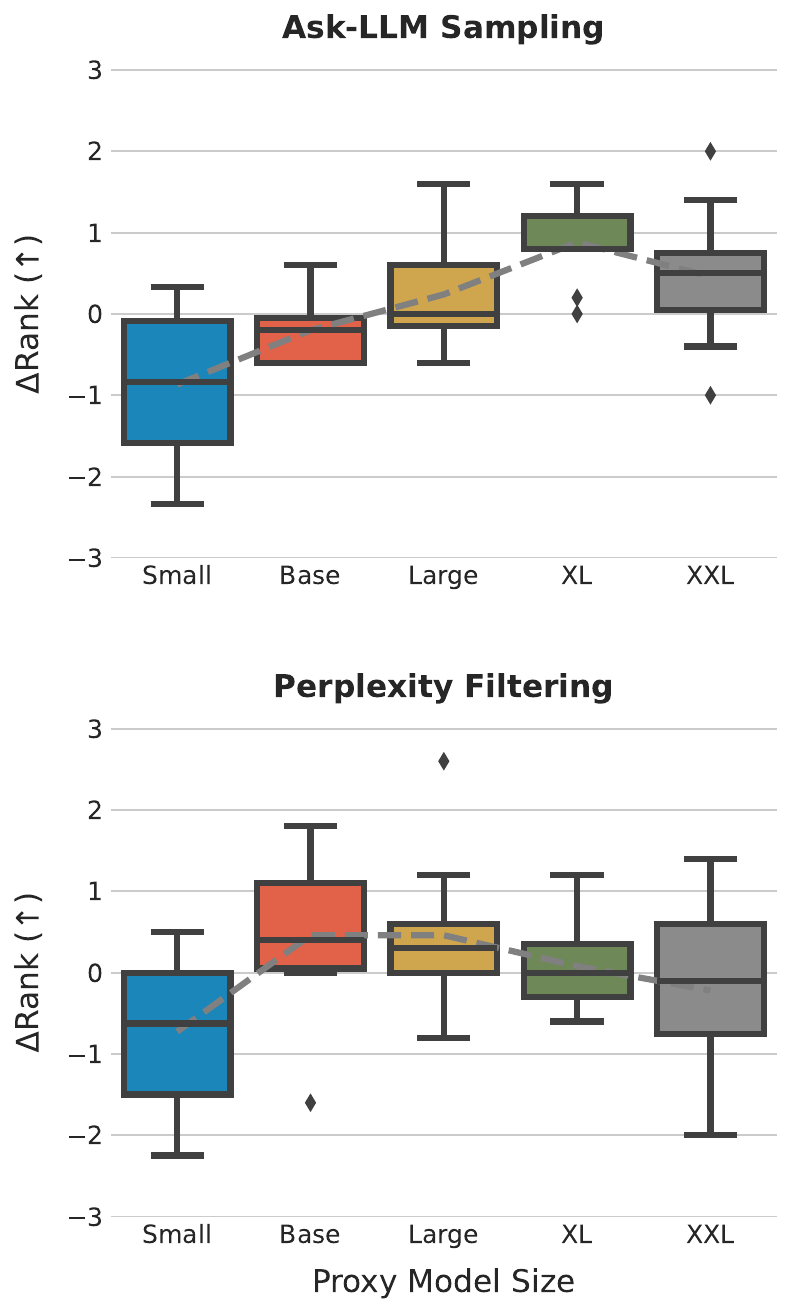} 
    
    \vspace{-0.2cm}
    \caption{
    We investigate the change in \emph{ranking} of quality-scoring models when pre-training different LLMs. 
    A positive $\Delta$Rank indicates that the scorer's task-averaged rank within \{Small, Base, Large, XL, XXL\} increased when training T5-Large \vs T5-Small.
    }
    \vspace{-0.2cm}
    \label{fig:delta_rank}
\end{figure}

\subsection{Does reasoning improve data efficiency?}
\cref{fig:sampling_ratio_vs_perf_large_overscaling} shows that \askllm closes up to 33\% of the performance gap to the next-largest model size (\ie, the over-scaling metric). 
\askllm consistently outperforms training on the full dataset as well as perplexity filtering (and coverage-maximizing baselines), despite having access to a scoring model of the same model capacity (XL). 
Similar findings hold true for training efficiency (\cref{fig:average_data_eff}). \askllm converges faster than perplexity filters, both in terms of the average (expected final performance over all proxy model sizes) and pointwise for the best configuration (Small and XL for training T5-Small and T5-Large).

\cref{fig:sampler_ranking_correlations} further demonstrates that prompting adds critical information to the sampler not present in perplexity: \askllm scores show \emph{no correlation} with the perplexity scores. 
Based on this clear behavioral difference, we conclude that reasoning and context are crucial ingredients.
We expect prompting techniques such as chain-of-thought reasoning~\cite{wei2022chain} to further drive performance.

\subsection{When are expensive quality scores justified?} 

\cref{fig:sampling_ratio_vs_perf_large_overscaling,fig:sampling_ratio_vs_perf_small_overscaling} suggest that coverage scores---especially those provided by \density---perform well in the \emph{mid-data regime} (roughly $25$\% to $50$\% sampling rate).
On the other hand, expensive quality scoring---via the \askllm procedure---is Pareto optimal for the entire quantity-quality trade-off.
The higher costs of LLM-based filters are most justified in two scenarios: (i) improving full-data performance, where quality filtering by removing the lowest-quality data is the main way to push the upper limit of model performance; or (ii) in the low-data regime, where keeping only the highest-quality data drives the most model performance compared to other sampling strategies.

We also observe that random sampling is a strong baseline, aligning with recent observations in the literature. \citet{guo2022deepcore} found that only three methods outperformed random sampling in a computer vision benchmark of 15 algorithms.
\citet{ayed2023data} prove the existence of adversarial problem instances where score-based sampling cannot outperform random sampling.
These results only serve to highlight the significance of \askllm's gains.

\subsection{Effect of quality-scoring model capacity}

\cref{fig:delta_rank} demonstrates a clear scaling trend for \askllm's quality-scoring model: larger scoring models are increasingly beneficial as the scale of the to-be-trained LLM increases.
Perplexity filters do not seem to exhibit such trends.
The strongly consistent scaling for \askllm also suggests an interesting performance-recipe: to improve downstream data-efficiency, use better quality-scoring models.
Creating better quality scorers for \askllm (via fine-tuning, chain-of-thought prompting, more capable scoring models, \etc) is thus an exciting direction for future work.

Despite the scaling trends, we would also like to emphasize that even small \askllm models provide compelling sampling performance already for both training T5-Small and T5-Large models. For example, \askllm (Small) outperforms perplexity filtering with \emph{any} scoring-model in \cref{fig:sampling_ratio_vs_perf_small_overscaling} (including T5-XXL) by a sizable margin.

\begin{figure}[t!] 
    \centering
    \includegraphics[width=0.95\linewidth,trim={0 0cm 0 0},clip]{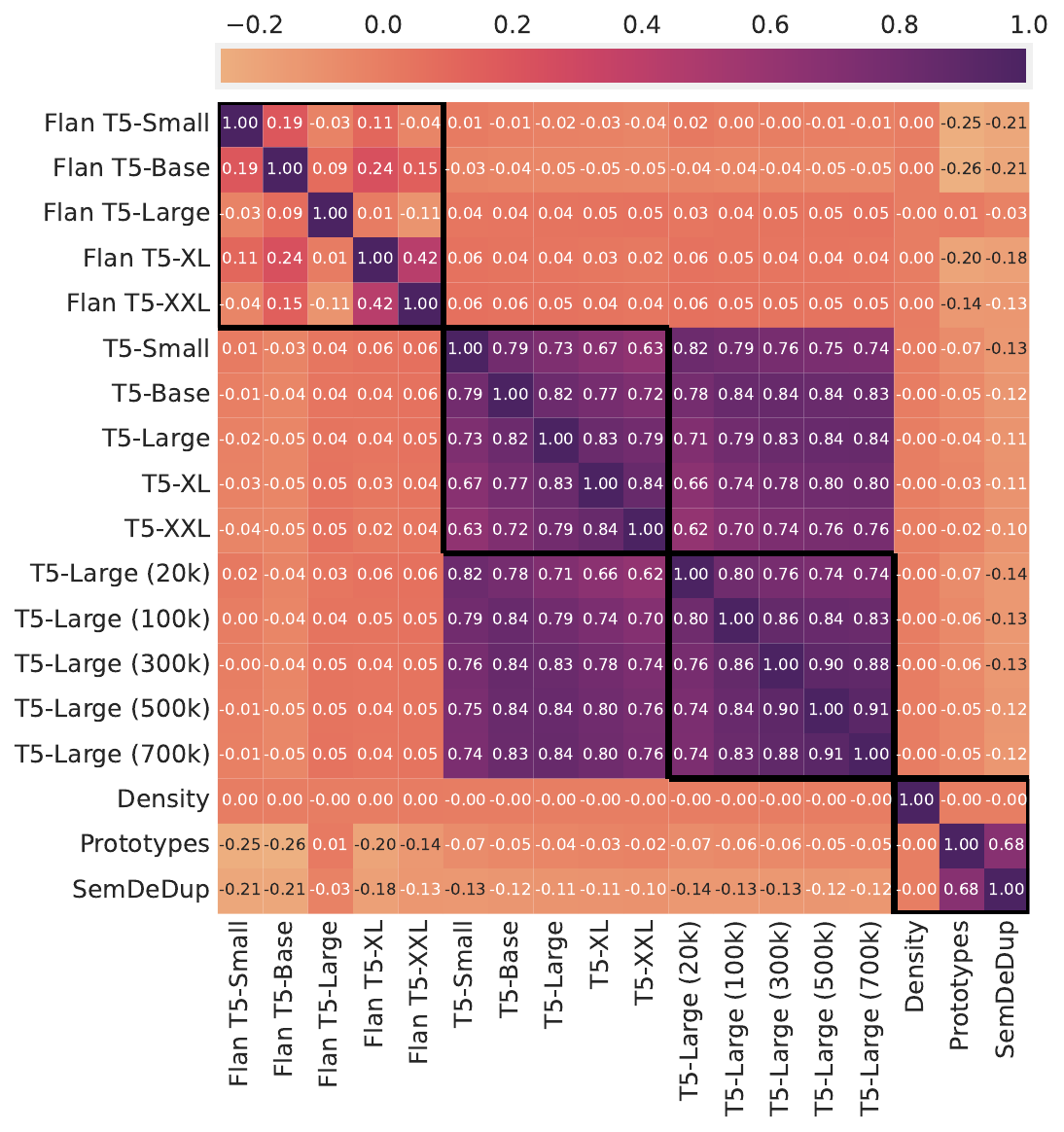} 
    
    \vspace{-0.2cm}
    \caption{
    Kendall's Tau correlation amongst the scores from models in the \askllm (first 5) and perplexity filtering (next 10) frameworks over $500$k randomly selected training samples.
    }
    \vspace{-0.2cm}
    \label{fig:sampler_ranking_correlations}
\end{figure}

\subsection{Do samplers prioritize different examples?}

To understand whether different algorithms prioritize different examples, we sorted examples by score and computed the Kendall Tau rank correlation between samplers (\cref{fig:sampler_ranking_correlations}). We find that samplers differ in significant and interesting ways. For example, the ``T5-Large'' row shows that (i) T5-Large outputs perplexity scores similar to T5-Small early in training, but becomes progressively more nuanced on the path from 20k to 700k training steps, and (ii) perplexity and \askllm select for wildly different criteria, with almost no ranking correlation.

\density prioritizes coverage over de-noising, maintaining the in-distribution test perplexity better than any other strategy (\cref{fig:sampling_ratio_vs_perf_large_perplexity,fig:sampling_ratio_vs_perf_small_perplexity}).
This suggests that coverage sampling preserves the objective function, in contrast with other methods that preferentially select for quality in addition to diversity.

\section{Discussion}

\textbf{Amortized scoring.} The \askllm and perplexity scorers require considerable computation---one LLM inference call for every training sample---which is concerning from both a carbon-emissions and cost perspective~\cite{strubell2019energy}.
However, we argue that the scoring costs are \emph{amortized over many pre-training runs}, which together cost significantly more than the \askllm inference calls~\cite{luccioni2023estimating}.
In practical systems, cheaper samplers / scoring models can also pre-filter examples for our more expensive scorers.
While LLM pre-training is often thought of as a one-time cost, this has historically not been the case.
We therefore view quality scores as a long-term investment.
See \cref{appendix:askkllm_cost} for a deeper discussion about the cost of \askllm scoring.

\textbf{LLM-Based Data Refinement.} Recursively training on model-generated data causes degredation in both diffusion models and LLMs, inciting concerns about whether the internet will remain a viable source of training data~\cite{shumailov2023curse,alemohammad2023self,briesch2023large}.
It is therefore somewhat surprising that LLMs are so effective at deciding which training data to consume.
Our \askllm results raise important questions about whether LLM-based filters can function as an intervention in the self-consumption loop, allowing LLMs to self-improve.

\section{Conclusion}
We studied the performance of sampling algorithms that select high-quality data through highly-capable proxies and maximize coverage through embedding similarity.
Our experiments reveal that LLM-based quality filtering yields a Parteo optimal efficiency tradeoff between data quantity and model quality, with important implications for training cost, self-improvement, and LLM training data curation. 

\section*{Impact Statement} While increased LLM accessibility has well-documented risks, we expect data-efficient pre-training to be a net social good that reduces (amortized) carbon emissions and pre-training cost while improving quality.

\section*{Acknowledgements}
We sincerely thank Xinyun Chen and Kelvin Guu for their insightful feedback on early drafts of this paper.

\bibliography{references}
\bibliographystyle{icml2024}

\appendix
\onecolumn
\newpage
\appendixpage
{
    \hypersetup{linkcolor = [RGB]{0,0,130}}
    \startcontents[sections]
    \printcontents[sections]{l}{1}{\setcounter{tocdepth}{3}}
}
\newpage
\section{Algorithms} \label{appendix:algorithms}

\subsection{\askllm Sampling} \label{appendix:askkllm_cost}
\renewcommand{\algorithmiccomment}[1]{\hfill\texttt{// #1}}

\begin{algorithm}[H]
    \centering
    \caption{\askllm Sampling}
    \label{alg:askllm}
    \begin{algorithmic}[1]
    \STATE {\bfseries Input:} Dataset $\mathcal{D} = \{x_1,x_2,\cdots,x_N\}$ s.t. $x_i \in \mathcal{X}$ is the training sample in plain-text, sample size $k$, scoring model $\mathcal{M} : \mathcal{X}; \mathcal{X} \mapsto \mathbb{R}$
    
    \STATE Initialize list of scores $S = []$.
    \FOR{$n=1 \rightarrow N$}
        \STATE $\mathrm{prompt}_n \leftarrow \operatorname{make\_prompt}(x_n)$ \COMMENT{Make \askllm prompts as in \cref{fig:ask_llm_prompt}}
        \STATE Append $\mathcal{M}($``\texttt{yes}'' | $\mathrm{prompt}_n)$ to $S$ \COMMENT{Use $\mathcal{M}$ to score $x_n$}
    \ENDFOR
    
    
    \STATE {\bfseries Output:} Select $k$ elements from $\mathcal{D}$ with top-$k$ scores in $S$, without replacement.
    \end{algorithmic}
\end{algorithm}

\paragraph{Discussion on the cost of \askllm scoring.} 
Even though \askllm sampling results in impressive performance and training efficiency improvements compared to training on the full-dataset (\cref{appendix:results}), the data quality scoring cost might seem prohibitive. On the other hand, on top of the improved results, we argue the following to be compelling points in justifying \askllm's one-time-amortized data scoring cost:
\begin{itemize}[leftmargin=*]
    \item \askllm only requires \emph{forward passes} on the entire dataset. This is much cheaper than (i) training the model itself which requires both forward and backward passes on multiple repetitions of the entire dataset, (ii) gradient-based data-curation techniques \cite{dd_survey, farzi} that also require backward passes, \etc 
    
    \item An additional benefit of the \askllm framework is the ability to leverage memory-efficient, quantized LLM inference setups \cite{llm_int8}. This is strictly not possible, \eg, for pre-training LLMs. Notably, quantization isn't the only \askllm-friendly technique. All the recent (and future) advances in efficient \emph{inference} techniques for LLMs \cite{efficient_inference_blog} directly reduce the amortization cost of the \askllm framework.
    
    \item Another benefit of \askllm is the ability to na\"ively parallelize quality scoring. To be more specific, we can simply scale-up the amount of \emph{small \& independent} inference resources, and run inference calls for various training samples parallely. Note that inference hardware has much smaller requirements compared to, \eg, pre-training or fine-tuning requirements. This is primarily true because of no batch size requirement for inference \vs large batch size requirement while training. This enables scaling-up hardware to happen via a large number of small-compute setups (\eg, 4 interconnected GPUs per node) versus increasing the number of large-compute setups (\eg, $1000$s of interconnected GPUs per node).
    
    \item \askllm also uses strictly less compute compared to teacher-student knowledge distillation based training setups \cite{llm_kd}. This is true simply because knowledge distillation require (i) bigger teacher model's softmax predictions (ii) for each token in our training data. On the other hand, \askllm requires just the score of the token ``\texttt{yes}'' given the prompt.
\end{itemize}

\subsection{\density Sampling}
Our density sampler is adapted from that of~\citet{density}, with a few critical departures:
\begin{itemize}
    \setlength\itemsep{0em}
    \item We use a two-pass procedure that allows for more rigorous theoretical guarantees (and different sampling behavior).
    \item We conduct the density estimation in the model's latent space rather than using Jaccard similarity over $n$-grams.
\end{itemize}
\textbf{Improvements:} Jaccard similarities are sufficient to construct a reasonable sampling distribution for genomics applications, which are significantly more structured than natural language. 
However, this is not the case with text --- we found that sampling based on Jaccard density is no better than random.
For this reason, we must use different kernels ($p$-stable rather than MinHash) and different input representations (embedding rather than $n$-grams).

However, our more interesting departure from~\citet{density} is our two-pass sampling procedure, which changes the behavior of the algorithm and allows for more rigorous theoretical guarantees.
The original method was only able to demonstrate convergence of cluster populations in the sampled dataset.
While this leads to (weak) convergence for some measures of diversity, it also requires strong assumptions about the cluster structure.

\textbf{Theory:} We use a recent result that demonstrates consistent sketch-based estimation of the kernel sum (Theorem 3.3 of ~\citet{liu2023one}), which we paraphrase below.

\begin{lemma}\label{thm:non_private_consistency} 
Let $P(x)$ denote a probability density function. Let $\mathcal{D} \underset{\mathrm{iid}}{\sim} P(x)$ denote a dataset. Let $k(x, y)$ be a positive definite LSH kernel, and let $S$ be the \density score. Then $S(x)$ is a consistent estimator for the kernel sum.
$$S(x) \underset{\mathrm{i.p.}}{\to} \frac{1}{N} \sum_{x_i \in \mathcal{D}} k(x_i, q)$$
with convergence rate $O(\sqrt{ \log R / R})$.
\end{lemma}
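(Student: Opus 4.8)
\emph{Proof plan.}~The plan is to unfold the \density score as an average of $R$ single-repetition sketch estimators, each unbiased for the target kernel sum, and then apply a concentration inequality whose failure probability we tune to scale like $1/R$, so that the resulting deviation decays at the claimed $O(\sqrt{\log R/R})$ rate. First I would recall the defining property of an LSH kernel: there is a distribution over hash functions $h$ with $\Pr_h[h(x) = h(y)] = k(x,y)$. The underlying sketch draws $R$ independent repetitions $h_1,\dots,h_R$ and, for a query $q$, forms from repetition $j$ a single-repetition estimate $Z_j(q)$ by reading off the bucket $h_j(q)$; by the LSH collision identity and linearity of expectation, $\mathbb{E}[Z_j(q)] = \frac{1}{N}\sum_{x_i \in \mathcal{D}} k(x_i, q)$ for every $j$, so the \density score $S(q) = \frac{1}{R}\sum_{j=1}^R Z_j(q)$ is an unbiased estimator of the kernel sum $\mu_q := \frac{1}{N}\sum_i k(x_i, q)$.

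Next I would establish boundedness (or second-moment control) of each $Z_j(q)$: in the plain collision-counting version $Z_j(q) \in [0,1]$ because $k$ is a probability, while if the estimator uses importance weights $1/p$, I would instead bound the variance using the standard Hashing-Based-Estimator variance bounds, which is exactly where positive definiteness of the kernel is used. With boundedness (resp.\ a variance bound) and the i.i.d.\ structure of $Z_1(q),\dots,Z_R(q)$, I would apply Hoeffding's (resp.\ Bernstein's) inequality to get $\Pr[\,|S(q) - \mu_q| > t\,] \le 2\exp(-cRt^2)$ for an absolute constant $c>0$. Setting the right-hand side equal to $R^{-2}$ yields $t = O(\sqrt{\log R / R})$, and summability of $\sum_R R^{-2}$ together with Borel--Cantelli upgrades convergence in probability to the quantitative rate claimed.

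The main obstacle is the dependence introduced by the ``one-sketch'' construction of~\citet{liu2023one}: the same $R$ hash tables are reused across all query points, and a single bucket couples contributions from distinct data points $x_i$, so the naive one-line Hoeffding argument above does not directly apply. Controlling this requires their more careful bookkeeping---bounding the variance contributed by spurious collisions via the LSH collision probabilities and arguing concentration uniformly over queries---which is the substance of their Theorem 3.3. Since \cref{thm:non_private_consistency} only asserts the pointwise consistency statement and its rate, the unbiasedness and the $\sqrt{\log R/R}$ scaling already follow from the sketch above, and for the delicate variance estimates we defer to~\citet{liu2023one}.
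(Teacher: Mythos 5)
The paper does not actually prove this lemma: it is stated verbatim as a paraphrase of Theorem~3.3 of \citet{liu2023one}, with no argument given in the text. Your sketch correctly identifies the ingredients that would underlie such a proof---unbiasedness of each repetition via the LSH collision identity $\Pr_h[h(x)=h(y)]=k(x,y)$, independence across the $R$ rows, a concentration bound with failure probability tuned to $R^{-2}$ to extract the $O(\sqrt{\log R/R})$ rate, and the bookkeeping needed for the shared-sketch construction---and, like the paper, you ultimately defer the substantive variance estimates to \citet{liu2023one}, so the approach matches.
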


If we perform inverse propensity sampling using the score in \cref{thm:non_private_consistency}, we obtain a sampling procedure that outputs a uniformly-distributed sample.

\begin{theorem}\label{thm:converge_to_uniform} 
Let $Q(x)$ be the distribution formed by (i) drawing $N$ samples i.i.d. from a distribution $P$, \eg $\mathcal{D} = \{x_1, ... x_N\} \sim P$, and (ii) keeping $x$ with probability proportional to $\frac{1}{S(x)}$. Under the conditions of Lemma~\ref{thm:non_private_consistency}, $Q(x) \underset{\mathrm{i.p.}}{\to} U(x)$, where $U(x)$ is the uniform distribution.
\end{theorem}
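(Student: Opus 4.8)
The plan is to write the density of $Q$ explicitly as a reweighting of $P$ and then show that the reweighting factor flattens to a constant. Under inverse propensity sampling, a point drawn from $P$ at location $x$ is retained with probability proportional to $1/S(x)$, say $a(x)=c/S(x)$, where $c$ is a normalizing constant chosen so that $a(\cdot)\le 1$ (this requires $S$ to be bounded away from $0$ --- see below). Hence the retained points have density $Q(x)=P(x)\,a(x)/Z$ with $Z=\int P(u)\,a(u)\,du$ the overall acceptance probability, i.e.\ $Q(x)=(c/Z)\cdot P(x)/S(x)$. So it suffices to prove that $P(x)/S(x)$ converges, in probability and pointwise in $x$, to a constant; the normalization $\int Q = 1$ then forces $Q\to U$.

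First I would invoke \cref{thm:non_private_consistency}: for a fixed dataset $\mathcal{D}=\{x_1,\dots,x_N\}$, letting the number of sketch repetitions $R\to\infty$ gives $S(x)\underset{\mathrm{i.p.}}{\to}\tfrac1N\sum_{x_i\in\mathcal D}k(x_i,x)$. Next, since $\mathcal D\underset{\mathrm{iid}}{\sim}P$, the law of large numbers gives $\tfrac1N\sum_{x_i\in\mathcal D}k(x_i,x)\to\mathbb{E}_{z\sim P}[k(z,x)]=(k_\lambda * P)(x)$ as $N\to\infty$ (the single point among the $x_i$ that is later re-drawn contributes $O(1/N)$ to the sum and is asymptotically negligible --- a one-line leave-one-out remark). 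Composing these via Slutsky / the continuous mapping theorem yields $S(x)\underset{\mathrm{i.p.}}{\to}(k_\lambda * P)(x)$ in the joint $N,R\to\infty$ limit.

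The crux --- and the step I expect to be the main obstacle --- is identifying $(k_\lambda * P)(x)$ with a multiple of $P(x)$. I would close this with the standard kernel-density-estimation consistency argument: driving the bandwidth $\lambda\to 0$ (slowly relative to $N$, so the LLN step above is unaffected), the smoothing kernel $k_\lambda$ tends to a normalized point mass and $(k_\lambda * P)(x)\to\kappa\,P(x)$ at every continuity point of $P$, where $\kappa$ is the mass of $k$; equivalently one posits that the LSH kernel acts as an approximate reproducing identity on $P$. Then $P(x)/S(x)\to 1/\kappa$, a constant independent of $x$, so $Q(x)=(c/Z)\,P(x)/S(x)\to c/(\kappa Z)=U(x)$ on the support. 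The delicate points one must handle carefully are (i) coordinating the three limits in $R$, $N$, and $\lambda$ simultaneously (e.g.\ along a diagonal subsequence, using the explicit $O(\sqrt{\log R/R})$ rate from \cref{thm:non_private_consistency} together with a KDE rate for the $N,\lambda$ step), and (ii) ruling out degeneracy where $S(x)\to 0$ makes $1/S(x)$ unbounded and $a(\cdot)$ fails to be a genuine probability --- which is why one restricts to a compactly supported $P$ bounded below on its support (or truncates the scores), guaranteeing $Z>0$ and that ``uniform'' is well defined.
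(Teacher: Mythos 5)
Your proof takes essentially the same route as the paper: write $Q(x)\propto P(x)/S(x)$, combine \cref{thm:non_private_consistency} with KDE consistency to show $S(x)\to P(x)$ in probability, and conclude by Slutsky that the reweighting flattens $Q$ to a constant. The only real difference is that the paper collapses your two-step argument (LLN to get $S(x)\to (k_\lambda * P)(x)$, then $\lambda\to 0$ to get $(k_\lambda * P)(x)\to \kappa P(x)$) into a single citation of a KDE consistency result, and it glosses over the points you flag as delicate---coordinating the $R$, $N$, $\lambda$ limits and ruling out degeneracy where $S(x)\to 0$---which your version handles more carefully.
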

\begin{proof}
Under the conditions of~\citet{wied2012consistency} (specifically, positive-definiteness and $\ell_1$ integrability / bounded domain), the kernel sum is a consistent estimator of the density. That is, the sum converges in probability to $P(x)$.
$$ \frac{1}{N} \sum_{x_i \in \mathcal{D}} k(x_i, q) \underset{\mathrm{i.p.}}{\to} P(x)$$
\cref{thm:non_private_consistency} shows that $S(x)$ converges in probability to the sum (and thus to $P(x)$). By Slutsky's Theorem, $\frac{1}{S(x)} \to \frac{1}{P(x)}$ for all $x$ in the support of the distribution (i.e. $P(x) \neq 0$). The probability of generating $x$ as part of the sample is:
$$Q(x) = \mathrm{Pr}[\mathrm{Select }x\cap \mathrm{Generate }x] = \mathrm{Pr}[\mathrm{Select }x] \mathrm{Pr}[\mathrm{Generate }x] = \frac{1}{S(x)} P(x)$$

Because $\frac{1}{S(x)} \to \frac{c}{P(x)}$ for some constant $c$, we have that $Q(x) \to c$.
\end{proof}

\cref{thm:converge_to_uniform} demonstrates that our \density sampler outputs a uniformly-distributed collection of points over the input space (latent LLM representation space).

\renewcommand{\algorithmiccomment}[1]{\hfill\texttt{// #1}}

\begin{algorithm}[H]
    \centering
    \caption{Inverse Propensity Sampling (IPS) via Kernel Density Estimation (KDE)}
    \label{alg:density}
    \begin{algorithmic}[1]
   \STATE {\bfseries Input:} Dataset $\mathcal{D} = \{x_1,x_2,\cdots,x_N\}$ of embedings, sample size $k$, kernel $k$ with corresponding locality-sensitive hash family $\mathcal{H}$ (see \citet{coleman2020sub}), hash range $B$, rows $R$, random seed $s$
    \STATE {\bfseries Initialize:} KDE sketch $ \mathcal{S} \leftarrow 0^{R \times B}$
   \STATE Generate $R$ independent hash functions ${h_1,\dots,h_R}$ from $\mathcal{H}$ with range $B$ and random seed $s$.
   \FOR[Construct KDE estimator for $D$.]{$ n= 1 \rightarrow N$}
    \FOR[Add $x_n$ to the KDE estimator.]{$r = 1 \rightarrow R$}
      \STATE $\mathcal{S}_{r, h_r(x_n)} += 1$
     \ENDFOR
    \ENDFOR
    \STATE Initialize list of scores $S = []$.
    \FOR[Score each example $x_n$]{$ n= 1 \rightarrow N$}
    \STATE $\mathrm{score} = 0$
    \FOR[Compute approximate KDE using $\mathcal{S}$]{$r = 1 \rightarrow R$}
      \STATE $\mathrm{score} += \mathcal{S}[r, h_r(x_n)]$
     \ENDFOR
     \STATE Append $\mathrm{score} / R$ to $S$
    \ENDFOR
    \STATE {\bfseries Output:} Select $k$ elements from $\mathcal{D}$ with probability $p = \frac{S}{\sum S}$ without replacement.
    \end{algorithmic}
\end{algorithm}

\textbf{Cost:} Like SemDeDup, D4, and SSL prototypes, our \density sampler requires access to embeddings for each example in the training corpus.
However, by eliminating the expensive clustering step, we eliminate a significant computational overhead.
Our \density sampling routine required just $80$MB of memory and two linear passes through the dataset to score all $364$M embeddings.
This is significantly less expensive than clustering.

\textbf{Tuning:} We also eliminate a large number of hyperparameters, improving tuning. Cluster-based samplers must choose the number of clusters, clustering optimizer and objective, and per-cluster sampling rate or deduplication similarity.
Kernel density estimation, on the other hand, has just \textit{two} hyperparameters: the choice of kernel and the bandwidth.
We did not observe a significant performance variation among different bandwidth and kernel choices (e.g., the L2 and cosine kernels of~\citet{coleman2020sub} perform nearly identically).
This is likely because all positive-definite kernels enjoy strong guarantees on the distribution approximation error~\cite{devroye1983equivalence}.

\section{Data-curation Techniques} \label{appendix:samplers}

\begin{figure}[t!] 
    \centering
    \vspace{-0.2cm}
    \includegraphics[width=0.7\linewidth,trim={0.8cm 0.1cm 0.8cm 1.25cm},clip]{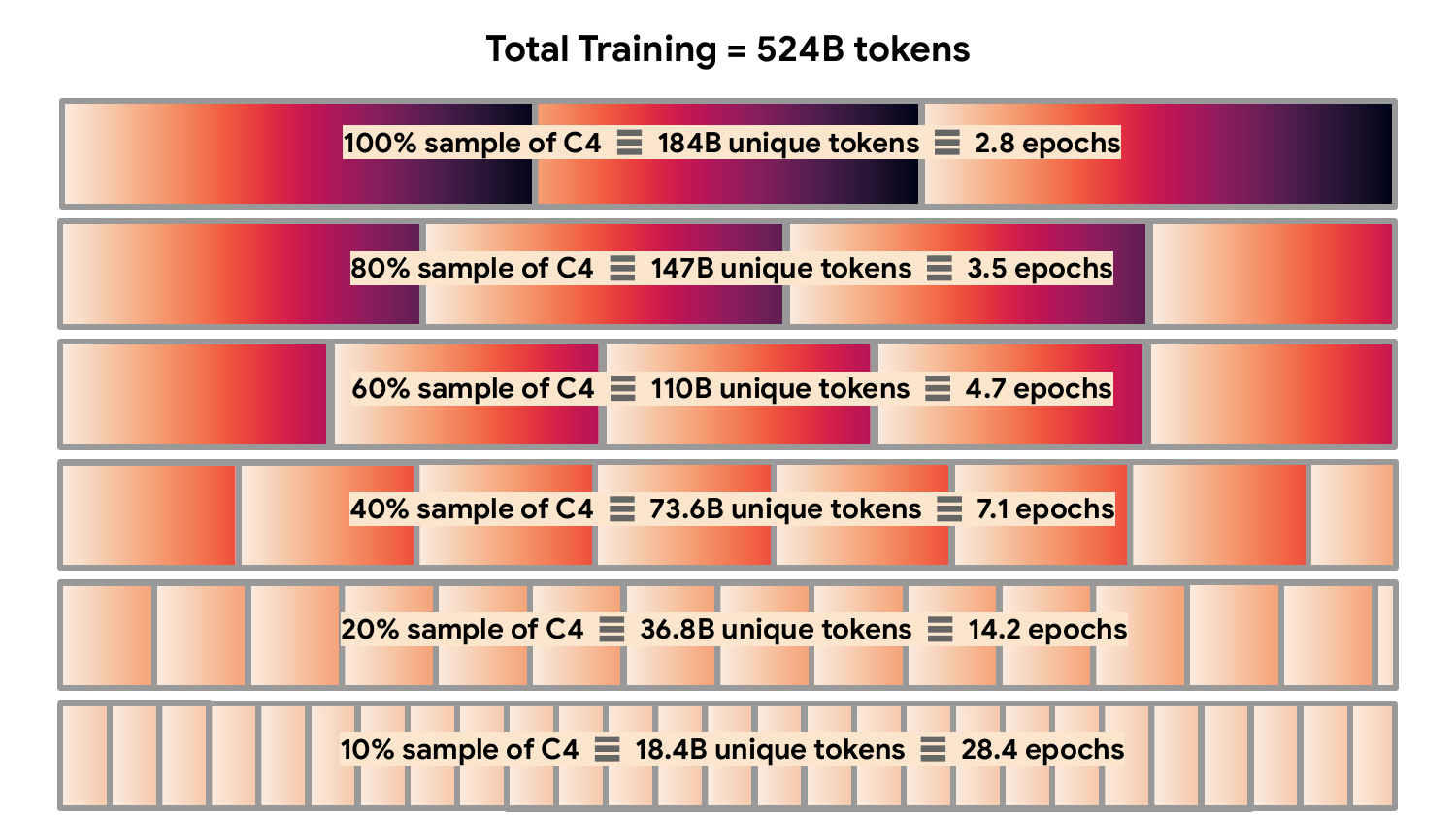} 
    \caption{We consider a setup where all of our models are trained on exactly $524$B tokens, causing us to repeat the same examples for more epochs when we downsample.
    We borrow the format of this graphic explanation from~\citet{muennighoff2023scaling}, who consider a similar setting.
    }
    \vspace{-0.2cm}
    \label{fig:data_epochs}
\end{figure}

\subsection{Random Sampling}
The de-facto standard for obtaining samples of large datasets where we sample training examples uniformly at random. Notably, random sampling has also been accompanied with strong results in a variety of applications in the data-curation literature primarily due to its unbiased sampling \cite{uniform_sampling_good_1, uniform_sampling_good_2}.

\subsection{\density Sampling} \label{appendix:density}
See \cref{sec:density_details} for technical details about the \density sampler. We use Sentence-T5-Base \cite{sentence_t5} as our embedding model for training samples, primarily due to its contrastive training, giving confidence for computing distances amongst its $768$-$\dim$ embeddings. We use the PStable hash \cite{pstable} to hash the embeddings, along with a $[1,000 \times 20,000]$ sketch matrix.

\subsection{SemDeDup}
The key idea is to perform (coverage maximizing) semantic deduplication inside clusters of the original dataset \cite{semdedup}. We re-use the Sentence-T5-Base embeddings of data-points (\cref{appendix:density}), and perform $k$-means clustering to obtain $10,000$ clusters of the entire dataset. 

\subsection{SSL Prototypes}
They key idea is to remove \emph{prototypical} points in a dataset \cite{prototypes}. As a meaningful proxy, this method removes the points closest to cluster centroids of a dataset. 
For brevity, we use the name ``Prototypes'' when reporting our results.
We re-use the same embeddings and clustering for both SemDeDup and Prototypes.

\subsection{Perplexity Filtering}
A popular quality-filtering approach in the literature is to use the perplexity of proxy language models to filter data-points with a high-perplexity under that language model. While the literature historically used small language models for perplexity filtering \cite{wenzek2019ccnet,muennighoff2023scaling}, recent work \cite{marion2023less} suggests improved filtering performance when using LLMs for this task. To this end, we employ perplexity filtering with T5-\{Small, Base, Large, XL, XXL\} models; as well as intermediate checkpoints during the course of training T5-Large: \{$20$k, $100$k, $300$k, $500$k, $700$k\}.

\subsection{\askllm Sampling}
See \cref{sec:askllm_details} for technical details about the \askllm sampler. Since \askllm relies on the reasoning capabilities of instruction-tuned models, we use the Flan-T5-\{Small, Base, Large, XL, XXL\} \cite{flanv2} models for obtaining the quality scores in \askllm.

\section{Downstream Evaluation Tasks} \label{appendix:evals}

\subsection{Perplexity} Defined as the exponentiated average negative log-likelihood of an average sequence in the dataset; we compute the perplexity over the default validation set in C4. Note that C4's validation set is a random sample of the dataset, so it is prone to be of much lower quality than curated sources, and hence, a less reliable indicator of true model quality.

\subsection{HQ Perplexity} As our best effort to devise an inexpensive-to-compute metric that is better aligned with model quality than perplexity on C4's validation set, inspired by the evaluation conducted in \citet{tirumala2023d4}, we construct a \emph{high-quality} validation set from non web-scrape sources. We collate the validation sets from (1) English portion of wiki40b \cite{wiki40b}, (2) realnews and webtext subsets of C4, and (3) news commentary from the LM1B dataset \cite{lm1b}.

\subsection{GLUE} A popular natural language understanding meta-benchmark comprising of eleven different tasks \cite{glue}. Note that we report the average score for all individual tasks, after finetuning on the concatenation of all individual tasks' training sets, as is done in the original T5 implementation.

\subsection{SuperGLUE} A harder meta-benchmark (\vs GLUE) built to further test the natural language understanding abilities of language models \cite{superglue}. Similar to GLUE, we report the average score of all tasks, and conduct fine-tuning on all tasks' concatenated train-set.

\subsection{CNN/DM} We use the CNN/DM dataset \cite{cnndm} for testing our models' abstractive summarization abilities. Like the T5 original setting, we finetune on the train-set, and report the ROUGE-2 scores.

\subsection{SQuAD} A popular dataset \cite{squad} used to evaluate question-answering capabilities of language models, we compare the finetuned performance of our models using exact-match as the metric.

\subsection{FLAN Instruction Tuning} A popular application of LLMs has been instruction-following, and chatting capabilities. To test our model's quality on this front, we finetune our models on the FLANv2 dataset \cite{flanv2}, and test the instruction-tuned models' performance from four fronts: 
\begin{itemize}[leftmargin=*]
    \item $5$-shot MMLU \cite{mmlu}: a popular benchmark consiting of exam questions from $57$ tasks.
    \item $3$-shot Big Bench Hard (BBH) \cite{bigbench}: a popular set of $23$ hardest tasks from big bench.
    \item Reasoning: macro-average $8$-shot performance on GSM8k \cite{gsm8k}, SVAMP \cite{svamp}, ASDIV \cite{asdiv}, and StrategyQA \cite{strategyqa} benchmarks.
    \item QA: macro-average $0$-shot performance on UnifiedQA \cite{unifiedqa}, BoolQ \cite{boolq}, Arc-Easy and Arc-Challenge \cite{arc_eval} benchmarks.
    \item Average: macro-average of all the four benchmarking suites listed above: MMLU, BBH, Reasoning, and Q/A.
\end{itemize}
Please note that all of our reported numbers are based on \emph{single checkpoint} evaluations, \ie, we first select the best checkpoint during FLAN finetuning using the \emph{average} performance on all tasks, and report the individual task performance on that checkpoint.

\section{Additional Results} \label{appendix:results}

\subsection{(\cref{fig:score_distribution}) Quality-score Distribution for Different Samplers} 

For different data curation techniques listed in \cref{appendix:samplers}, we examine the distribution of estimated \emph{data-quality} scores normalized in a way that higher represents better data quality.
\begin{itemize}[leftmargin=*]
    \item For the \density sampler, the plotted score is proportional to the likelihood of the example under the kernel density estimate.
    
    \item For the Prototypes sampler, the plotted score represents the negated cosine similarity of data-point with its assigned cluster centroid.
    
    \item For the SemDeDup sampler, the plotted score represents the negated maximum cosine similarity of a datapoint to all other datapoints in its respective cluster.
    
    \item For the perplexity filtering sampler, the plotted score represents the negated perplexity of a training sample.
    
    \item For the \askllm sampler, the plotted score represents the log probability of the token ``\texttt{yes}'' given the prompt in \cref{fig:ask_llm_prompt}.
\end{itemize}

\begin{figure}[t!] 
    \centering
    \includegraphics[width=\linewidth,trim={0 0 0 0},clip]{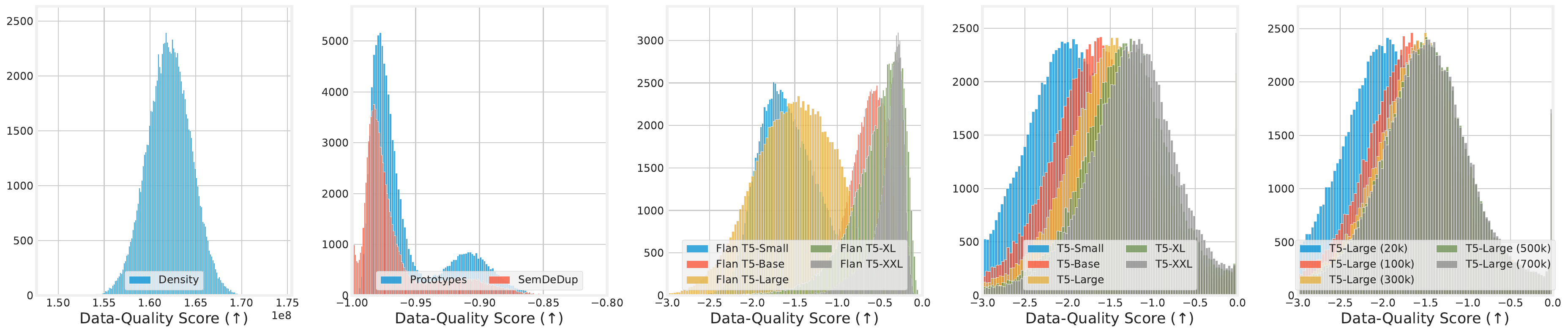} 
    \vspace{-0.2cm}
    \caption{Score distribution of various data curation techniques. The plots for Flan-T5-* models are for \askllm, whereas ones using T5-* models are for perplexity filtering.}
    \vspace{-0.2cm}
    \label{fig:score_distribution}
\end{figure}

\subsection{(\cref{fig:unique_tokens_vs_model_quality_small_0,fig:unique_tokens_vs_model_quality_small_1,fig:unique_tokens_vs_model_quality_small_2,fig:unique_tokens_vs_model_quality_large_0,fig:unique_tokens_vs_model_quality_large_1,fig:unique_tokens_vs_model_quality_large_2,fig:unique_tokens_vs_model_quality_large_3}) Data-quantity \vs Model-quality for Different Samplers} 

For different data curation techniques listed in \cref{appendix:samplers}, we investigate the tradeoff between the sampling rate and the respectively trained model's quality on various downstream evaluations listed in \cref{appendix:evals}. We plot our results in the following figures:
\begin{itemize}[leftmargin=*]
    \item (\cref{fig:unique_tokens_vs_model_quality_small_0}) \textbf{T5-Small, coverage}: Pre-training T5-Small on different amounts of data sampled by \{Random sampling, \density sampling, Self-supervised Prototypes sampling, SemDeDup\}.
    
    \item (\cref{fig:unique_tokens_vs_model_quality_large_0}) \textbf{T5-Large, coverage}: Pre-training T5-Large on different amounts of data sampled by \{Random sampling, \density sampling, Self-supervised Prototypes sampling, SemDeDup\}.
    
    \item (\cref{fig:unique_tokens_vs_model_quality_small_1}) \textbf{T5-Small, \askllm}: Pre-training T5-Small on different amounts of data sampled by \askllm using the \{Flan-T5-Small, Flan-T5-Base, Flan-T5-Large, Flan-T5-XL, Flan-T5-XXL\} scoring models.
    
    \item (\cref{fig:unique_tokens_vs_model_quality_large_1}) \textbf{T5-Large, \askllm}: Pre-training T5-Large on different amounts of data sampled by \askllm using the \{Flan-T5-Small, Flan-T5-Base, Flan-T5-Large, Flan-T5-XL, Flan-T5-XXL\} scoring models.
    
    \item (\cref{fig:unique_tokens_vs_model_quality_small_2}) \textbf{T5-Small, Perplexity filtering}: Pre-training T5-Small on different amounts of data sampled by Perplexity filtering using the \{T5-Small, T5-Base, T5-Large, T5-XL, T5-XXL\} scoring models.
    
    \item (\cref{fig:unique_tokens_vs_model_quality_large_2}) \textbf{T5-Large, Perplexity filtering}: Pre-training T5-Large on different amounts of data sampled by Perplexity filtering using the \{T5-Small, T5-Base, T5-Large, T5-XL, T5-XXL\} scoring models.
    
    \item (\cref{fig:unique_tokens_vs_model_quality_large_3}) \textbf{T5-Large, Perplexity filtering}: Pre-training T5-Large on different amounts of data sampled by Perplexity filtering using the \{$20$k, $100$k, $300$k, $500$k, $700$k\} intermediate checkpoints of T5-Large as data quality scoring models.
\end{itemize}

\begin{figure}[H] 
    \centering
    \includegraphics[width=\linewidth,trim={0 0 0 0},clip]{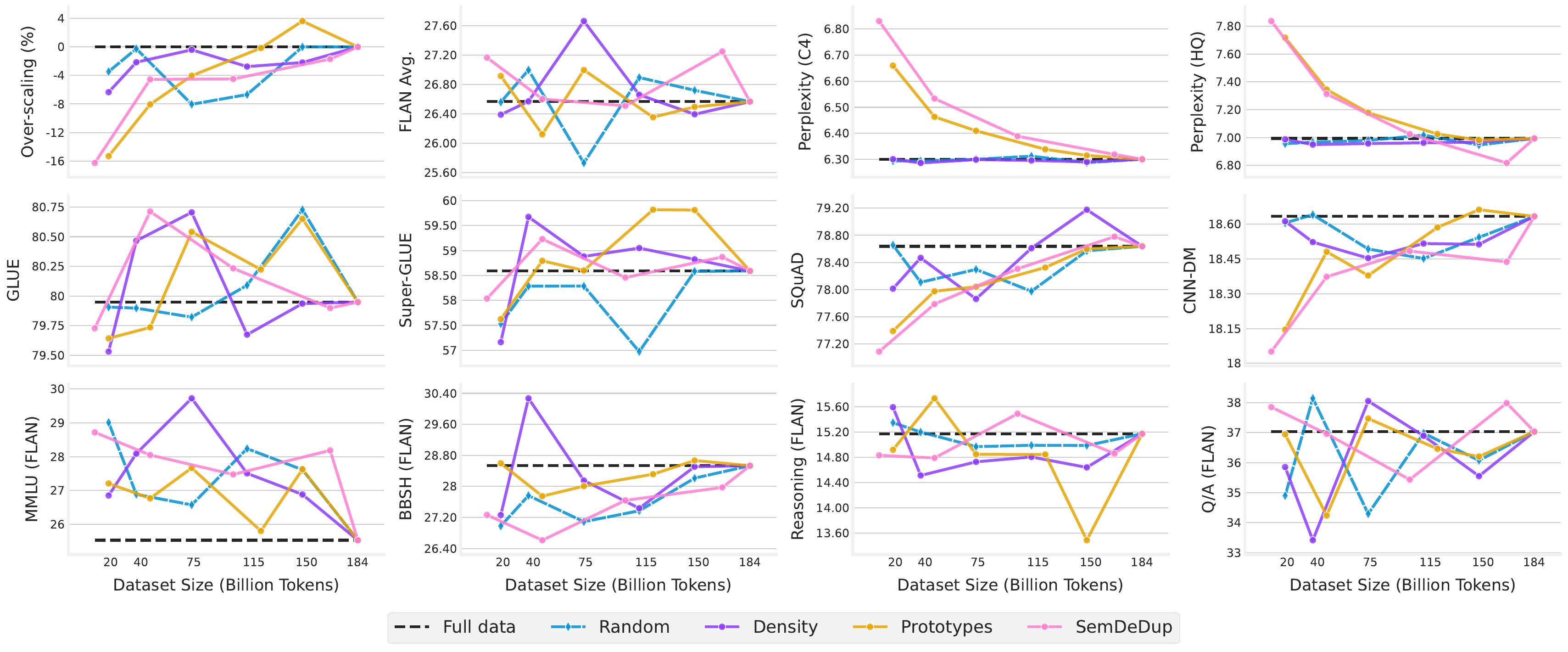} 
    \vspace{-0.2cm}
    \caption{Tradeoff between data quantity and model quality while pre-training T5-Small. Each point in this plot comes from the converged pre-training run over a sampled dataset. See \cref{appendix:evals} for a description about the metrics used in this plot.}
    \vspace{-0.2cm}
    \label{fig:unique_tokens_vs_model_quality_small_0}
\end{figure}

\begin{figure}[H] 
    \centering
    \includegraphics[width=\linewidth,trim={0 0 0 0},clip]{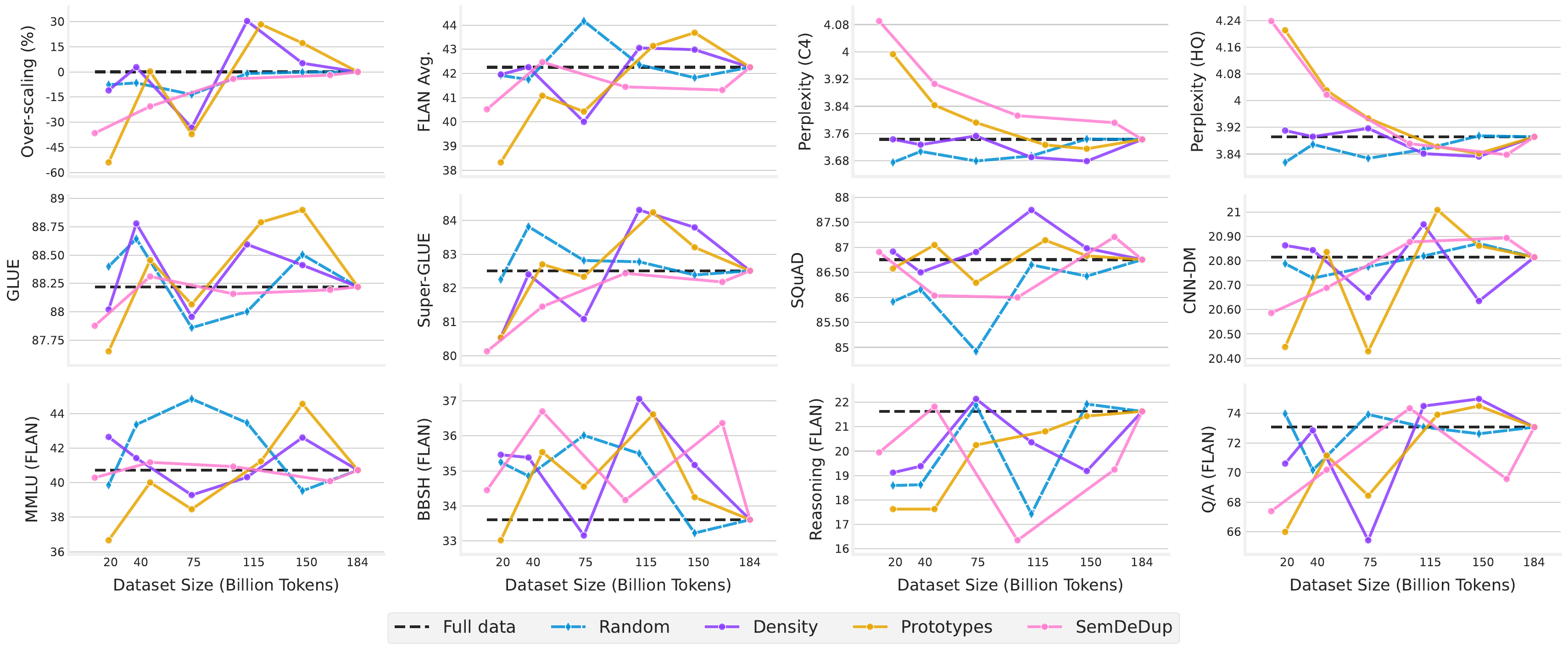} 
    \vspace{-0.2cm}
    \caption{Tradeoff between data quantity and model quality while pre-training T5-Large. Each point in this plot comes from the converged pre-training run over a sampled dataset. See \cref{appendix:evals} for a description about the metrics used in this plot.}
    \vspace{-0.2cm}
    \label{fig:unique_tokens_vs_model_quality_large_0}
\end{figure}

\begin{figure}[H] 
    \centering
    \includegraphics[width=\linewidth,trim={0 0 0 0},clip]{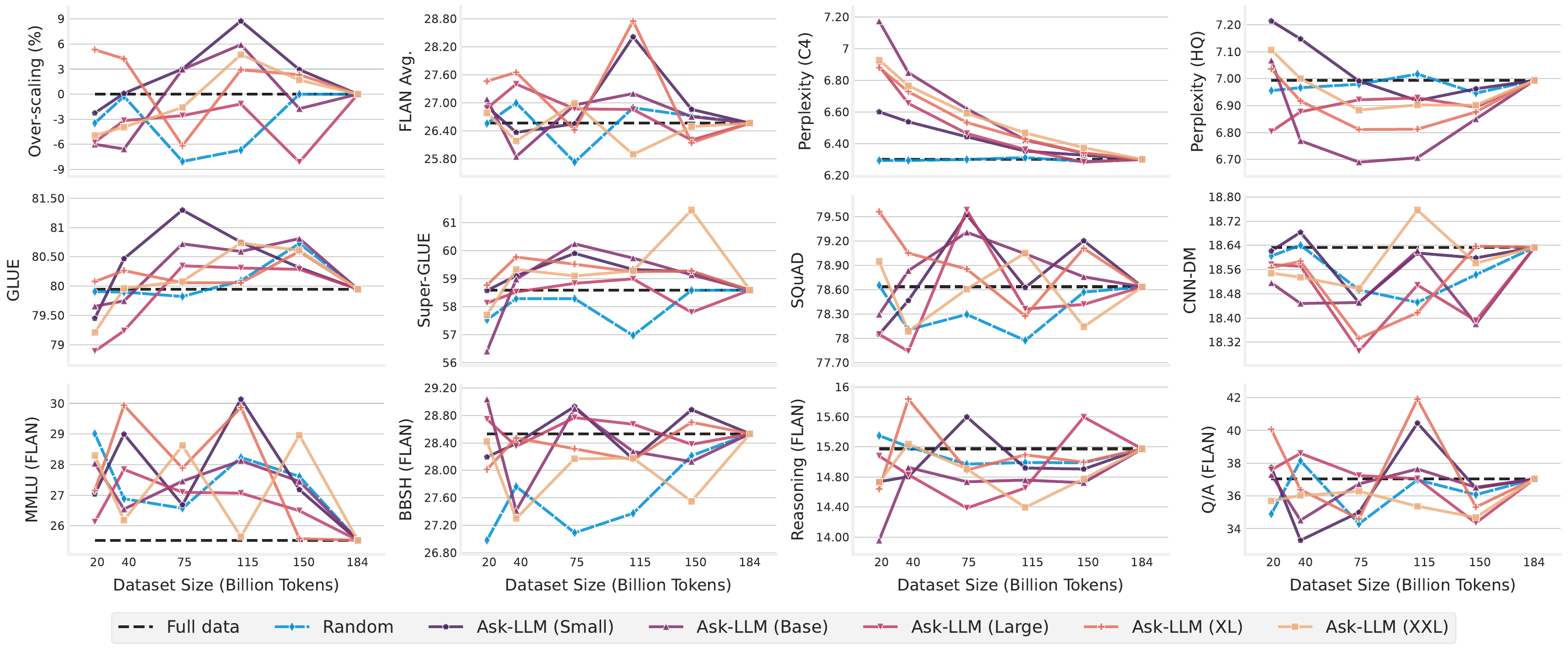} 
    \vspace{-0.2cm}
    \caption{Tradeoff between data quantity and model quality while pre-training T5-Small. Each point in this plot comes from the converged pre-training run over a sampled dataset. See \cref{appendix:evals} for a description about the metrics used in this plot.}
    \vspace{-0.2cm}
    \label{fig:unique_tokens_vs_model_quality_small_1}
\end{figure}

\begin{figure}[H] 
    \centering
    \includegraphics[width=\linewidth,trim={0 0 0 0},clip]{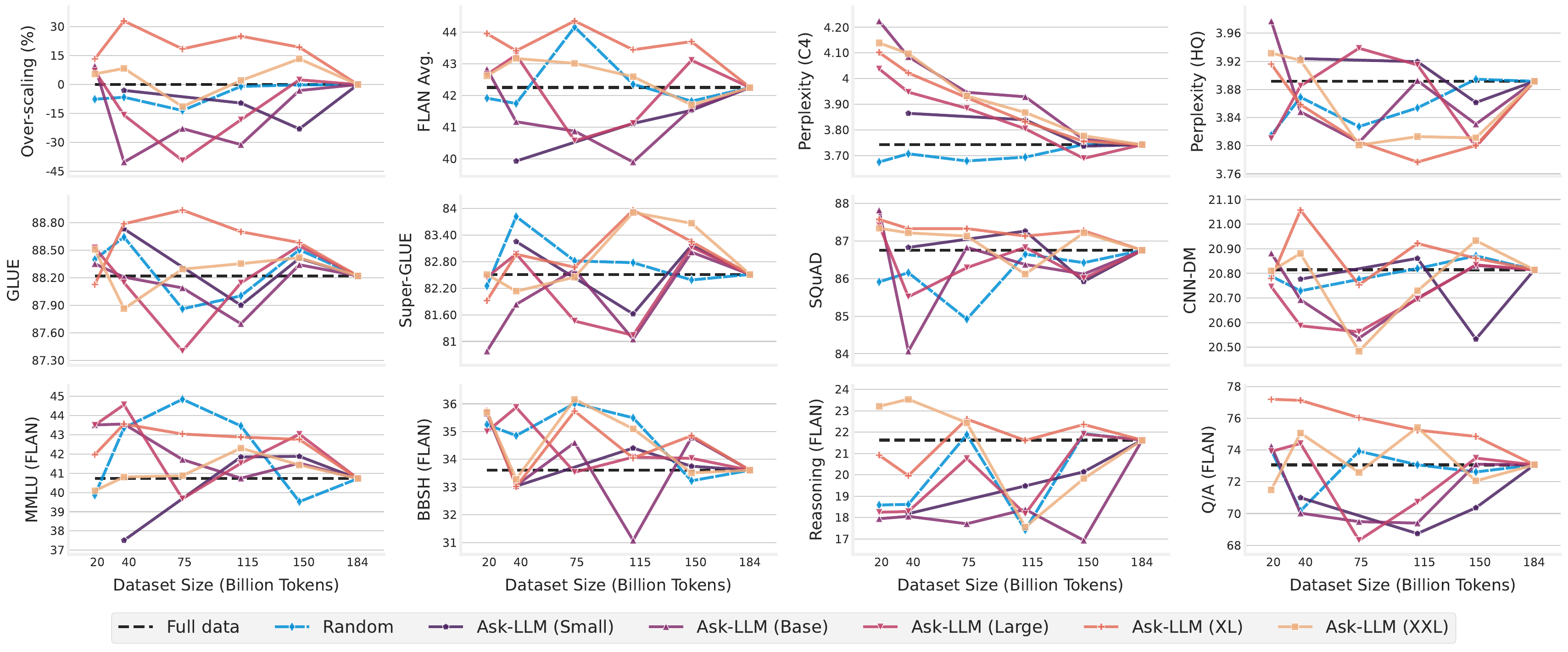} 
    \vspace{-0.2cm}
    \caption{Tradeoff between data quantity and model quality while pre-training T5-Large. Each point in this plot comes from the converged pre-training run over a sampled dataset. See \cref{appendix:evals} for a description about the metrics used in this plot.}
    \vspace{-0.2cm}
    \label{fig:unique_tokens_vs_model_quality_large_1}
\end{figure}

\begin{figure}[H] 
    \centering
    \includegraphics[width=\linewidth,trim={0 0 0 0},clip]{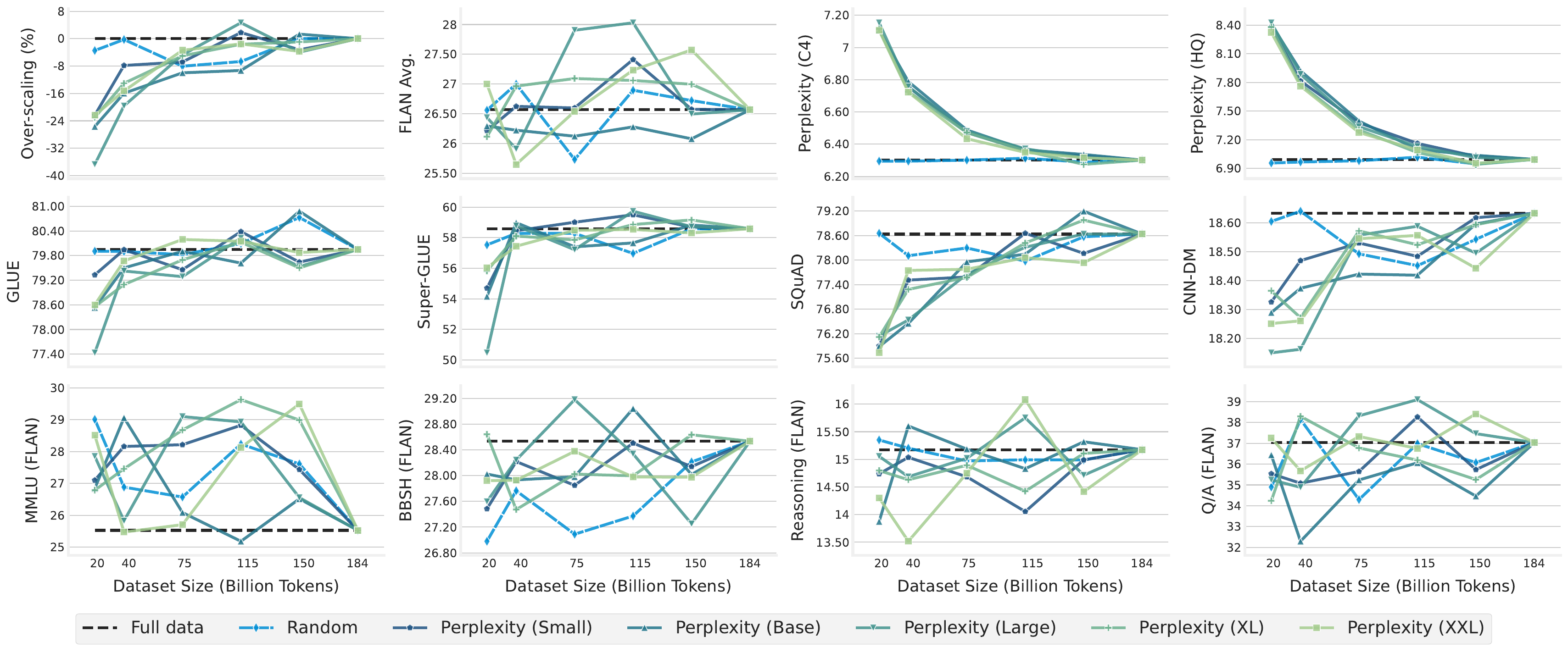} 
    \vspace{-0.2cm}
    \caption{Tradeoff between data quantity and model quality while pre-training T5-Small. Each point in this plot comes from the converged pre-training run over a sampled dataset. See \cref{appendix:evals} for a description about the metrics used in this plot.}
    \vspace{-0.2cm}
    \label{fig:unique_tokens_vs_model_quality_small_2}
\end{figure}

\begin{figure}[H] 
    \centering
    \includegraphics[width=\linewidth,trim={0 0 0 0},clip]{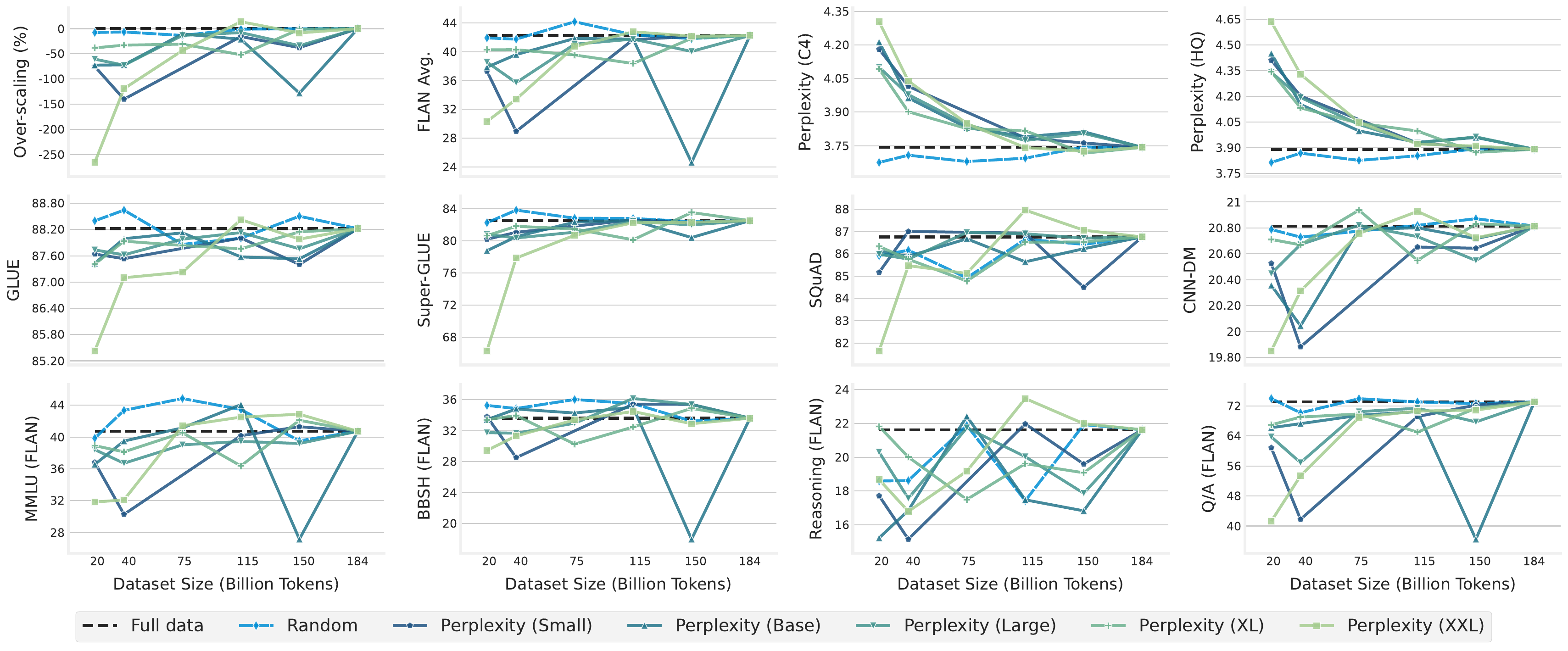} 
    \vspace{-0.2cm}
    \caption{Tradeoff between data quantity and model quality while pre-training T5-Large. Each point in this plot comes from the converged pre-training run over a sampled dataset. See \cref{appendix:evals} for a description about the metrics used in this plot.}
    \vspace{-0.2cm}
    \label{fig:unique_tokens_vs_model_quality_large_2}
\end{figure}

\begin{figure}[H] 
    \centering
    \includegraphics[width=\linewidth,trim={0 0 0 0},clip]{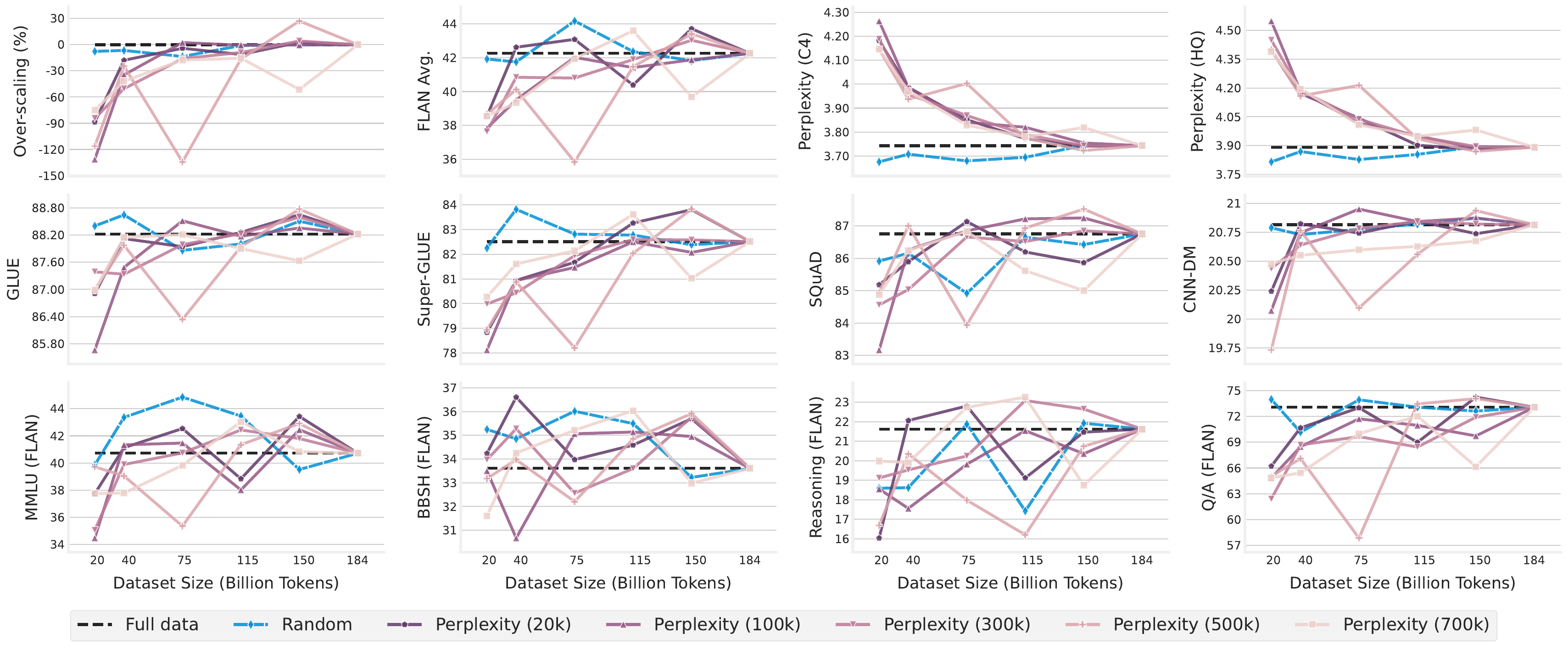} 
    \vspace{-0.2cm}
    \caption{Tradeoff between data quantity and model quality while pre-training T5-Large. Each point in this plot comes from the converged pre-training run over a sampled dataset. See \cref{appendix:evals} for a description about the metrics used in this plot.}
    \vspace{-0.2cm}
    \label{fig:unique_tokens_vs_model_quality_large_3}
\end{figure}

\subsection{(\cref{fig:training_tokens_vs_quality_per_max_epoch_small_0,fig:training_tokens_vs_quality_per_max_epoch_small_1,fig:training_tokens_vs_quality_per_max_epoch_small_2,fig:training_tokens_vs_quality_per_max_epoch_large_0,fig:training_tokens_vs_quality_per_max_epoch_large_1,fig:training_tokens_vs_quality_per_max_epoch_large_2,fig:training_tokens_vs_quality_per_max_epoch_large_3}) Quality of Fresh \vs Repeated Tokens for Different Samplers} 

We investigate the data-efficiency 
for different data curation techniques listed in \cref{appendix:samplers} over various downstream evaluations listed in \cref{appendix:evals}, when stratifying by the maximum number of repetitions allowed over the sampled dataset.
We plot our results in the following figures:
\begin{itemize}[leftmargin=*]
    \item (\cref{fig:training_tokens_vs_quality_per_max_epoch_small_0}) \textbf{T5-Small, coverage}: Average data-efficiency of pre-training T5-Small on data sampled by \{Random sampling, \density sampling, Self-supervised Prototypes sampling, SemDeDup\}, stratified by the maxmimum number of allowed repetitions over the sampled dataset.
    
    \item (\cref{fig:training_tokens_vs_quality_per_max_epoch_large_0}) \textbf{T5-Large, coverage}: Average data-efficiency of pre-training T5-Large on data sampled by \{Random sampling, \density sampling, Self-supervised Prototypes sampling, SemDeDup\}, stratified by the maxmimum number of allowed repetitions over the sampled dataset.
    
    \item (\cref{fig:training_tokens_vs_quality_per_max_epoch_small_1}) \textbf{T5-Small, \askllm}: Average data-efficiency of pre-training T5-Small on data sampled by \askllm using the \{Flan-T5-Small, Flan-T5-Base, Flan-T5-Large, Flan-T5-XL, Flan-T5-XXL\} scoring models, stratified by the maxmimum number of allowed repetitions over the sampled dataset.
    
    \item (\cref{fig:training_tokens_vs_quality_per_max_epoch_large_1}) \textbf{T5-Large, \askllm}: Average data-efficiency of pre-training T5-Large on data sampled by \askllm using the \{Flan-T5-Small, Flan-T5-Base, Flan-T5-Large, Flan-T5-XL, Flan-T5-XXL\} scoring models, stratified by the maxmimum number of allowed repetitions over the sampled dataset.
    
    \item (\cref{fig:training_tokens_vs_quality_per_max_epoch_small_2}) \textbf{T5-Small, Perplexity filtering}: Average data-efficiency of pre-training T5-Small on data sampled by Perplexity filtering using the \{T5-Small, T5-Base, T5-Large, T5-XL, T5-XXL\} scoring models, stratified by the maxmimum number of allowed repetitions over the sampled dataset.
    
    \item (\cref{fig:training_tokens_vs_quality_per_max_epoch_large_2}) \textbf{T5-Large, Perplexity filtering}: Average data-efficiency of pre-training T5-Large on data sampled by Perplexity filtering using the \{T5-Small, T5-Base, T5-Large, T5-XL, T5-XXL\} scoring models, stratified by the maxmimum number of allowed repetitions over the sampled dataset.
    
    \item (\cref{fig:training_tokens_vs_quality_per_max_epoch_large_3}) \textbf{T5-Large, Perplexity filtering}: Average data-efficiency of pre-training T5-Large on data sampled by Perplexity filtering using the \{$20$k, $100$k, $300$k, $500$k, $700$k\} intermediate checkpoints of T5-Large as data quality scoring models, stratified by the maxmimum number of allowed repetitions over the sampled dataset.
\end{itemize}

\begin{figure}[H] 
    \centering
    \includegraphics[width=\linewidth,trim={0 0 0 0},clip]{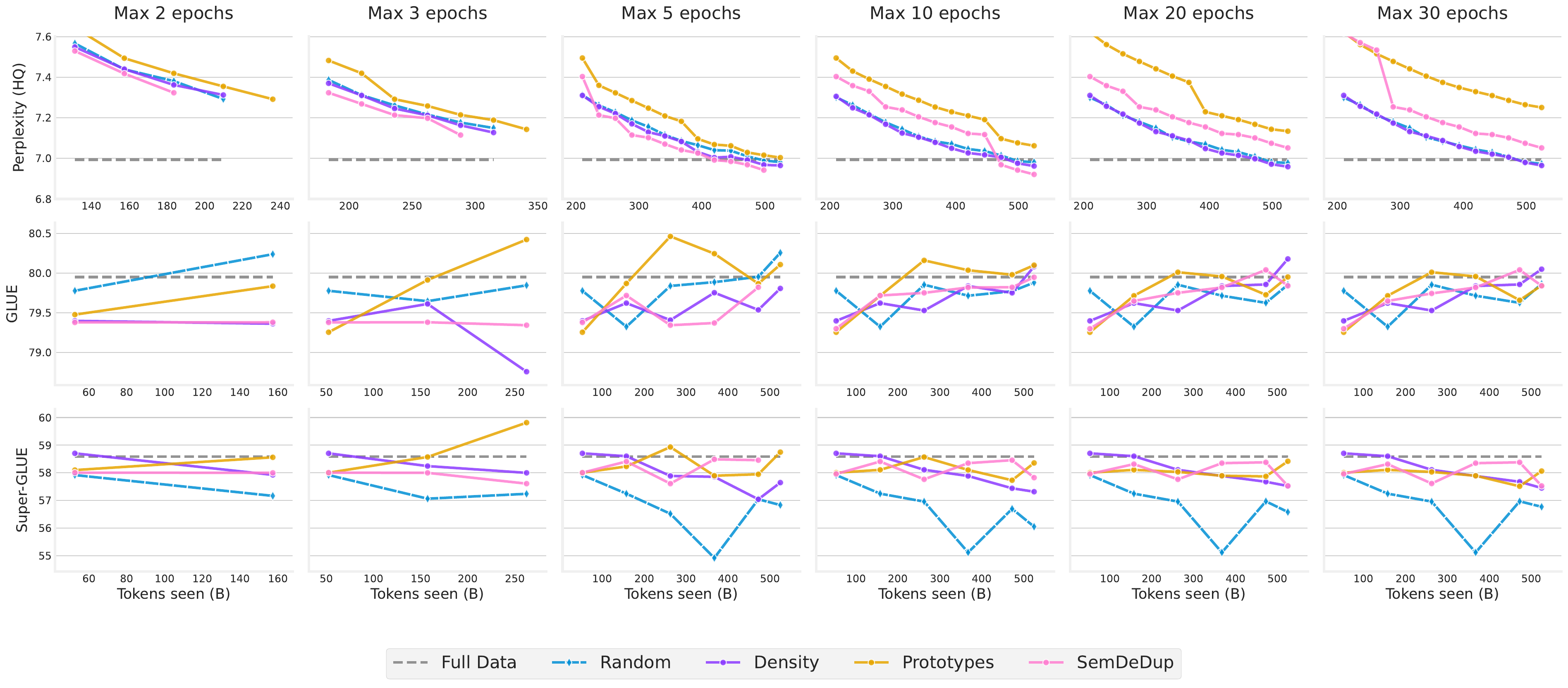} 
    \vspace{-0.2cm}
    \caption{
    Average data-efficiency of pre-training T5-Small on sampled data, stratified by maximum number of allowed repetitions on the sampled dataset. Each point in this plot represents the performance of an intermediate checkpoint \emph{averaged} over all sampling ratios, as long as the maximum allowed repetitions have not been reached. See \cref{appendix:evals} for a description about the metrics used in this plot.
    }
    \vspace{-0.2cm}
    \label{fig:training_tokens_vs_quality_per_max_epoch_small_0}
\end{figure}

\begin{figure}[H] 
    \centering
    \includegraphics[width=\linewidth,trim={0 0 0 0},clip]{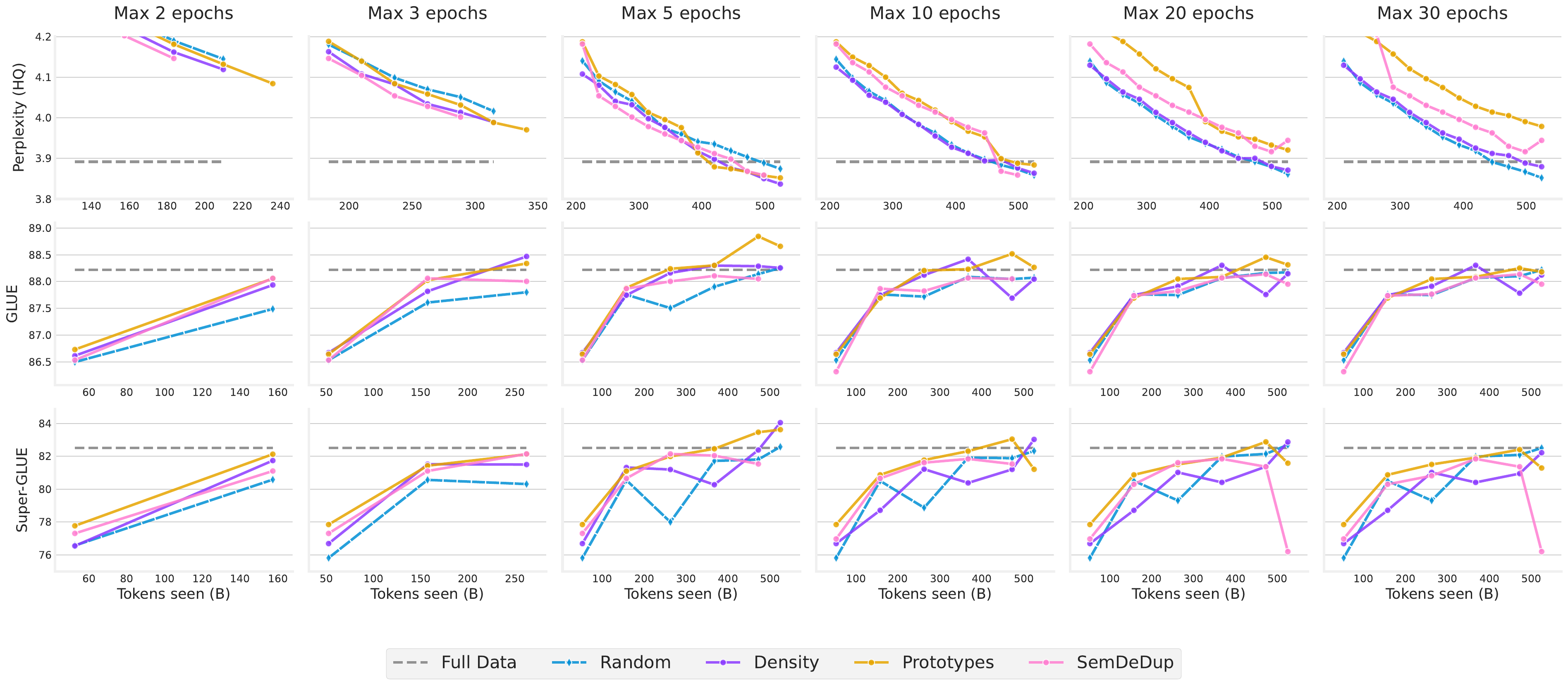} 
    \vspace{-0.2cm}
    \caption{Average data-efficiency of pre-training T5-Large on sampled data, stratified by maximum number of allowed repetitions on the sampled dataset. Each point in this plot represents the performance of an intermediate checkpoint \emph{averaged} over all sampling ratios, as long as the maximum allowed repetitions have not been reached. See \cref{appendix:evals} for a description about the metrics used in this plot.}
    \vspace{-0.2cm}
    \label{fig:training_tokens_vs_quality_per_max_epoch_large_0}
\end{figure}

\begin{figure}[H] 
    \centering
    \includegraphics[width=\linewidth,trim={0 0 0 0},clip]{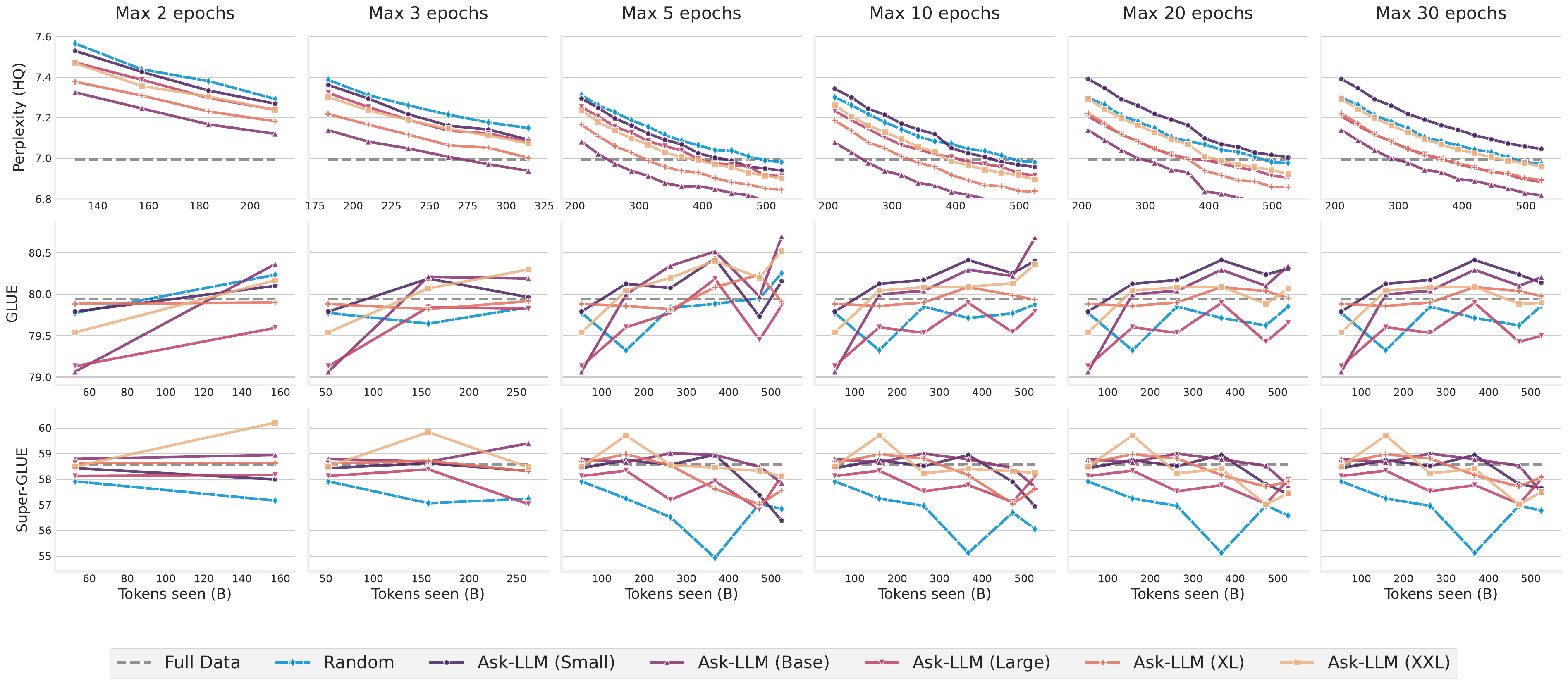} 
    \vspace{-0.2cm}
    \caption{Average data-efficiency of pre-training T5-Small on sampled data, stratified by maximum number of allowed repetitions on the sampled dataset. Each point in this plot represents the performance of an intermediate checkpoint \emph{averaged} over all sampling ratios, as long as the maximum allowed repetitions have not been reached. See \cref{appendix:evals} for a description about the metrics used in this plot.}
    \vspace{-0.2cm}
    \label{fig:training_tokens_vs_quality_per_max_epoch_small_1}
\end{figure}

\begin{figure}[H] 
    \centering
    \includegraphics[width=\linewidth,trim={0 0 0 0},clip]{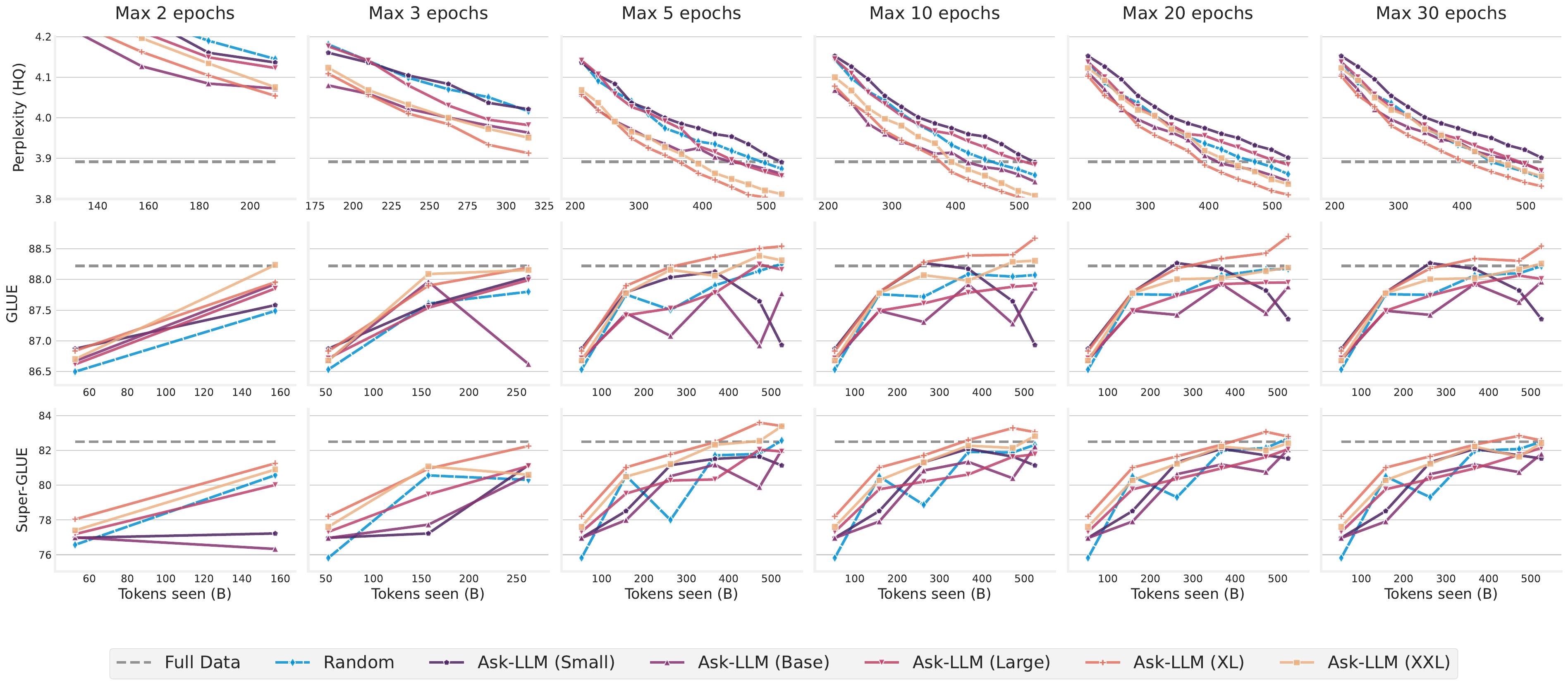} 
    \vspace{-0.2cm}
    \caption{Average data-efficiency of pre-training T5-Large on sampled data, stratified by maximum number of allowed repetitions on the sampled dataset. Each point in this plot represents the performance of an intermediate checkpoint \emph{averaged} over all sampling ratios, as long as the maximum allowed repetitions have not been reached. See \cref{appendix:evals} for a description about the metrics used in this plot.}
    \vspace{-0.2cm}
    \label{fig:training_tokens_vs_quality_per_max_epoch_large_1}
\end{figure}

\begin{figure}[H] 
    \centering
    \includegraphics[width=\linewidth,trim={0 0 0 0},clip]{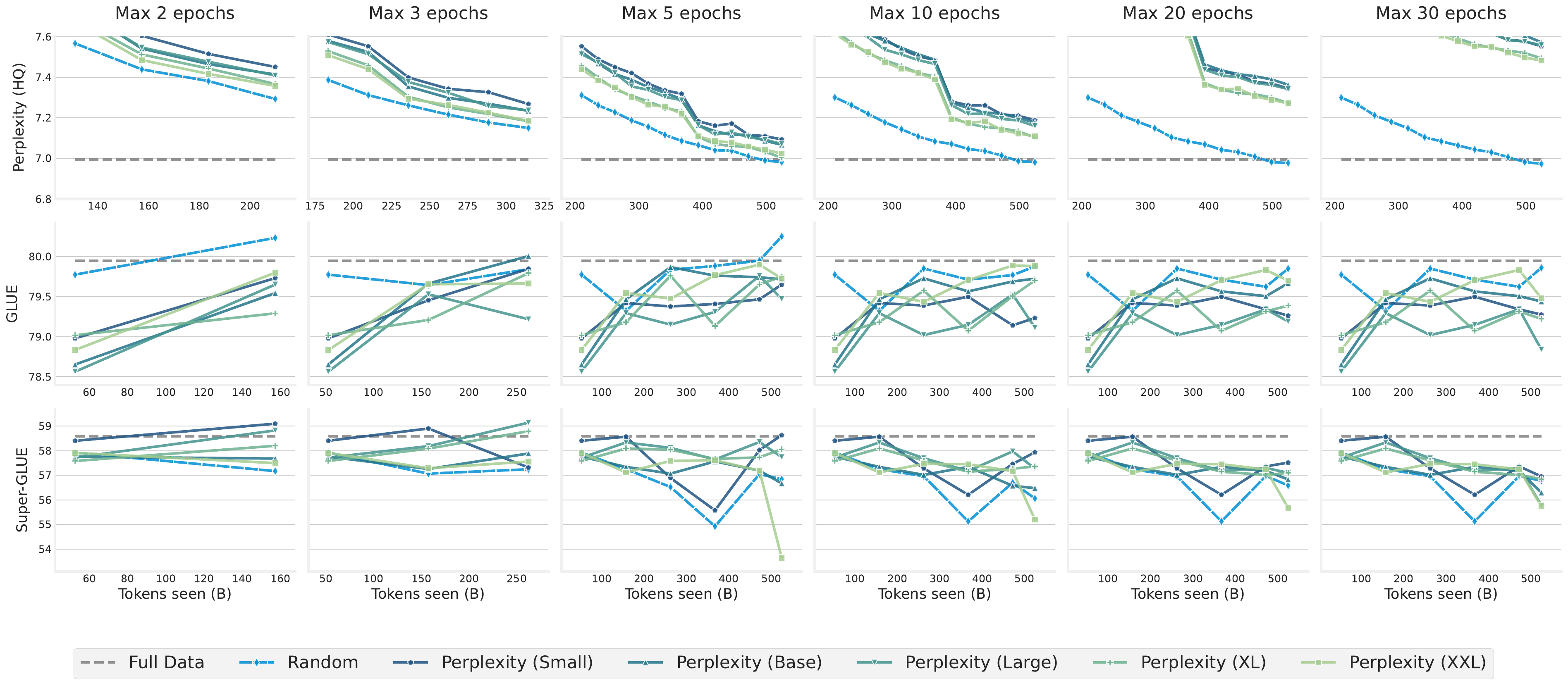} 
    \vspace{-0.2cm}
    \caption{Average data-efficiency of pre-training T5-Small on sampled data, stratified by maximum number of allowed repetitions on the sampled dataset. Each point in this plot represents the performance of an intermediate checkpoint \emph{averaged} over all sampling ratios, as long as the maximum allowed repetitions have not been reached. See \cref{appendix:evals} for a description about the metrics used in this plot.}
    \vspace{-0.2cm}
    \label{fig:training_tokens_vs_quality_per_max_epoch_small_2}
\end{figure}

\begin{figure}[H] 
    \centering
    \includegraphics[width=\linewidth,trim={0 0 0 0},clip]{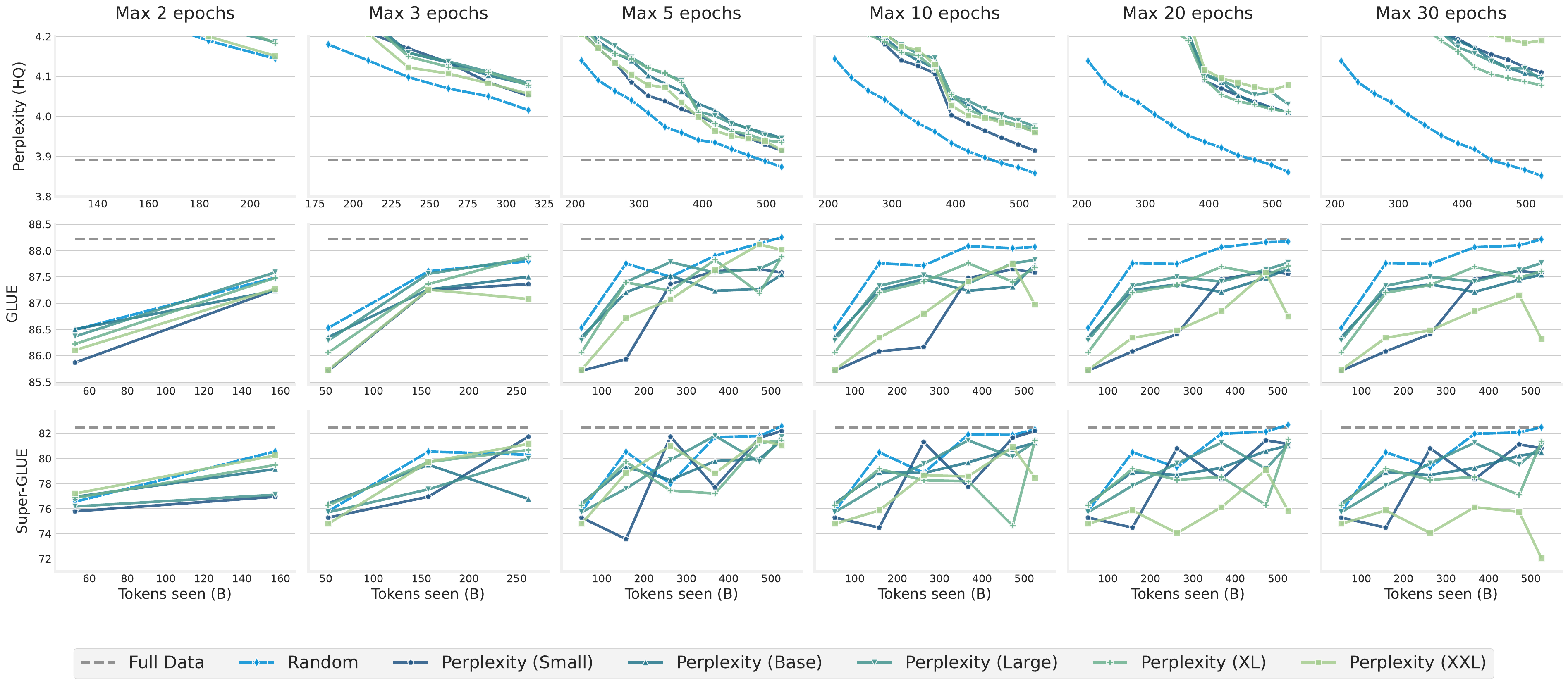} 
    \vspace{-0.2cm}
    \caption{Average data-efficiency of pre-training T5-Large on sampled data, stratified by maximum number of allowed repetitions on the sampled dataset. Each point in this plot represents the performance of an intermediate checkpoint \emph{averaged} over all sampling ratios, as long as the maximum allowed repetitions have not been reached. See \cref{appendix:evals} for a description about the metrics used in this plot.}
    \vspace{-0.2cm}
    \label{fig:training_tokens_vs_quality_per_max_epoch_large_2}
\end{figure}

\begin{figure}[H] 
    \centering
    \includegraphics[width=\linewidth,trim={0 0 0 0},clip]{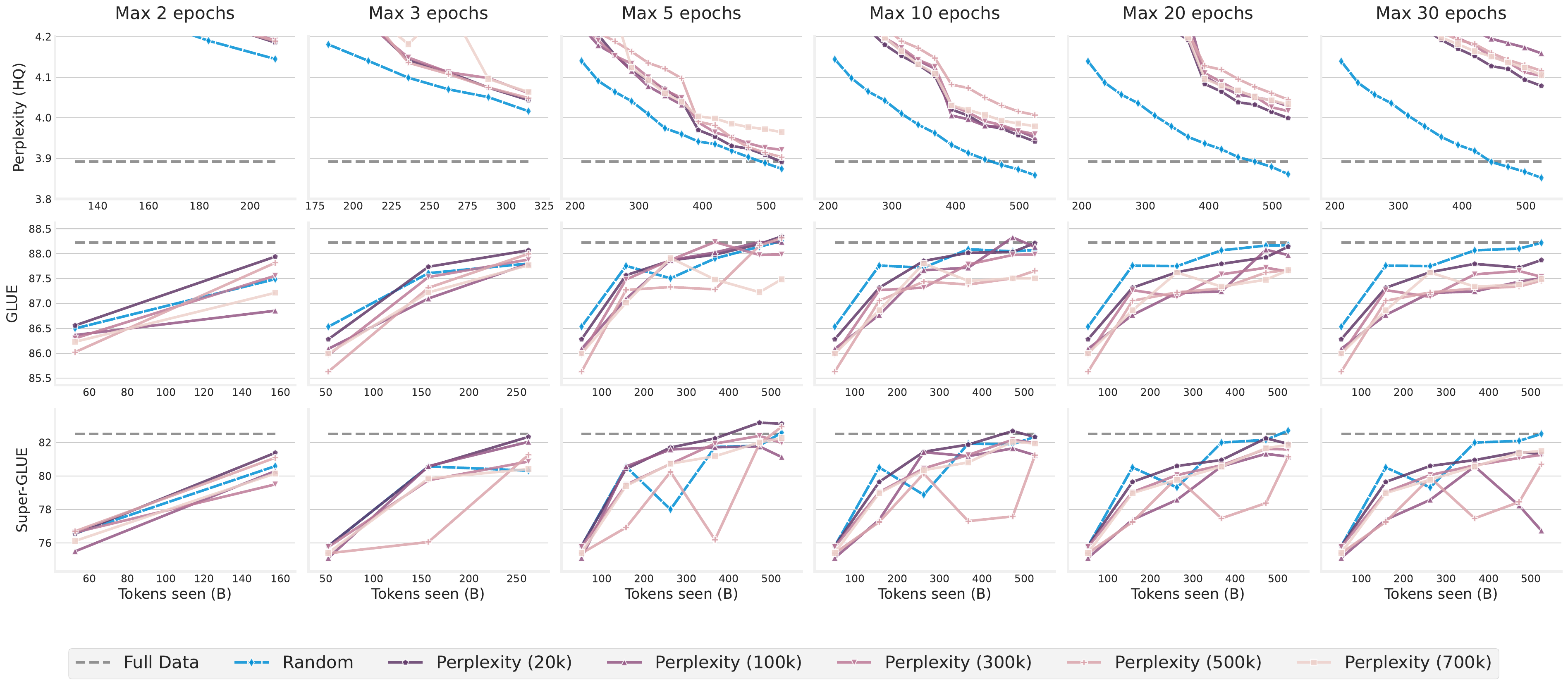} 
    \vspace{-0.2cm}
    \caption{Average data-efficiency of pre-training T5-Large on sampled data, stratified by maximum number of allowed repetitions on the sampled dataset. Each point in this plot represents the performance of an intermediate checkpoint \emph{averaged} over all sampling ratios, as long as the maximum allowed repetitions have not been reached. See \cref{appendix:evals} for a description about the metrics used in this plot.}
    \vspace{-0.2cm}
    \label{fig:training_tokens_vs_quality_per_max_epoch_large_3}
\end{figure}

\subsection{(\cref{fig:training_tokens_vs_model_quality_small_0,fig:training_tokens_vs_model_quality_small_1,fig:training_tokens_vs_model_quality_small_2,fig:training_tokens_vs_model_quality_large_0,fig:training_tokens_vs_model_quality_large_1,fig:training_tokens_vs_model_quality_large_2,fig:training_tokens_vs_model_quality_large_3}) Data-efficiency of Different Samplers} 

We investigate the data-efficiency 
for different data curation techniques listed in \cref{appendix:samplers} over various downstream evaluations listed in \cref{appendix:evals}, when stratifying by the sampling ratio \emph{or} the size of the sampled dataset.
We plot our results in the following figures:

\begin{itemize}[leftmargin=*]
    \item (\cref{fig:training_tokens_vs_model_quality_small_0}) \textbf{T5-Small, coverage}: Data-efficiency of pre-training T5-Small on data sampled by \{Random sampling, \density sampling, Self-supervised Prototypes sampling, SemDeDup\}, stratified by the sampling ratio.
    
    \item (\cref{fig:training_tokens_vs_model_quality_large_0}) \textbf{T5-Large, coverage}: Data-efficiency of pre-training T5-Large on data sampled by \{Random sampling, \density sampling, Self-supervised Prototypes sampling, SemDeDup\}, stratified by the sampling ratio.
    
    \item (\cref{fig:training_tokens_vs_model_quality_small_1}) \textbf{T5-Small, \askllm}: Data-efficiency of pre-training T5-Small on data sampled by \askllm using the \{Flan-T5-Small, Flan-T5-Base, Flan-T5-Large, Flan-T5-XL, Flan-T5-XXL\} scoring models, stratified by the sampling ratio.
    
    \item (\cref{fig:training_tokens_vs_model_quality_large_1}) \textbf{T5-Large, \askllm}: Data-efficiency of pre-training T5-Large on data sampled by \askllm using the \{Flan-T5-Small, Flan-T5-Base, Flan-T5-Large, Flan-T5-XL, Flan-T5-XXL\} scoring models, stratified by the sampling ratio.
    
    \item (\cref{fig:training_tokens_vs_model_quality_small_2}) \textbf{T5-Small, Perplexity filtering}: Data-efficiency of pre-training T5-Small on data sampled by Perplexity filtering using the \{T5-Small, T5-Base, T5-Large, T5-XL, T5-XXL\} scoring models, stratified by the sampling ratio.
    
    \item (\cref{fig:training_tokens_vs_model_quality_large_2}) \textbf{T5-Large, Perplexity filtering}: Data-efficiency of pre-training T5-Large on data sampled by Perplexity filtering using the \{T5-Small, T5-Base, T5-Large, T5-XL, T5-XXL\} scoring models, stratified by the sampling ratio.
    
    \item (\cref{fig:training_tokens_vs_model_quality_large_3}) \textbf{T5-Large, Perplexity filtering}: Data-efficiency of pre-training T5-Large on data sampled by Perplexity filtering using the \{$20$k, $100$k, $300$k, $500$k, $700$k\} intermediate checkpoints of T5-Large as data quality scoring models, stratified by the sampling ratio.
\end{itemize}

\begin{figure}[H] 
    \centering
    \includegraphics[width=\linewidth,trim={0 0 0 0},clip]{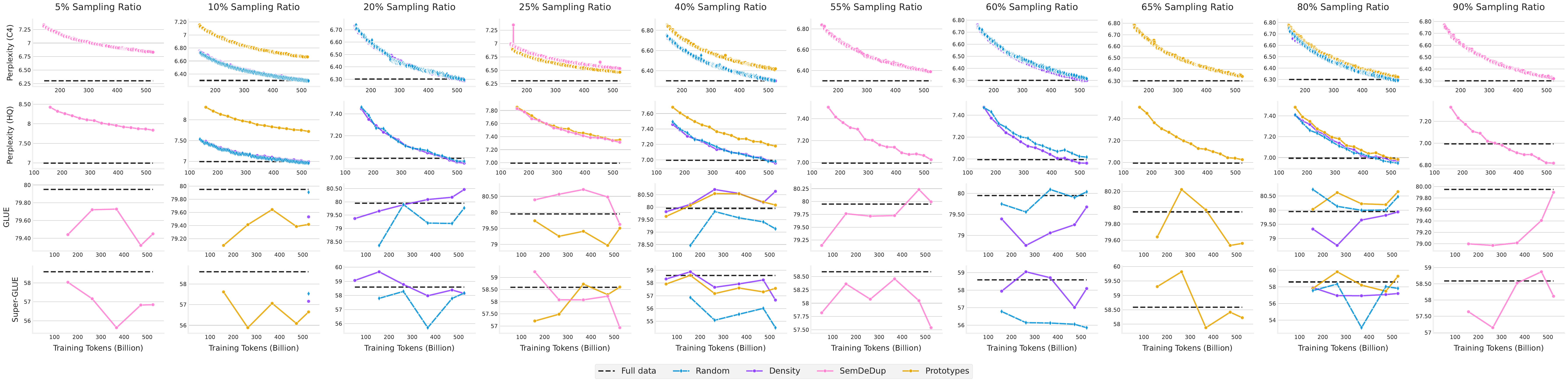} 
    \vspace{-0.2cm}
    \caption{
    Data efficiency comparison of different samplers while training T5-Small for various sampling ratios. Each point in this plot is the performance of an intermediate checkpoint during the course of training on sampled data.
    }
    \vspace{-0.2cm}
    \label{fig:training_tokens_vs_model_quality_small_0}
\end{figure}

\begin{figure}[H] 
    \centering
    \includegraphics[width=\linewidth,trim={0 0 0 0},clip]{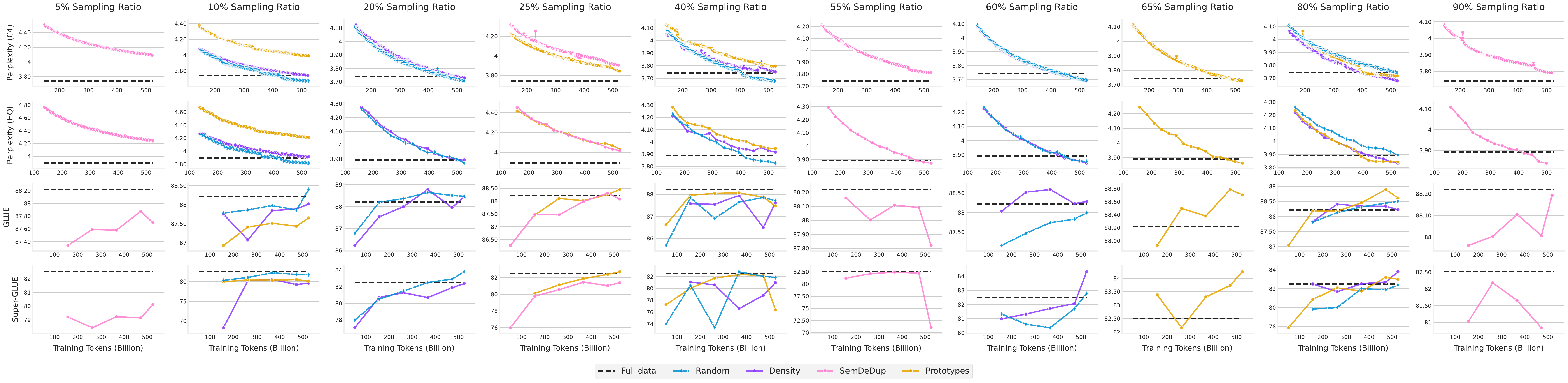} 
    \vspace{-0.2cm}
    \caption{Data efficiency comparison of different samplers while training T5-Large for various sampling ratios. Each point in this plot is the performance of an intermediate checkpoint during the course of training on sampled data.}
    \vspace{-0.2cm}
    \label{fig:training_tokens_vs_model_quality_large_0}
\end{figure}

\newpage

\begin{figure}[H] 
    \centering
    \includegraphics[width=\linewidth,trim={0 0 0 0},clip]{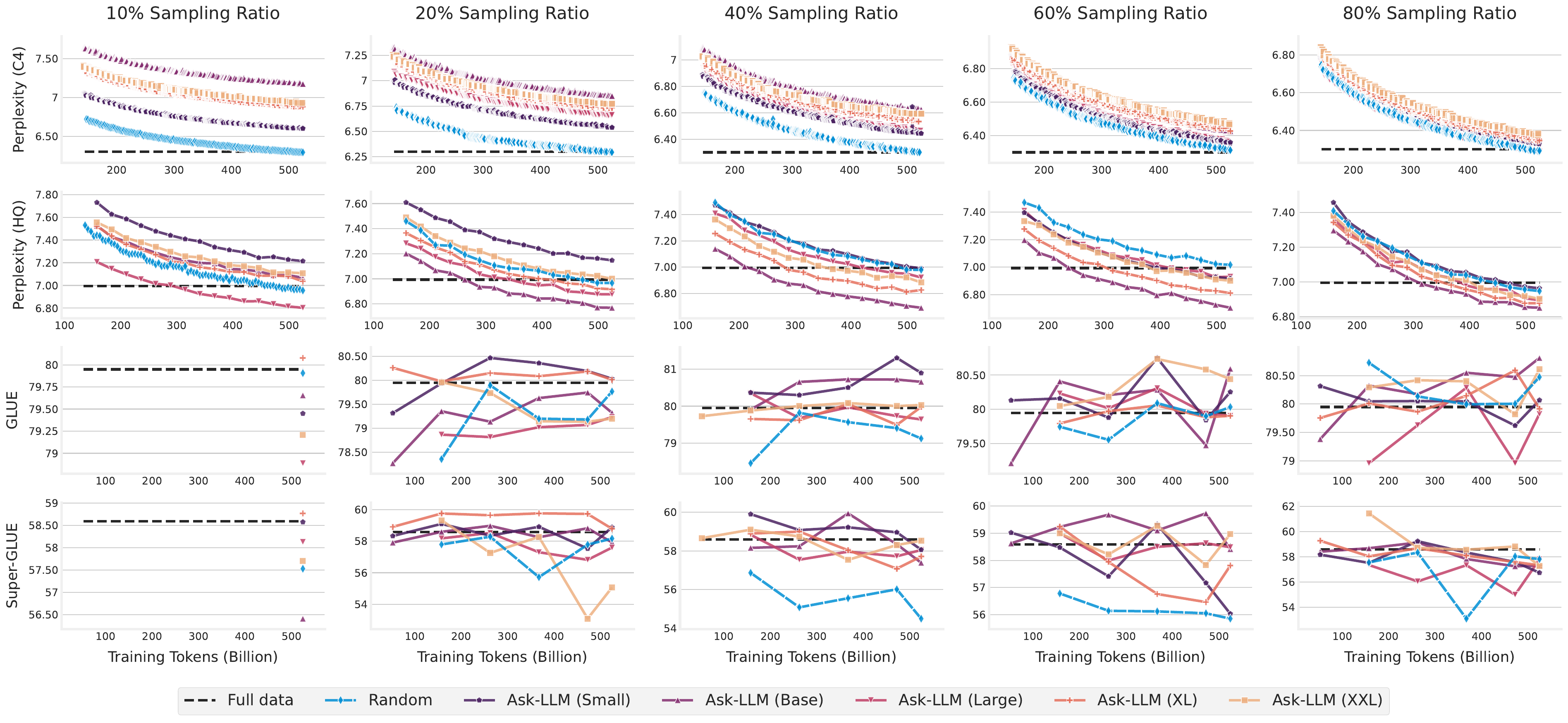} 
    \vspace{-0.2cm}
    \caption{Data efficiency comparison of different samplers while training T5-Small for various sampling ratios. Each point in this plot is the performance of an intermediate checkpoint during the course of training on sampled data.}
    \vspace{-0.2cm}
    \label{fig:training_tokens_vs_model_quality_small_1}
\end{figure}

\begin{figure}[H] 
    \centering
    \includegraphics[width=\linewidth,trim={0 0 0 0},clip]{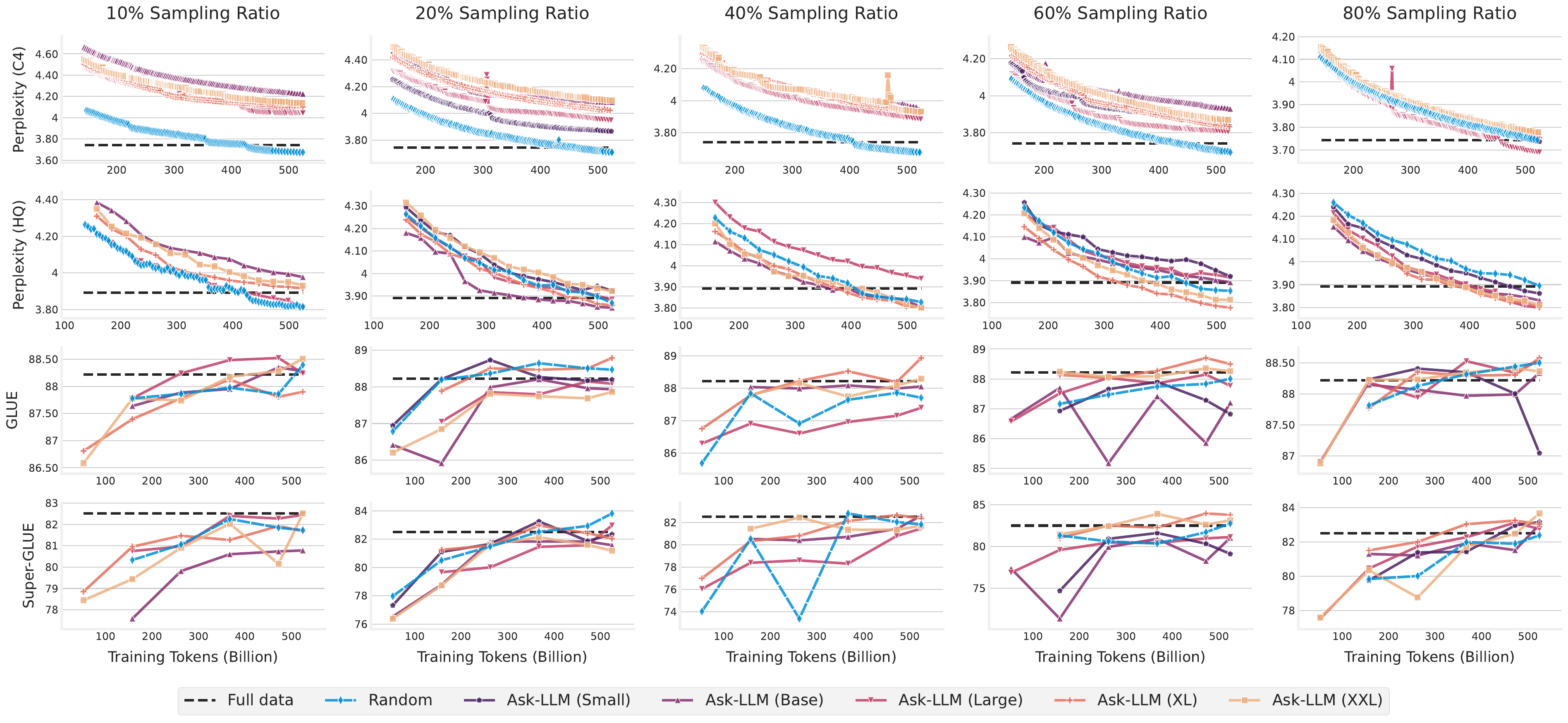} 
    \vspace{-0.2cm}
    \caption{Data efficiency comparison of different samplers while training T5-Large for various sampling ratios. Each point in this plot is the performance of an intermediate checkpoint during the course of training on sampled data.}
    \vspace{-0.2cm}
    \label{fig:training_tokens_vs_model_quality_large_1}
\end{figure}

\begin{figure}[H] 
    \centering
    \includegraphics[width=\linewidth,trim={0 0 0 0},clip]{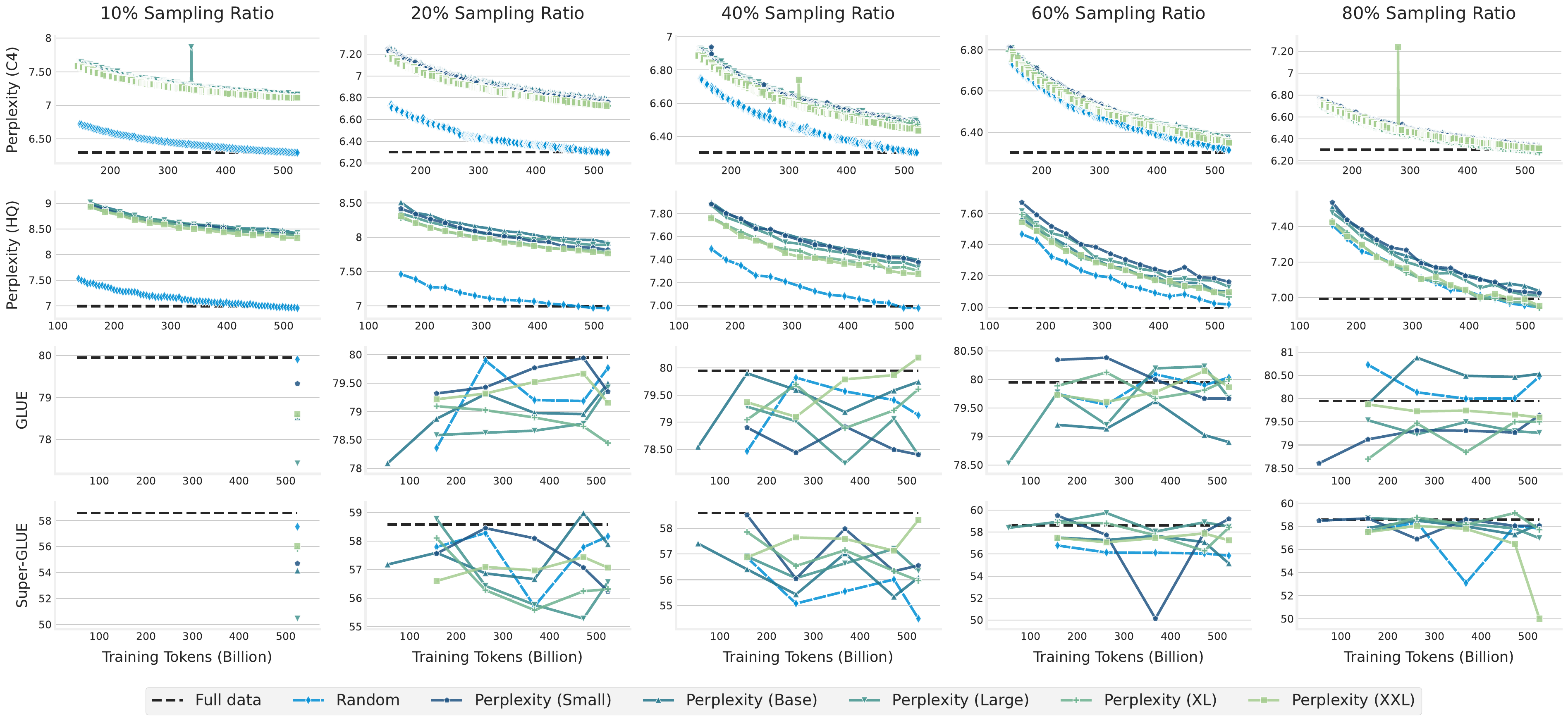} 
    \vspace{-0.2cm}
    \caption{Data efficiency comparison of different samplers while training T5-Small for various sampling ratios. Each point in this plot is the performance of an intermediate checkpoint during the course of training on sampled data.}
    \vspace{-0.2cm}
    \label{fig:training_tokens_vs_model_quality_small_2}
\end{figure}

\begin{figure}[H] 
    \centering
    \includegraphics[width=\linewidth,trim={0 0 0 0},clip]{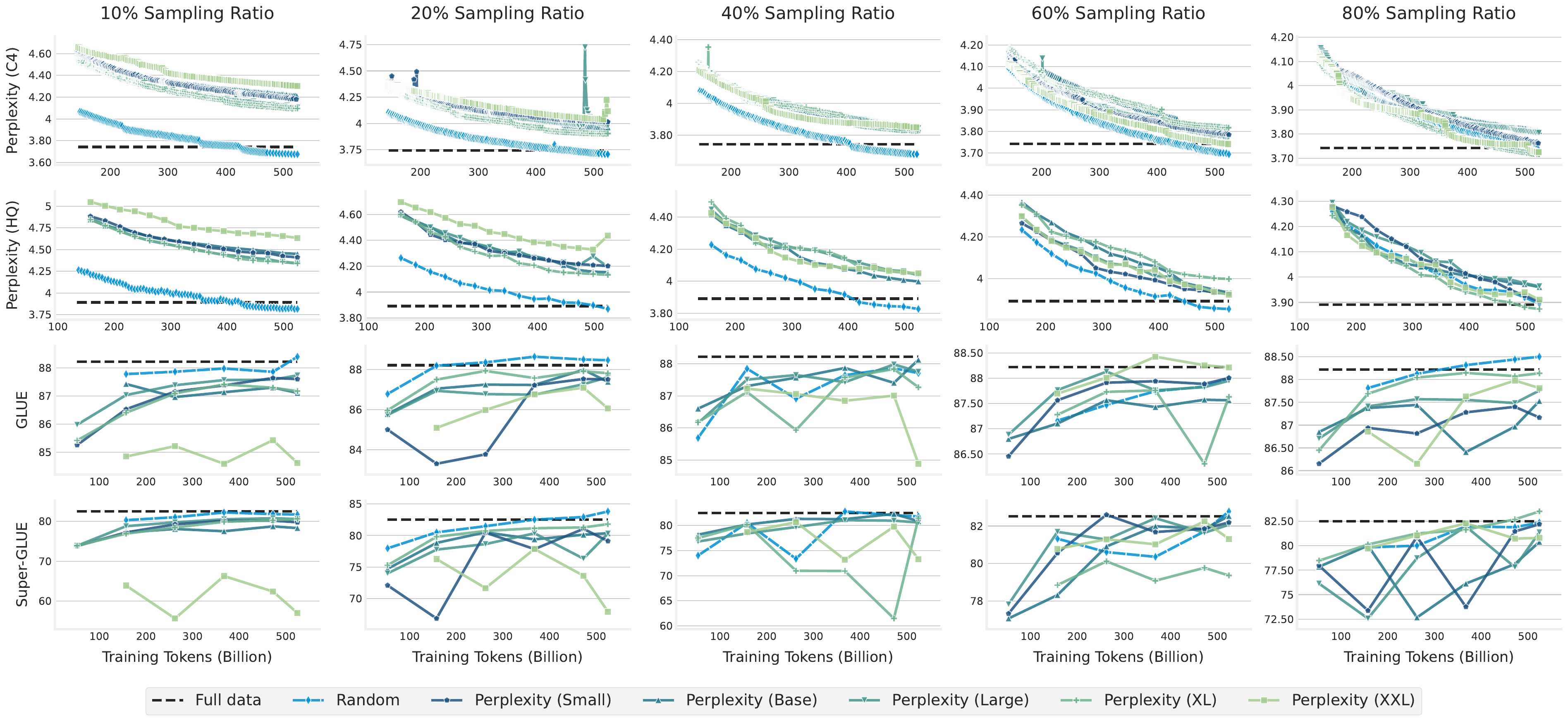} 
    \vspace{-0.2cm}
    \caption{Data efficiency comparison of different samplers while training T5-Large for various sampling ratios. Each point in this plot is the performance of an intermediate checkpoint during the course of training on sampled data.}
    \vspace{-0.2cm}
    \label{fig:training_tokens_vs_model_quality_large_2}
\end{figure}

\begin{figure}[H] 
    \centering
    \includegraphics[width=\linewidth,trim={0 0 0 0},clip]{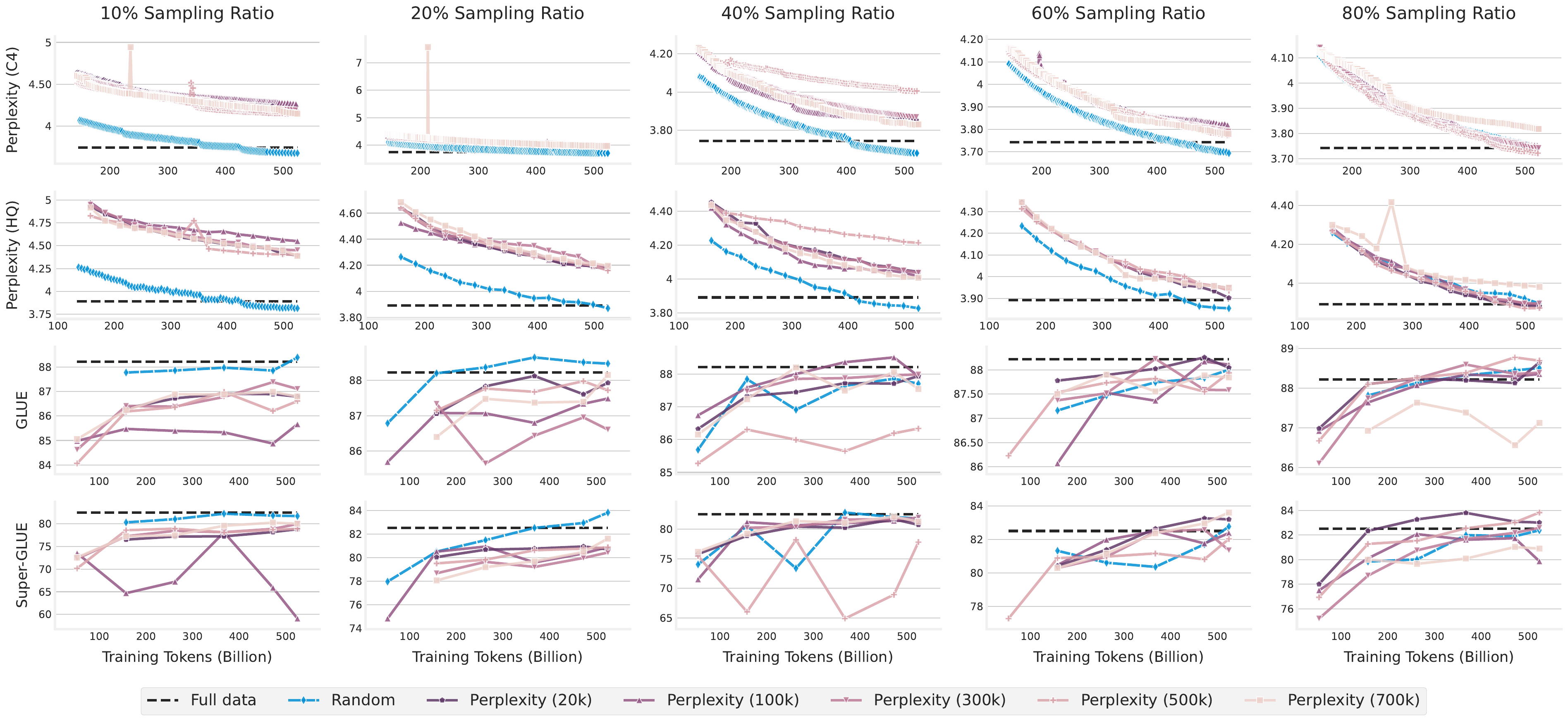} 
    \vspace{-0.6cm}
    \caption{Data efficiency comparison of different samplers while training T5-Large for various sampling ratios. Each point in this plot is the performance of an intermediate checkpoint during the course of training on sampled data.}
    \vspace{-0.3cm}
    \label{fig:training_tokens_vs_model_quality_large_3}
\end{figure}

\section{Qualitative Results} \label{appendix:qualitative_results}

In this section we look at some qualitative training samples, sorted according to various criteria of data-quality scores. Along with the textual content of each training sample, we also list the estimated data-quality percentile for \askllm and perplexity filtering samplers, \ie, the percentile of the given data-point's quality score amongst the entire training set. A high percentile represents that the sampler estimates this training sample to have higher quality compared to other training samples in the dataset. We manually don't include any NSFW examples to the best of our knowledge.

\subsection{High-quality Samples Identified by \askllm}
\label{appendix:ask_llm_high_quality}

We look at the training samples that \emph{all} \askllm scoring models, on average, think are good (\ie, have a high percentile). To the best of our understanding, the overarching conclusions we make by observing these qualitative samples are:
\begin{itemize}[leftmargin=*]
    \item \askllm doesn't seem to have any length bias for good examples.
    
    \item \askllm can accurately tag high-quality training samples that contain a lot of proper nouns and named entities. Perplexity filtering gets these kind of samples wrong.
    
    \item Even looking at this slice of only the highest-quality data tagged by \askllm, perplexity filtering scores don't seem to correlate well with \askllm scores as suggested by \cref{fig:sampler_ranking_correlations}.
\end{itemize}

\newtcolorbox[auto counter]{tbox}[3][]{%
colframe=#2,colback=#2!8,coltitle=#2!20!black,#1,
title={Example \thetcbcounter: #3}
}

\setlength{\tabcolsep}{0.7em}
\def\hrulefill{\leavevmode\leaders\hrule height 1.5pt\hfill\kern0pt}

\begin{tbox}[label=example:good1]{ForestGreen}{Estimated Data-Quality (Percentile -- Higher is better)}
\centering
\begin{tabular}{ccccc|ccccc}
    \multicolumn{5}{c|}{\textbf{\askllm}} & \multicolumn{5}{c}{\textbf{Perplexity Filtering}} \\[2pt]
     Small & Base & Large & XL & XXL & Small & Base & Large & XL & XXL  \\
     \midrule
     93.33\% & 88.21\% & 88.11\% & 100.0\% & 99.99\% & 50.29\% & 30.34\% & 32.56\% & 31.61\% & 25.62\%
\end{tabular}

\vspace{0.3cm}
\hspace{0.5cm}\hrulefill\hspace{0.2cm} \floweroneleft\floweroneright \hspace{0.2cm}\hrulefill\hspace{0.5cm}
\vspace{0.3cm}

What constitutes overtime for a part-time employee?
Question: What is overtime for a part-time employee?
Overtime for a part-time employee is time that is beyond the part-time employee’s ordinary hours of work or outside the agreed number of hours of work, as specified in their employment contract.
\end{tbox}

\begin{tbox}[label=example:good2]{ForestGreen}{Estimated Data-Quality (Percentile -- Higher is better)}
\centering
\begin{tabular}{ccccc|ccccc}
    \multicolumn{5}{c|}{\textbf{\askllm}} & \multicolumn{5}{c}{\textbf{Perplexity Filtering}} \\[2pt]
     Small & Base & Large & XL & XXL & Small & Base & Large & XL & XXL  \\
     \midrule
     99.86\% & 98.54\% & 96.4\% & 96.3\% & 96.67\% & 46.2\% & 54.65\% & 46.2\% & 49.85\% & 20.33\%
\end{tabular}

\vspace{0.3cm}
\hspace{0.5cm}\hrulefill\hspace{0.2cm} \floweroneleft\floweroneright \hspace{0.2cm}\hrulefill\hspace{0.5cm}
\vspace{0.3cm}

Viva La Vegan! - Can a Vegan Lifestyle Help to Get Rid of Ocean Dead Zones?
Can a Vegan Lifestyle Help to Get Rid of Ocean Dead Zones?
A dead zone is an area at the bottom of the ocean that is oxygen depleted and cannot maintain any marine life. The biggest cause of these dead zones is an overflow of fertilizers, sewage and industrial pollutants being pumped into rivers all over the world.
Thankfully dead zones can be reversed and living a vegan lifestyle can help enormously and I’ll show you how.
What are Ocean Dead Zones?

......

Vegans don’t want to harm the planet. On the contrary they want to save it and what better way than living with nature instead of against it and helping the planet in ways we probably never even realised, like helping to reverse our oceans dead zones.
Next time you think about buying something you don’t need, or eating food that is highly processed or non-organic, spare a thought for the largely unknown dead zones and how overconsumption and an unnatural lifestyle is slowly killing both you and them.
\end{tbox} 

\begin{tbox}[label=example:good3]{ForestGreen}{Estimated Data-Quality (Percentile -- Higher is better)}
\centering
\begin{tabular}{ccccc|ccccc}
    \multicolumn{5}{c|}{\textbf{\askllm}} & \multicolumn{5}{c}{\textbf{Perplexity Filtering}} \\[2pt]
     Small & Base & Large & XL & XXL & Small & Base & Large & XL & XXL  \\
     \midrule
     98.81\% & 98.96\% & 95.42\% & 99.53\% & 99.56\% & 88.1\% & 80.99\% & 77.13\% & 65.89\% & 73.79\%
\end{tabular}

\vspace{0.3cm}
\hspace{0.5cm}\hrulefill\hspace{0.2cm} \floweroneleft\floweroneright \hspace{0.2cm}\hrulefill\hspace{0.5cm}
\vspace{0.3cm}

Question: Is it necessary to dredge ponds and lakes in the upper coastal region of South Carolina?
Answer: It is necessary to dredge ponds and lakes in South Carolina, in the upper coastal region of South Carolina. Each lake and each pond is a different environment and as years pass, these environments accumulate a lot of sediment. They tend to fill in with storm water runoff, they tend from natural leafy materials—whether it be grass clippings, leafy materials, storm water fun off, sand, silt, sediment, muck, mire. All of these produce in the bottoms of pond beds and lake beds. So it is absolutely necessary to do an evaluation every so many years to determine whether or not you need to remove the sediment that’s accumulated.
\end{tbox}

\begin{tbox}[label=example:good4]{ForestGreen}{Estimated Data-Quality (Percentile -- Higher is better)}
\centering
\begin{tabular}{ccccc|ccccc}
    \multicolumn{5}{c|}{\textbf{\askllm}} & \multicolumn{5}{c}{\textbf{Perplexity Filtering}} \\[2pt]
     Small & Base & Large & XL & XXL & Small & Base & Large & XL & XXL  \\
     \midrule
     88.93\% & 92.16\% & 90.3\% & 95.14\% & 93.44\% & 26.83\% & 34.32\% & 32.98\% & 31.14\% & 28.35\%
\end{tabular}

\vspace{0.3cm}
\hspace{0.5cm}\hrulefill\hspace{0.2cm} \floweroneleft\floweroneright \hspace{0.2cm}\hrulefill\hspace{0.5cm}
\vspace{0.3cm}

However, it’s a long and challenging way to mass production.
New Tesla Model 3 is an electric game-changer worth \$35,000 and comes in classic black color. A single masterpiece in black now belongs to Tesla’s CEO and co-founder Elon Musk.
Why not mass market yet? Company has a quite complicated reason. Tesla needs to make sure that it can build, deliver and service enormous numbers of these awesome electric cars without sacrificing quality.
Tesla will present 30 first cars at a launch celebration dated on July 28.
100 cars with production speed 3 cars per day dated for August.
1,500 cars will be ready for September.

...

Owners of new Teslas will also enjoy exquisite aerodynamic wheel face.
An itemized list of the Tesla Model 3’s features, specs, and pricing is expected to be revealed on July 28, at the car’s launch party.
5.6 seconds is what it gets the Model 3 to go from zero to 60 miles per hour, as May news says. Hot, right? It accelerates even faster than the base model BMW 3 Series or the famous Mercedes-Benz C Class, which are leaders in the compact luxury space.
A single charge will allow minimum 215 miles of single drive.
The roof in Model 3 is made almost entirely of glass, providing an incredible sense of space and infinity. Moreover, it blocks UV rays and manages the level of heat.
\end{tbox}

\begin{tbox}[label=example:good5]{ForestGreen}{Estimated Data-Quality (Percentile -- Higher is better)}
\centering
\begin{tabular}{ccccc|ccccc}
    \multicolumn{5}{c|}{\textbf{\askllm}} & \multicolumn{5}{c}{\textbf{Perplexity Filtering}} \\[2pt]
     Small & Base & Large & XL & XXL & Small & Base & Large & XL & XXL  \\
     \midrule
     89.28\% & 98.11\% & 98.93\% & 98.7\% & 96.32\% & 26.24\% & 19.14\% & 26.25\% & 26.05\% & 24.29\%
\end{tabular}

\vspace{0.3cm}
\hspace{0.5cm}\hrulefill\hspace{0.2cm} \floweroneleft\floweroneright \hspace{0.2cm}\hrulefill\hspace{0.5cm}
\vspace{0.3cm}

Landmines. Every month, 1200 people are maimed, and a further 800 killed throughout the world due to landmines. Landmine removal efforts are clearing about 100,000 mines a year, but at rate it will still be over 1000 years to get them all. The cost of clearing them is huge, with estimates in excess of \$50 billion. Worse still, for every 5000 mines cleared, one person will die in the process.

...

Hopefully the work that people like Vandiver and Tan can be built upon and further progress can be made in the fight to clear the world of landmines. The video below shows a group of minesweepers working with the kits- and it is clear even watching them that the level of understanding as to how the mine operates is already improving- giving them the knowledge they need to safely diffuse any mines they encounter.
\end{tbox}

\begin{tbox}[label=example:good6]{ForestGreen}{Estimated Data-Quality (Percentile -- Higher is better)}
\centering
\begin{tabular}{ccccc|ccccc}
    \multicolumn{5}{c|}{\textbf{\askllm}} & \multicolumn{5}{c}{\textbf{Perplexity Filtering}} \\[2pt]
     Small & Base & Large & XL & XXL & Small & Base & Large & XL & XXL  \\
     \midrule
     87.79\% & 98.52\% & 90.11\% & 91.65\% & 88.09\% & 19.72\% & 17.88\% & 21.13\% & 16.95\% & 11.92\%
\end{tabular}

\vspace{0.3cm}
\hspace{0.5cm}\hrulefill\hspace{0.2cm} \floweroneleft\floweroneright \hspace{0.2cm}\hrulefill\hspace{0.5cm}
\vspace{0.3cm}

By all measures a successful chemical engineering undergraduate at Oregon Agricultural College, and wanting very much to continue his education and earn his PhD in chemistry, Linus Pauling wrote to several graduate programs across the country, inquiring in particular about fellowships. Though he had proven himself to be prodigious talent as a student and, already, as a teacher, Pauling’s location in Corvallis didn’t carry a great deal of cache with the country’s elite institutions. And given his family’s shaky financial health, some measure of institutional funding was going to be required if he were to advance in the academy.

...

During his sparse free time, Pauling wrote letter after letter to his girlfriend, Ava Helen Miller, who remained in Corvallis to continue work on her Home Economics degree at OAC. Having expressed a desire to marry at least twice before Linus left for California, only to be rebuffed by their families, the two decided in their letters that they would absolutely be wed once Pauling had finished his first year of classes and just prior to his resumption of more construction work during the summer. Their plan came to fruition in Salem, Oregon on June 17, 1923, and Ava Helen moved to Pasadena that fall to accompany her new husband during his second year as a graduate student.
\end{tbox}

\begin{tbox}[label=example:good7]{ForestGreen}{Estimated Data-Quality (Percentile -- Higher is better)}
\centering
\begin{tabular}{ccccc|ccccc}
    \multicolumn{5}{c|}{\textbf{\askllm}} & \multicolumn{5}{c}{\textbf{Perplexity Filtering}} \\[2pt]
     Small & Base & Large & XL & XXL & Small & Base & Large & XL & XXL  \\
     \midrule
     87.08\% & 89.33\% & 95.26\% & 99.13\% & 99.94\% & 98.09\% & 97.52\% & 98.83\% & 97.39\% & 97.38\%
\end{tabular}

\vspace{0.3cm}
\hspace{0.5cm}\hrulefill\hspace{0.2cm} \floweroneleft\floweroneright \hspace{0.2cm}\hrulefill\hspace{0.5cm}
\vspace{0.3cm}

Bonelli, N.; Giordano, S.; Procissi, G. Enif-Lang: A Specialized Language for Programming Network Functions on Commodity Hardware. J. Sens. Actuator Netw. 2018, 7, 34.
Bonelli N, Giordano S, Procissi G. Enif-Lang: A Specialized Language for Programming Network Functions on Commodity Hardware. Journal of Sensor and Actuator Networks. 2018; 7(3):34.
Bonelli, Nicola; Giordano, Stefano; Procissi, Gregorio. 2018. "Enif-Lang: A Specialized Language for Programming Network Functions on Commodity Hardware." J. Sens. Actuator Netw. 7, no. 3: 34.
\end{tbox}

\begin{tbox}[label=example:good8]{ForestGreen}{Estimated Data-Quality (Percentile -- Higher is better)}
\centering
\begin{tabular}{ccccc|ccccc}
    \multicolumn{5}{c|}{\textbf{\askllm}} & \multicolumn{5}{c}{\textbf{Perplexity Filtering}} \\[2pt]
     Small & Base & Large & XL & XXL & Small & Base & Large & XL & XXL  \\
     \midrule
     96.41\% & 86.03\% & 97.38\% & 95.91\% & 90.8\% & 34.7\% & 44.8\% & 56.87\% & 60.15\% & 77.25\%
\end{tabular}

\vspace{0.3cm}
\hspace{0.5cm}\hrulefill\hspace{0.2cm} \floweroneleft\floweroneright \hspace{0.2cm}\hrulefill\hspace{0.5cm}
\vspace{0.3cm}

"What is your number one secret to productivity?"
In recording their responses, Kruse came across some fascinating suggestions. What follows are some of my favorites.
They focus on minutes, not hours. Most people default to hour and half-hour blocks on their calendar; highly successful people know that there are 1,440 minutes in every day and that there is nothing more valuable than time. Money can be lost and made again, but time spent can never be reclaimed. As legendary Olympic gymnast Shannon Miller told Kevin, "To this day, I keep a schedule that is almost minute by minute." You must master your minutes to master your life.

...

Energy is everything. You can't make more minutes in the day, but you can increase your energy to increase your attention, focus, and productivity. Highly successful people don't skip meals, sleep, or breaks in the pursuit of more, more, more. Instead, they view food as fuel, sleep as recovery, and breaks as opportunities to recharge in order to get even more done.
Author of \#1 bestselling book, Emotional Intelligence 2.0, and president of TalentSmart, world's leading provider of emotional intelligence.
\end{tbox}

\subsection{Low-quality Samples Identified by \askllm}
\label{sec:low_quality_by_LLM}

We look at the training samples that \emph{all} \askllm scoring models, on average, think are bad (\ie, have a low percentile). To the best of our understanding, the overarching conclusions we make by observing these qualitative samples are:
\begin{itemize}[leftmargin=*]
    \item \askllm doesn't seem to have any length bias for bad examples.
    
    \item \askllm filters hateful or toxic examples that might hurt LLM training.
    
    \item \askllm rejects non-contextual samples, \eg, having only questions with no answers, repeated non-sensical content, \etc Notably, perplexity filtering performs bad in these cases, as these low quality examples tend to have a low perplexity score.
\end{itemize}


\setlength{\tabcolsep}{0.7em}
\def\hrulefill{\leavevmode\leaders\hrule height 1.5pt\hfill\kern0pt}

\begin{tbox}[label=example:bad1]{Bittersweet}{Estimated Data-Quality (Percentile -- Higher is better)}
\centering
\begin{tabular}{ccccc|ccccc}
    \multicolumn{5}{c|}{\textbf{\askllm}} & \multicolumn{5}{c}{\textbf{Perplexity Filtering}} \\[2pt]
     Small & Base & Large & XL & XXL & Small & Base & Large & XL & XXL  \\
     \midrule
     0.01\% & 0.01\% & 0.01\% & 0.0\% & 0.0\% & 40.46\% & 25.66\% & 27.42\% & 25.6\% & 28.12\%
\end{tabular}

\vspace{0.3cm}
\hspace{0.5cm}\hrulefill\hspace{0.2cm} \floweroneleft\floweroneright \hspace{0.2cm}\hrulefill\hspace{0.5cm}
\vspace{0.3cm}

Release name : Juiced2.Hot.Import.Nights-Multi5-RELOADED. ? Format : iso Juiced 2: HIN evolves the current street racing scene, letting players experience PC Repack DiRT Rally v1.1 ? Black Box Bears Cant Drift PC torrent uploaded.  ?  Juiced 2 ? ?  ?? ? ???? ???? ? ??? ?  ?? ? ? ? ?  ????! .

...

HIN evolves the current street racing scene, letting players experience the culture of the real-life HIN tour, the nation?s largest lifestyle custom. Juiced 2 Hot Import Nights Torrent. Bittorrent 729.64 MB. Juiced 2 Hot Import Nights Download free torrent at Largest Bittorrent Source with Several Listed Files. Now you can upload screenshots or other images (cover scans, disc scans,...
\end{tbox}

\begin{tbox}[label=example:bad2]{Bittersweet}{Estimated Data-Quality (Percentile -- Higher is better)}
\centering
\begin{tabular}{ccccc|ccccc}
    \multicolumn{5}{c|}{\textbf{\askllm}} & \multicolumn{5}{c}{\textbf{Perplexity Filtering}} \\[2pt]
     Small & Base & Large & XL & XXL & Small & Base & Large & XL & XXL  \\
     \midrule
     5.41\% & 3.86\% & 0.49\% & 0.8\% & 6.24\% & 62.97\% & 75.91\% & 86.3\% & 85.26\% & 88.11\%
\end{tabular}

\vspace{0.3cm}
\hspace{0.5cm}\hrulefill\hspace{0.2cm} \floweroneleft\floweroneright \hspace{0.2cm}\hrulefill\hspace{0.5cm}
\vspace{0.3cm}

You were a good daughter the first day or two. Now, you are only showing the worst sides of yourself. I can only be sad and disappointed in you.
\end{tbox}

\begin{tbox}[label=example:bad3]{Bittersweet}{Estimated Data-Quality (Percentile -- Higher is better)}
\centering
\begin{tabular}{ccccc|ccccc}
    \multicolumn{5}{c|}{\textbf{\askllm}} & \multicolumn{5}{c}{\textbf{Perplexity Filtering}} \\[2pt]
     Small & Base & Large & XL & XXL & Small & Base & Large & XL & XXL  \\
     \midrule
     1.08\% & 0.41\% & 6.16\% & 2.46\% & 1.44\% & 35.97\% & 24.13\% & 31.46\% & 51.15\% & 38.19\%
\end{tabular}

\vspace{0.3cm}
\hspace{0.5cm}\hrulefill\hspace{0.2cm} \floweroneleft\floweroneright \hspace{0.2cm}\hrulefill\hspace{0.5cm}
\vspace{0.3cm}

Kids can help you enrich your life? Be a better person? Learn to think about someone else? Apparently whoever said these things has never had children because from everything we have seen and experienced, kids are flat out horrible. College can’t come fast enough.
\end{tbox}

\begin{tbox}[label=example:bad4]{Bittersweet}{Estimated Data-Quality (Percentile -- Higher is better)}
\centering
\begin{tabular}{ccccc|ccccc}
    \multicolumn{5}{c|}{\textbf{\askllm}} & \multicolumn{5}{c}{\textbf{Perplexity Filtering}} \\[2pt]
     Small & Base & Large & XL & XXL & Small & Base & Large & XL & XXL  \\
     \midrule
     1.89\% & 3.58\% & 3.11\% & 6.02\% & 0.09\% & 18.09\% & 22.8\% & 25.61\% & 19.14\% & 47.01\%
\end{tabular}

\vspace{0.3cm}
\hspace{0.5cm}\hrulefill\hspace{0.2cm} \floweroneleft\floweroneright \hspace{0.2cm}\hrulefill\hspace{0.5cm}
\vspace{0.3cm}

EventsThis is how you can go ice skating with real penguinsGrab your tickets before they sell out!
Can you spot anyone you know in these fun pics?
EventsHow do I get tickets for Wimbledon 2018?
\end{tbox}

\begin{tbox}[label=example:bad5]{Bittersweet}{Estimated Data-Quality (Percentile -- Higher is better)}
\centering
\begin{tabular}{ccccc|ccccc}
    \multicolumn{5}{c|}{\textbf{\askllm}} & \multicolumn{5}{c}{\textbf{Perplexity Filtering}} \\[2pt]
     Small & Base & Large & XL & XXL & Small & Base & Large & XL & XXL  \\
     \midrule
     2.17\% & 1.11\% & 3.75\% & 2.0\% & 5.31\% & 92.49\% & 89.88\% & 86.79\% & 97.04\% & 96.78\%
\end{tabular}

\vspace{0.3cm}
\hspace{0.5cm}\hrulefill\hspace{0.2cm} \floweroneleft\floweroneright \hspace{0.2cm}\hrulefill\hspace{0.5cm}
\vspace{0.3cm}

That I don’t make you happy?
We can start all over some day?
Somewhere, are you dreaming of me?
Won’t you come back home to me?
\end{tbox}

\begin{tbox}[label=example:bad6]{Bittersweet}{Estimated Data-Quality (Percentile -- Higher is better)}
\centering
\begin{tabular}{ccccc|ccccc}
    \multicolumn{5}{c|}{\textbf{\askllm}} & \multicolumn{5}{c}{\textbf{Perplexity Filtering}} \\[2pt]
     Small & Base & Large & XL & XXL & Small & Base & Large & XL & XXL  \\
     \midrule
     0.06\% & 0.04\% & 0.08\% & 0.11\% & 0.07\% & 68.86\% & 51.15\% & 44.08\% & 35.81\% & 19.28\%
\end{tabular}

\vspace{0.3cm}
\hspace{0.5cm}\hrulefill\hspace{0.2cm} \floweroneleft\floweroneright \hspace{0.2cm}\hrulefill\hspace{0.5cm}
\vspace{0.3cm}

  ? ,  ? ,  ? ,  ? ,  ?   ? ,  ?   ? . (1395).  ?   ?   ?   ?   ?   ?   ?   ?   ?   ?   ?   ?   ?   ?   ?   ?   ? .  ?   ?   ?   ? , 26(2), 145-159.  ?   ? ;  ?   ? ;  ?   ?   ?   ? . " ?   ?   ?   ?   ?   ?   ?   ?   ?   ?   ?   ?   ?   ?   ?   ?   ? ".  ?   ?   ?   ? , 26, 2, 1395, 145-159.  ? ,  ? ,  ? ,  ? ,  ?   ? ,  ?   ? . (1395). ' ?   ?   ?   ?   ?   ?   ?   ?   ?   ?   ?   ?   ?   ?   ?   ?   ? ',  ?   ?   ?   ? , 26(2), pp. 145-159.  ? ,  ? ,  ? ,  ? ,  ?   ? ,  ?   ? .  ?   ?   ?   ?   ?   ?   ?   ?   ?   ?   ?   ?   ?   ?   ?   ?   ? .  ?   ?   ?   ? , 1395; 26(2): 145-159.  ?   ?   ?   ?   ?   ? ? ?   ?   ?   ?   ?   ?  BHT  ?   ? ? ?   ?   ?   ?  DPPH ?   ?   ?   ?   ?   ?   ?   ?   ?   ?   ?   ? .  ?   ?   ?   ?   ?   ?   ?   ?   ?   ?   ?   ?  (HPMC)  ?   ?   ?   ?   ?   ?   ?   ?   ?   ?   ?   ?   ?   ?   ?   ?   ?   ? ? ?   ?   ?   

...

Effect of the plasticizer on permeability, mechanical resistance and thermal behaviour of composite coating films. Powder Technology 238:14-19. Martos MV, Mohamady MA, Fern?ndez?L?pez J, Abd ElRazik KA, Omer EA, P?rez?Alvarez JA and Sendra E, 2011. In vitro antioxidant and antibacterial activities of essentials oils obtained from Egyptian aromatic plants. Food Control 22: 1715?1722. Phoopuritham P, Thongngam M, Yoksan R and Suppakul P, 2011. Antioxidant Properties of Selected Plant Extracts and Application in Packaging as Antioxidant Cellulose?Based Films for Vegetable Oil. Packaging Technology and Science 25: 125?136. Rojas?Gra? MA, Avena?Bustillos RJ, Olsen C, Friedman M, Henika PR, Martin?Belloso O, Pan Zh and McHughTH, 2007. Effects...
\end{tbox}

\begin{tbox}[label=example:bad7]{Bittersweet}{Estimated Data-Quality (Percentile -- Higher is better)}
\centering
\begin{tabular}{ccccc|ccccc}
    \multicolumn{5}{c|}{\textbf{\askllm}} & \multicolumn{5}{c}{\textbf{Perplexity Filtering}} \\[2pt]
     Small & Base & Large & XL & XXL & Small & Base & Large & XL & XXL  \\
     \midrule
     0.01\% & 0.02\% & 0.02\% & 0.01\% & 0.0\% & 59.41\% & 36.81\% & 23.01\% & 12.95\% & 17.24\%
\end{tabular}

\vspace{0.3cm}
\hspace{0.5cm}\hrulefill\hspace{0.2cm} \floweroneleft\floweroneright \hspace{0.2cm}\hrulefill\hspace{0.5cm}
\vspace{0.3cm}

Showing results for tags 'A3arma\_start'. I have a Error mesage "Addon 'A3\_epoch\_server' requires addon 'A3\_epoch\_config'" why is that and how can i fix this? When i click Ok i get this My Start.cmd losk like this: arma3server.exe [email protected];@EpochHive; -config=C: ? arma 3 ? SC ? config.cfg -ip=192.168.71.234 -port=2301 -profiles=SC -cfg=C: ? arma 3 ? SC ? basic.cfg -name=SC This is my RPT file: ===================================================================== == C: ? arma 3 ? arma3server.exe == arma3server.exe [email protected];@EpochHive; -config=C: ? arma 3 ? SC ? 

...

2:05:23 Updating base class ->RscListBox, by a3 ? ui\_f ? config.bin/RscIGUIListBox/ 2:05:23 Updating base class ->RscListNBox, by a3 ? ui\_f ? config.bin/RscIGUIListNBox/ 2:05:23 Updating base class ->RscText, by a3 ? ui\_f ? config.bin/RscBackground/ 2:05:23 Updating base class ->RscText, by a3 ? ui\_f ? config.bin/RscBackgroundGUI/ 2:05:23 Updating base class ->RscPicture, by a3 ? ui\_f ? config.bin/RscBackgroundGUILeft/ 2:05:23 Updating base class ->RscPicture, by a3 ? ui\_f ? config.bin/RscBackgroundGUIRight/ 2:05:23 Updating base class ->RscPicture, by a3 ? ui\_f ? config.bin/RscBackgroundGUIBottom/ 2:05:23 Updating base class ->RscText, by a3...
\end{tbox}

\begin{tbox}[label=example:bad8]{Bittersweet}{Estimated Data-Quality (Percentile -- Higher is better)}
\centering
\begin{tabular}{ccccc|ccccc}
    \multicolumn{5}{c|}{\textbf{\askllm}} & \multicolumn{5}{c}{\textbf{Perplexity Filtering}} \\[2pt]
     Small & Base & Large & XL & XXL & Small & Base & Large & XL & XXL  \\
     \midrule
     0.47\% & 3.79\% & 1.93\% & 1.08\% & 10.22\% & 51.15\% & 46.92\% & 63.04\% & 44.77\% & 41.35\%
\end{tabular}

\vspace{0.3cm}
\hspace{0.5cm}\hrulefill\hspace{0.2cm} \floweroneleft\floweroneright \hspace{0.2cm}\hrulefill\hspace{0.5cm}
\vspace{0.3cm}

10 February 2019 I have 2 houses (joint - me \& my wife) in my name and 2 land (plots). Recently sold one of flat (100\% cheque payment).
Can I reinvest the Capital gains arriving out of sale in purchasing a flat?
Note: I had reinvested earlier on (4 years ago) the similar captial gains to buy land from a house sale.
\end{tbox}

\subsection{Increasing-quality Samples Identified by \askllm}

We look at the training samples that \askllm scoring models \emph{disagree on} as we go from Flan-T5-Small $\rightarrow$ Flan-T5-XXL. Specifically, we look at training samples that Flan-T5-Small thinks are of low quality, whereas Flan-T5-XXL thinks otherwise.
To the best of our understanding, our overarching conclusions by observing these qualitative samples are:
\begin{itemize}[leftmargin=*]
    \item Larger scoring models in \askllm are able to identify training samples containing \emph{tail-end} of knowledge, \eg, rare world-events, rare named entities, \etc 
    
    \item The increasing quality trend going from Flan-T5-Small $\rightarrow$ Flan-T5-XXL isn't correlated with the quality scoring model size in perplexity filtering.
\end{itemize}

\setlength{\tabcolsep}{0.7em}
\def\hrulefill{\leavevmode\leaders\hrule height 1.5pt\hfill\kern0pt}

\begin{tbox}[label=example:increasing1]{Cerulean}{Estimated Data-Quality (Percentile -- Higher is better)}
\centering
\begin{tabular}{ccccc|ccccc}
    \multicolumn{5}{c|}{\textbf{\askllm}} & \multicolumn{5}{c}{\textbf{Perplexity Filtering}} \\[2pt]
     Small & Base & Large & XL & XXL & Small & Base & Large & XL & XXL  \\
     \midrule
     7.67\% & 30.45\% & 57.41\% & 78.17\% & 97.41\% & 15.56\% & 31.02\% & 24.14\% & 50.59\% & 49.64\%
\end{tabular}

\vspace{0.3cm}
\hspace{0.5cm}\hrulefill\hspace{0.2cm} \floweroneleft\floweroneright \hspace{0.2cm}\hrulefill\hspace{0.5cm}
\vspace{0.3cm}

The historic city of Manchester now features one of the most interesting public art installations that art lovers have ever witnessed. Design studio, Acrylicize installed five giant lamps in Piccadilly Place that represent the many historic periods that the city has gone through, including; Art Deco, Art Nouveau, Victorian, mid-century, and contemporary.
The installation is without any doubt, a great piece of art but unlike other artworks, these are absolutely functional as well. Each lamp provides the many visitors with seating, shelter, light and even heat in the winters. The admirers can also witness the historic stories of Manchester via graphic illustrations on the lamps.
\end{tbox} 

\begin{tbox}[label=example:increasing2]{Cerulean}{Estimated Data-Quality (Percentile -- Higher is better)}
\centering
\begin{tabular}{ccccc|ccccc}
    \multicolumn{5}{c|}{\textbf{\askllm}} & \multicolumn{5}{c}{\textbf{Perplexity Filtering}} \\[2pt]
     Small & Base & Large & XL & XXL & Small & Base & Large & XL & XXL  \\
     \midrule
     10.48\% & 31.26\% & 54.17\% & 84.17\% & 97.93\% & 30.52\% & 39.49\% & 35.79\% & 30.89\% & 25.39\%
\end{tabular}

\vspace{0.3cm}
\hspace{0.5cm}\hrulefill\hspace{0.2cm} \floweroneleft\floweroneright \hspace{0.2cm}\hrulefill\hspace{0.5cm}
\vspace{0.3cm}

The Cokin Yellow and Pink Center Spot filter has a clear center and diffused yellow and pink edges. Theses diffused edges will be produce blur while leaving the center sharp. The filter effect is directly influenced by the f-stop and the focal length. A lens shot at f/1.4 will see a greater blurring effect than f/8.0 and a 85mm lens will see more blur than a 28mm. Additionally, a longer focal length lens will visually increase the size of the center spot area because it sees less of the filter area.
\end{tbox} 

\begin{tbox}[label=example:increasing3]{Cerulean}{Estimated Data-Quality (Percentile -- Higher is better)}
\centering
\begin{tabular}{ccccc|ccccc}
    \multicolumn{5}{c|}{\textbf{\askllm}} & \multicolumn{5}{c}{\textbf{Perplexity Filtering}} \\[2pt]
     Small & Base & Large & XL & XXL & Small & Base & Large & XL & XXL  \\
     \midrule
     7.05\% & 20.29\% & 38.23\% & 50.38\% & 63.94\% & 22.41\% & 14.8\% & 12.69\% & 20.68\% & 8.62\%
\end{tabular}

\vspace{0.3cm}
\hspace{0.5cm}\hrulefill\hspace{0.2cm} \floweroneleft\floweroneright \hspace{0.2cm}\hrulefill\hspace{0.5cm}
\vspace{0.3cm}

Provide hoist coverage and 200 degree rotation for individual use in bays, along walls, or columns of plants, or as a supplement to an overhead crane or monorail system. This jib has the advantage of providing maximum lift for the hoist, since it can be installed very close to the underside of the lowest ceiling obstruction. It is composed of a vertical mast mounted to 2 brackets on a wall or vertical building beam with a boom that cantilevers out, perpendicular from the wall at the top.
\end{tbox} 

\begin{tbox}[label=example:increasing4]{Cerulean}{Estimated Data-Quality (Percentile -- Higher is better)}
\centering
\begin{tabular}{ccccc|ccccc}
    \multicolumn{5}{c|}{\textbf{\askllm}} & \multicolumn{5}{c}{\textbf{Perplexity Filtering}} \\[2pt]
     Small & Base & Large & XL & XXL & Small & Base & Large & XL & XXL  \\
     \midrule
     20.76\% & 45.81\% & 60.22\% & 73.95\% & 84.14\% & 2.98\% & 2.94\% & 3.49\% & 2.51\% & 2.09\%
\end{tabular}

\vspace{0.3cm}
\hspace{0.5cm}\hrulefill\hspace{0.2cm} \floweroneleft\floweroneright \hspace{0.2cm}\hrulefill\hspace{0.5cm}
\vspace{0.3cm}

The mighty Adyar River that flows through Chennai has a tale to tell.
Arun Krishnamurthy, founder, Environmentalist Foundation of India has documented the origin of the river, the journey and the culmination all captured in images aimed at sensitizing citizens of Chennai to a treasure that they are being denied.
Titled Urban Waters, the photo exhibition on Adyar river will bring out Adyar’s rich history, fine ecology, urban exploitation and her innate beauty through framed images. The exhibition is organised at Max Mueller Bhavan in Chennai.
Goethe Institut, Max Mueller Bhavan is at 4, 5th Street, Rutland Gate, Chennai.
\end{tbox} 

\begin{tbox}[label=example:increasing5]{Cerulean}{Estimated Data-Quality (Percentile -- Higher is better)}
\centering
\begin{tabular}{ccccc|ccccc}
    \multicolumn{5}{c|}{\textbf{\askllm}} & \multicolumn{5}{c}{\textbf{Perplexity Filtering}} \\[2pt]
     Small & Base & Large & XL & XXL & Small & Base & Large & XL & XXL  \\
     \midrule
     4.27\% & 22.22\% & 47.57\% & 82.58\% & 92.4\% & 6.34\% & 4.77\% & 3.89\% & 8.75\% & 7.55\%
\end{tabular}

\vspace{0.3cm}
\hspace{0.5cm}\hrulefill\hspace{0.2cm} \floweroneleft\floweroneright \hspace{0.2cm}\hrulefill\hspace{0.5cm}
\vspace{0.3cm}

The Pendaries Village Skyline Subdivision is located near both the Santa Fe National Forest and the Pecos Wilderness in North Central New Mexico. It has the charm of small town New Mexico, perhaps even more so than its better known nearby sister cities. It offers a unique opportunity for people wishing to enjoy the quiet beauty of Northern New Mexico.
\end{tbox} 

\begin{tbox}[label=example:increasing6]{Cerulean}{Estimated Data-Quality (Percentile -- Higher is better)}
\centering
\begin{tabular}{ccccc|ccccc}
    \multicolumn{5}{c|}{\textbf{\askllm}} & \multicolumn{5}{c}{\textbf{Perplexity Filtering}} \\[2pt]
     Small & Base & Large & XL & XXL & Small & Base & Large & XL & XXL  \\
     \midrule
     22.09\% & 66.57\% & 76.56\% & 85.51\% & 96.98\% & 20.8\% & 24.82\% & 17.42\% & 18.65\% & 15.55\%
\end{tabular}

\vspace{0.3cm}
\hspace{0.5cm}\hrulefill\hspace{0.2cm} \floweroneleft\floweroneright \hspace{0.2cm}\hrulefill\hspace{0.5cm}
\vspace{0.3cm}

Anderson .Paak’s new album, Oxnard, is a nod to the Southern California city where Anderson grew up. It is the Grammy-nominated artist’s third studio album and the first to be released on Dr. Dre’s label Aftermath Entertainment. Oxnard includes his latest single, Tints featuring Kendrick Lamar along with album features from J Cole, Pusha T and many more. This is the album he dreamed of making in high school, when he was listening to Jay-Z’s The Blueprint, The Game’s The Documentary, and Kanye West’s The College Dropout.
The classic fourth album from the rap-god Eminem.
\end{tbox} 

\begin{tbox}[label=example:increasing7]{Cerulean}{Estimated Data-Quality (Percentile -- Higher is better)}
\centering
\begin{tabular}{ccccc|ccccc}
    \multicolumn{5}{c|}{\textbf{\askllm}} & \multicolumn{5}{c}{\textbf{Perplexity Filtering}} \\[2pt]
     Small & Base & Large & XL & XXL & Small & Base & Large & XL & XXL  \\
     \midrule
     0.98\% & 24.84\% & 53.36\% & 88.98\% & 98.18\% & 2.3\% & 1.48\% & 2.03\% & 2.1\% & 3.07\%
\end{tabular}

\vspace{0.3cm}
\hspace{0.5cm}\hrulefill\hspace{0.2cm} \floweroneleft\floweroneright \hspace{0.2cm}\hrulefill\hspace{0.5cm}
\vspace{0.3cm}

The Disknet is a networking solution which uses the external floppy drive port of the Amiga. It uses the same coax cabling as 10Base2 Ethernet (RG-58U/50Ohm) but is NOT compatible and is capable of transferring at around 45k/sec.
The Disknet may be the same device as the AmigaLink, but this has not been confirmed.
\end{tbox}

\subsection{Decreasing-quality Samples Identified by \askllm}

We look at the training samples that \askllm scoring models \emph{disagree on} as we go from Flan-T5-Small $\rightarrow$ Flan-T5-XXL. Specifically, we look at training samples that Flan-T5-XXL thinks are of low quality, whereas Flan-T5-Small thinks otherwise.
To the best of our understanding, our overarching conclusions by observing these qualitative samples are:
\begin{itemize}[leftmargin=*]
    \item Smaller quality-scoring models sometimes mislabel non-informative training samples, that contain, \eg, non-informative content, or repeated content. 
    
    \item The decreasing quality trend going from Flan-T5-Small $\rightarrow$ Flan-T5-XXL isn't correlated with the quality scoring model size in perplexity filtering.
\end{itemize}

\setlength{\tabcolsep}{0.7em}
\def\hrulefill{\leavevmode\leaders\hrule height 1.5pt\hfill\kern0pt}

\begin{tbox}{BurntOrange}{Estimated Data-Quality (Percentile -- Higher is better)}
\centering
\begin{tabular}{ccccc|ccccc}
    \multicolumn{5}{c|}{\textbf{\askllm}} & \multicolumn{5}{c}{\textbf{Perplexity Filtering}} \\[2pt]
     Small & Base & Large & XL & XXL & Small & Base & Large & XL & XXL  \\
     \midrule
     64.05\% & 46.39\% & 35.92\% & 25.29\% & 9.63\% & 4.3\% & 10.21\% & 3.47\% & 3.34\% & 3.35\%
\end{tabular}

\vspace{0.3cm}
\hspace{0.5cm}\hrulefill\hspace{0.2cm} \floweroneleft\floweroneright \hspace{0.2cm}\hrulefill\hspace{0.5cm}
\vspace{0.3cm}

one filled with goodwill and cheer.
who have supported me thru the year.
I wouldn't be changing careers.
instead of on strange people's rears.
Wishes You a Healthy, Happy Holidays!
Ah, how the mighty have fallen!
And a Merry fave to you ... and a happy new rear.
From one Xmas humor story to another, enjoyed this!
Thanks Jack \& Susan! Doug, I checked him out–wonderful stuff! Will pass along the good word.
Fun and funny--as always! Thanks for the cheer!
I can only fave this once, but I've looked at it repeatedly over what has been a bizarre week-- and each time you've given me a laugh. That's a gift Bob and I'm grateful! Best of holidays to you and a great New Year!
\end{tbox} 

\begin{tbox}{BurntOrange}{Estimated Data-Quality (Percentile -- Higher is better)}
\centering
\begin{tabular}{ccccc|ccccc}
    \multicolumn{5}{c|}{\textbf{\askllm}} & \multicolumn{5}{c}{\textbf{Perplexity Filtering}} \\[2pt]
     Small & Base & Large & XL & XXL & Small & Base & Large & XL & XXL  \\
     \midrule
     91.25\% & 71.8\% & 53.1\% & 24.11\% & 4.53\% & 32.4\% & 36.56\% & 46.53\% & 48.19\% & 54.84\%
\end{tabular}

\vspace{0.3cm}
\hspace{0.5cm}\hrulefill\hspace{0.2cm} \floweroneleft\floweroneright \hspace{0.2cm}\hrulefill\hspace{0.5cm}
\vspace{0.3cm}

I hear people saying that vinyl records have a better sound quality than CDs or even DVDs. A mini LP is a CD version of something that was originally released as a 12" (12 inch) vinyl LP. In many cases the packaging is superior to, or at least. Vitalogy; Studio album by Pearl Jam; Released: Vinyl: November 22, 1994 CD: December 6, 1994: Recorded: November 1993 – October 1994: Studio: Bad Animals Studio.
Browse best sellers, new releases, AutoRip CDs and vinyl records, deals, vinyl Audio CD. 7.99. From A Room: Volume 1. Chris Stapleton. Audio. The one and only CD, DVD, VIDEO, DJ, VINYL, ERO store. Search our full catalog. Recordstore.co.uk. The UK's leading online record store. Buy new and exclusive signed bundles, CDs, LPs, Merchandise and box sets. Recordstore Day, every. Vinyl Records to CD Conversion - Cheapest on the net! High-quality, standards-compliant CD-Audio of your favorite vinyl records, saved for posterity. Custom CD, DVD Vinyl Packaging You're just a click away from a gorgeous, retail-ready CD or DVD in professional disc packaging. We also offer a full-range of Vinyl. 

...

Buy with confidence as the. Mar 4, 2017 Despite the decline in mainstream CD usage, some consumers still have CD recording needs for radio, vinyl and other formats. Here are our. 12 results . You can finally burn your cassettes and vinyl records to CD with Crosley's Memory Master II CD Recorder. Just play your cassette or record One Nation is back after the Sold Out New Years Eve event with yet another From its esoteric origins releasing field recordings of steam engines on vinyl to our latest critically acclaimed Ultradisc UHR™ SACDs, Mobile Fidelity Sound. How much are worth and valued your rare and collectable vinyl and cd by searching on Music Price Guide archive. Heel veel CD, LP, Vinyl SACD op voorraad, snelle levertijden en altijd superscherp geprijsd en lage verzendkosten, voor 17:00 besteld morgen Some of the greatest music ever made isn t available digitally, on mp3, or on CD; but rather is only available on vinyl. Moreover, if you already have purchased.
\end{tbox} 

\begin{tbox}{BurntOrange}{Estimated Data-Quality (Percentile -- Higher is better)}
\centering
\begin{tabular}{ccccc|ccccc}
    \multicolumn{5}{c|}{\textbf{\askllm}} & \multicolumn{5}{c}{\textbf{Perplexity Filtering}} \\[2pt]
     Small & Base & Large & XL & XXL & Small & Base & Large & XL & XXL  \\
     \midrule
     96.67\% & 76.07\% & 47.33\% & 30.0\% & 7.97\% & 32.02\% & 21.27\% & 24.31\% & 25.77\% & 23.7\%
\end{tabular}

\vspace{0.3cm}
\hspace{0.5cm}\hrulefill\hspace{0.2cm} \floweroneleft\floweroneright \hspace{0.2cm}\hrulefill\hspace{0.5cm}
\vspace{0.3cm}

A brilliant performance by Year 6 based on The Lion King. Brilliant singing and acting from everyone, congratulations Year 6! A big thank you to all the staff that helped with everything from costumes, set design, make up and directing. A wonderful commemoration of the seven years that Year 6 students have spent at The Good Shepherd. Thank you to all of the parents and staff for attending this celebration and we wish all of the children continued success in their new schools and hope they continue to do themselves proud. Well done to Foundation for showing us what it is to be good friends! This week we have been looking at all the countries in the world that speak Spanish as their native language, there are 21! So throughout school we spent a day learning lots of wonderful things about our chosen country. We looked at maps, flags, famous people, food and so much more! Below is a little glimpse into our fabulous week. 

...

Click on the links to take a look at some of the brilliant things we got up to! Faith in Families is a charity based here in Nottingham who believe, as we do, that all children have the right to grow up as part of a loving and nurturing family and they provide services for children and families. We learnt lots about adoption and what it can mean for children and their family. We learnt about Fairtrade and all the fantastic work they do around the world. We also discovered lots of products that we did not know were Fairtrade. There was also a sell out Fairtrade food sale, well done everyone! Year 2 have been able to show off our brilliant new high visibility jackets! Now we will be able to stay safe and visible on any out of school trips. We are very lucky to have these donated by Walton \& Allen. Thank you! Click on the high visibility jacket to take a look at our super jackets! Year 4 have wowed us with their acting skills in a brilliant performance of Ali Baba - well done Year 4! Year...
\end{tbox} 

\begin{tbox}{BurntOrange}{Estimated Data-Quality (Percentile -- Higher is better)}
\centering
\begin{tabular}{ccccc|ccccc}
    \multicolumn{5}{c|}{\textbf{\askllm}} & \multicolumn{5}{c}{\textbf{Perplexity Filtering}} \\[2pt]
     Small & Base & Large & XL & XXL & Small & Base & Large & XL & XXL  \\
     \midrule
     90.79\% & 75.97\% & 58.89\% & 18.06\% & 3.0\% & 13.65\% & 16.88\% & 17.85\% & 14.36\% & 13.67\%
\end{tabular}

\vspace{0.3cm}
\hspace{0.5cm}\hrulefill\hspace{0.2cm} \floweroneleft\floweroneright \hspace{0.2cm}\hrulefill\hspace{0.5cm}
\vspace{0.3cm}

Search result for " For Sale "
We supply Germany made embalming powder in small quantities from 1 kg at affordable prices. We have white and pink 100\% hot and 98\% pink in stock. Call us on +27786893835 for details. EMBALMING..
EMBALMING POWDER CALL +27786893835 Hager Werken Embalming Compound Pink Powder call +27786893835 in General items from Germany Embalming compound in powder form both PINK and WHITE Radio active..
Sierra Residences Type B, Sg Ara near PISA, Factory,Air-port Sierra Residences (ID: 5695) ================== Monthly Rent: RM 1,000 BU: 1182 sq.ft. Newly Renovated/NOT Furnished - 3..
Very Strategic and Highly Potential LAND 9.7 Acres Converted Residential Land For Sale in Taman Melawati !!!!! Taman Melawati development land , Titile : Freehold, non bumi land. Status:..
I am a Certified Private Loan Lender, Do you need a Fast and Guarantee loan to pay your bills or start up a Business? I offer both local and international loan services to meet your financial needs..

...

Introducing our mining company to you for a very fruitful business transaction. we are a miners who have come together to upgrade our production through the introduction of modern technology and..
Commercial land for sale. Location near to Premium Outlet. Size = 32 acres Good land shape and very suitable for development. Selling price RM 60 per sf. Interested party kindly contact..
Keterangan : * Tanah yang rata dan sangat startegik untuk buat rumah kediaman/rumah rehat (homestay), atau untuk rumah penginapan sendirian/Percutian (vacation home) * Tanah lot tepi berdekatan..
Limited gated Semi D at Sri petaling,fully furnish with lift and move in condition.newly buit,modern,spacius and practical.Prime location for own stay,good gated security and easy access to few main..
Land for sale in MELAKA ! Price : RM 65 per sq fit (or roughly U\$D 17 per sq fit ) Size : 53000 sf Property type ï¼šfreehold housing land Location : Jalan Laksamana Cheng Ho,Â ..
\end{tbox} 

\begin{tbox}{BurntOrange}{Estimated Data-Quality (Percentile -- Higher is better)}
\centering
\begin{tabular}{ccccc|ccccc}
    \multicolumn{5}{c|}{\textbf{\askllm}} & \multicolumn{5}{c}{\textbf{Perplexity Filtering}} \\[2pt]
     Small & Base & Large & XL & XXL & Small & Base & Large & XL & XXL  \\
     \midrule
     94.72\% & 87.31\% & 78.07\% & 13.77\% & 6.51\% & 5.75\% & 9.63\% & 13.12\% & 17.51\% & 17.12\%
\end{tabular}

\vspace{0.3cm}
\hspace{0.5cm}\hrulefill\hspace{0.2cm} \floweroneleft\floweroneright \hspace{0.2cm}\hrulefill\hspace{0.5cm}
\vspace{0.3cm}

FIFA 20 CONFIRMED TRANSFERS SUMMER 2019 \& RUMOURS | w/ ALEX SANDRO BALE \& NEYMAR JR. TO BARCELONA!!
Top 10 Worst Transfers In Football History!
70 CONFIRMED TRANSFERS JANUARY 2019 ------------------------ Thank You For Watching --------------------------------- * Like + Subscribe * =================.
FIFA 20 | CONFIRMED TRANSFERS SUMMER 2019 \& RUMOURS | w ZIDANE COUTINHO \& RONALDO BACK TO R.MADRID!
REBUILDING REAL MADRID | DREAM TEAM LINEUP 2019-2020 | POTENTIAL TRANSFERS | w/ NEYMAR \& RONALDO!
FIFA 20 | CONFIRMED TRANSFERS SUMMER 2019 \& RUMOURS | w BALE FEKIR UMTITI \& NEYMAR £300M TO MADRID!
SUBSCRIBE http://bit.ly/SoccerAMSub Dean from 442oons is back with his list of the top 5 deals that were done on transfer deadline day. Do you agree with ..
FIFA 20 | CONFIRMED TRANSFERS SUMMER 2019 \& RUMOURS | w STERLING JAMES AUBAMEYANG \& GRIEZMANN!
SUBSCRIBE to FOOTBALL DAILY: http://bit.ly/fdsubscribe Last week we broke down our best signings of the summer so far. Now lets expose the worst!
Top 150 confirmed transfers / signings of the summer transfer window 2018 ft. Ronaldo, Mbappe, Mahrez, Vidal, Courtois... THANK FOR WATCHING!
FIFA 20 | CONFIRMED TRANSFERS SUMMER 2019 \& RUMOURS | w/ POGBA SANCHO THIAGO \& MESSI TO INTER!!
\end{tbox} 

\begin{tbox}{BurntOrange}{Estimated Data-Quality (Percentile -- Higher is better)}
\centering
\begin{tabular}{ccccc|ccccc}
    \multicolumn{5}{c|}{\textbf{\askllm}} & \multicolumn{5}{c}{\textbf{Perplexity Filtering}} \\[2pt]
     Small & Base & Large & XL & XXL & Small & Base & Large & XL & XXL  \\
     \midrule
     86.25\% & 69.2\% & 61.9\% & 46.57\% & 19.99\% & 76.61\% & 71.91\% & 94.86\% & 92.93\% & 94.99\%
\end{tabular}

\vspace{0.3cm}
\hspace{0.5cm}\hrulefill\hspace{0.2cm} \floweroneleft\floweroneright \hspace{0.2cm}\hrulefill\hspace{0.5cm}
\vspace{0.3cm}

Phone 1300 616 202 if you’re looking for a trustworthy, experienced and licensed Plumber Leopold. We know that getting plumbing repairs in Leopold can be a pain and you’ve got better things to do than look for a plumber. Clearwater Plumbing and Maintenance will save you from any unnecessary hassle and expense for a Plumber Leopold.
We make sure that wherever you need a Plumber Leopold, Clearwater Plumbing and Maintenance will assist you with your plumbing worries. Plumbing problems with your taps, toilets, gas, hot water and drains are painful enough. You don’t need the extra stress of finding a Plumber Leopold that you can trust. And what about all of those plumbers in Leopold who don’t clean up after themselves, leaving mud and materials all over your home? Our professional team are different!

...

Do you have hot water system repairs Leopold. We have highly experienced plumbers who know how to fix hot water systems Leopold. There can be many possible reasons why your hot water system Leopold is broken. Our Leopold plumbers are reliable, fast and know hot to diagnose problems. Our hot water system repairs Leopold plumbers are trained and qualified. To book an appointment, please call 1300 616 202. We will do our best to get a plumber to you in Leopold as soon as possible.
If you notice that there is water leaking from the bottom of your hot water system in Leopold, chances are the system is completely broken. In this scenario, you will need to replace your hot water system in Leopold. Our team of plumbers can help you to choose what hot water system you will need.
\end{tbox} 

\begin{tbox}{BurntOrange}{Estimated Data-Quality (Percentile -- Higher is better)}
\centering
\begin{tabular}{ccccc|ccccc}
    \multicolumn{5}{c|}{\textbf{\askllm}} & \multicolumn{5}{c}{\textbf{Perplexity Filtering}} \\[2pt]
     Small & Base & Large & XL & XXL & Small & Base & Large & XL & XXL  \\
     \midrule
     82.64\% & 75.2\% & 63.2\% & 29.51\% & 8.94\% & 78.34\% & 82.07\% & 91.01\% & 87.78\% & 88.02\%
\end{tabular}

\vspace{0.3cm}
\hspace{0.5cm}\hrulefill\hspace{0.2cm} \floweroneleft\floweroneright \hspace{0.2cm}\hrulefill\hspace{0.5cm}
\vspace{0.3cm}

You can now configure the minimum TLS protocol level for client connections and connections to other servers. Refer to the following page for more information: Advanced TLS.
You can now set an Integrated Capture Point (ICP) to stopped mode by changing the state of the corresponding configuration object to disabled; changing the state to enabled restarts the inbound cycle of the ICP.
You can now set the minimum TLS protocol level for the Web Service Capture Point by configuring the option <sec-protocol> in the section <settings> of the Capture Point object.

...

Support for the following databases. See the Supported Operating Environment: eServices page for more detailed information and a list of all supported databases.
No special procedure is required to upgrade to release 8.5.201.05.
Retrieved from "https://docs.genesys.com/Documentation:RN:mm-ixn-svr85rn:mm-ixn-svr8520105:8.5.x (2019-04-21 22:59:48)"
This page was last modified on November 8, 2018, at 08:48.
\end{tbox} 

\begin{tbox}{BurntOrange}{Estimated Data-Quality (Percentile -- Higher is better)}
\centering
\begin{tabular}{ccccc|ccccc}
    \multicolumn{5}{c|}{\textbf{\askllm}} & \multicolumn{5}{c}{\textbf{Perplexity Filtering}} \\[2pt]
     Small & Base & Large & XL & XXL & Small & Base & Large & XL & XXL  \\
     \midrule
     62.21\% & 54.71\% & 35.73\% & 22.64\% & 6.76\% & 64.82\% & 85.95\% & 94.65\% & 93.35\% & 85.29\%
\end{tabular}

\vspace{0.3cm}
\hspace{0.5cm}\hrulefill\hspace{0.2cm} \floweroneleft\floweroneright \hspace{0.2cm}\hrulefill\hspace{0.5cm}
\vspace{0.3cm}

are willing to provide you with perfect services and striding for Display Stand For Boutique , Display Stand for Boutique , Display Stand for Phone , Our product quality is one of the major concerns and has been produced to meet the customer's standards. "Customer services and relationship" is another important area which we understand good communication and relationships with our customers is the most significant power to run it as a long term business.
"We have quite a few great team customers very good at internet marketing, QC, and dealing with kinds of troublesome trouble while in the output approach for Display Stand For Boutique , Display Stand for Boutique , Display Stand for Phone , We set a strict quality control system. We've got return and exchange policy and you can exchange within 7 days after receive the wigs if it is in new station and we service repairing free for our solutions. You should feel free to contact us for further information and we are going to give you competitive price list then.
\end{tbox}


\end{document}